\documentclass{article}

\usepackage[preprint]{neurips_2026}

\usepackage[utf8]{inputenc} 
\usepackage[T1]{fontenc}    
\usepackage{hyperref}       
\usepackage{url}            
\usepackage{booktabs}       
\usepackage{amsfonts}       
\usepackage{nicefrac}       
\usepackage{microtype}      
\usepackage{xcolor}         
\usepackage{graphicx}
\usepackage{amsmath}
\usepackage{subcaption}
\usepackage{todonotes}
\usepackage{algorithm}
\usepackage{amsthm}
\usepackage{algorithmic}
\usepackage{bbm}
\usepackage{bm}
\usepackage{makecell}

\newtheorem{theorem}{Theorem}
\newtheorem{lemma}{Lemma}
\newtheorem{proposition}{Proposition}
\newtheorem{corollary}{Corollary}

\theoremstyle{definition}

\DeclareMathOperator*{\argmax}{argmax}
\DeclareMathOperator*{\argmin}{argmin}
\DeclareMathOperator*{\E}{\mathbb{E}}
\newcommand{\KL}{D_{\mathrm{KL}}}

\title{The Geometry of Learning to Avoid Interventions}

\author{%
  Ethan Pronovost\thanks{Correspondence: \texttt{ethan27@cs.washington.edu}} \hspace{0.2mm} \textsuperscript{1}, Khimya Khetarpal\textsuperscript{2,3}, Siddhartha Srinivasa\textsuperscript{1} \\
  \textsuperscript{1} Paul G. Allen School of Computer Science \& Engineering,  University of Washington \\
  \textsuperscript{2} Google DeepMind \\
  \textsuperscript{3} Mila \\
}

\begin{document}

\maketitle

\begin{abstract}


Human interventions are a common source of supervision in autonomous systems during deployment. Many existing approaches are based on avoiding interventions, yet the consequences of this objective are not well understood.
We develop a geometric perspective on intervention learning that characterizes intervention avoidance as constraining policies to a face of the occupancy measure polytope. This view reveals that the effectiveness of intervention learning depends on the \emph{informativeness} of the intervention strategy: highly informative interventions uniquely determine the solution, while weak interventions leave a large set of feasible policies, many of which are suboptimal.
Motivated by this under-specification, we define \emph{Robust Intervention Learning} (RIL) as the problem of learning policies that perform well under varying levels of intervention informativeness. From the geometric formulation, we derive \emph{Residual Intervention Fine-Tuning} (RIFT), which combines interventions with a prior policy to select among feasible solutions. We show that RIFT provides provable improvement over the prior and corresponds to solving a constrained reinforcement learning problem with an induced reward.
Empirically, RIFT yields consistent policy improvement across a range of intervention settings, particularly when interventions are sparse or weakly informative. These results highlight the importance of accounting for intervention informativeness and suggest a principled path toward robust learning from human feedback.

\end{abstract}

\section{Introduction}
\label{sec:introduction}

Human supervision of robotic deployments often takes the form of intervention: a human monitors the robotic system and intervenes when it behaves unacceptably \citep{hg-dagger2019}. The human's decision to intervene or not provides a direct signal on which behaviors are unacceptable \citep{eil_spencer_2020}.
However, unlike demonstrations or rewards, this intervention signal provides only partial information: it indicates that a behavior should not continue but does not specify what should be done instead. There can be many behaviors that avoid interventions and yet are sub-optimal.

\begin{figure}[t]
  \vskip 0.2in
  \centering
  \begin{minipage}[b]{0.45\textwidth}
    \centering
    \begin{subfigure}{\textwidth}
      \includegraphics[width=0.9\textwidth]{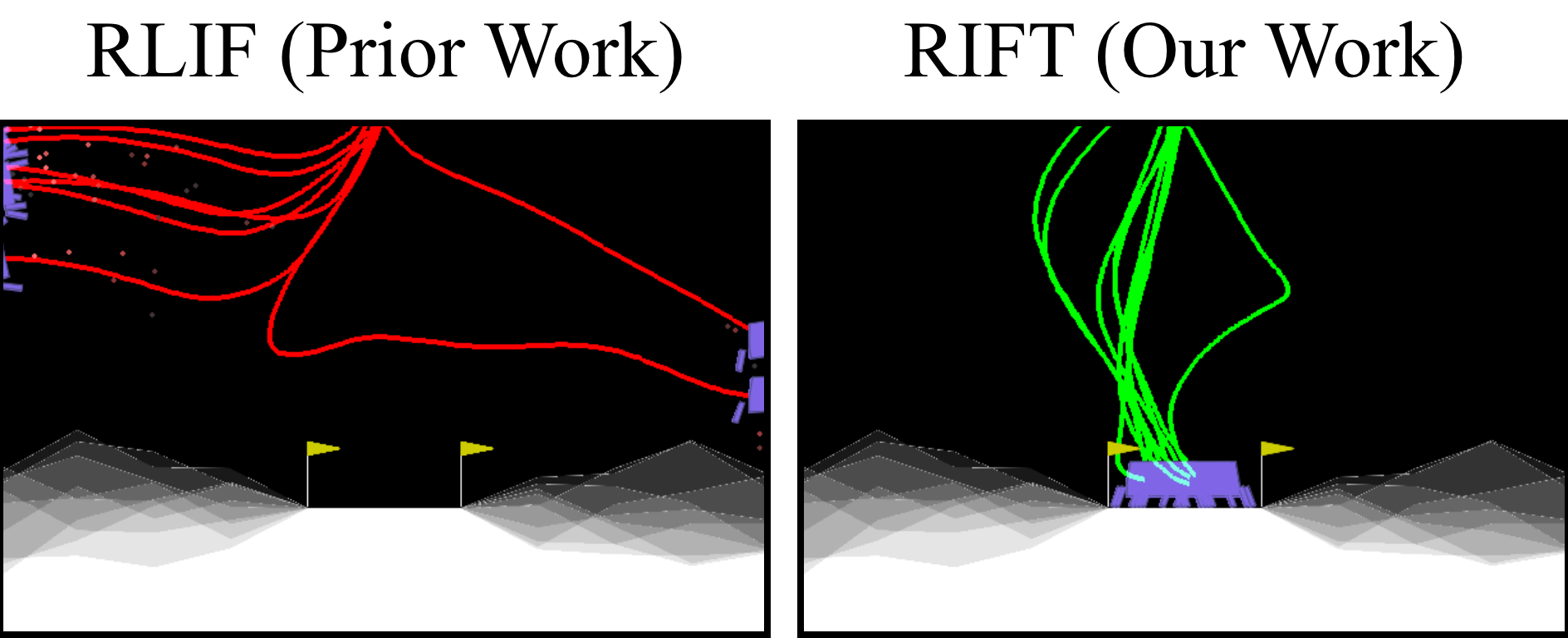}
      \caption{Example rollouts.}
      \label{fig:ll_example_rollouts}
    \end{subfigure}
    
    \begin{subfigure}{\textwidth}
      \centering
      \includegraphics[width=0.8\textwidth]{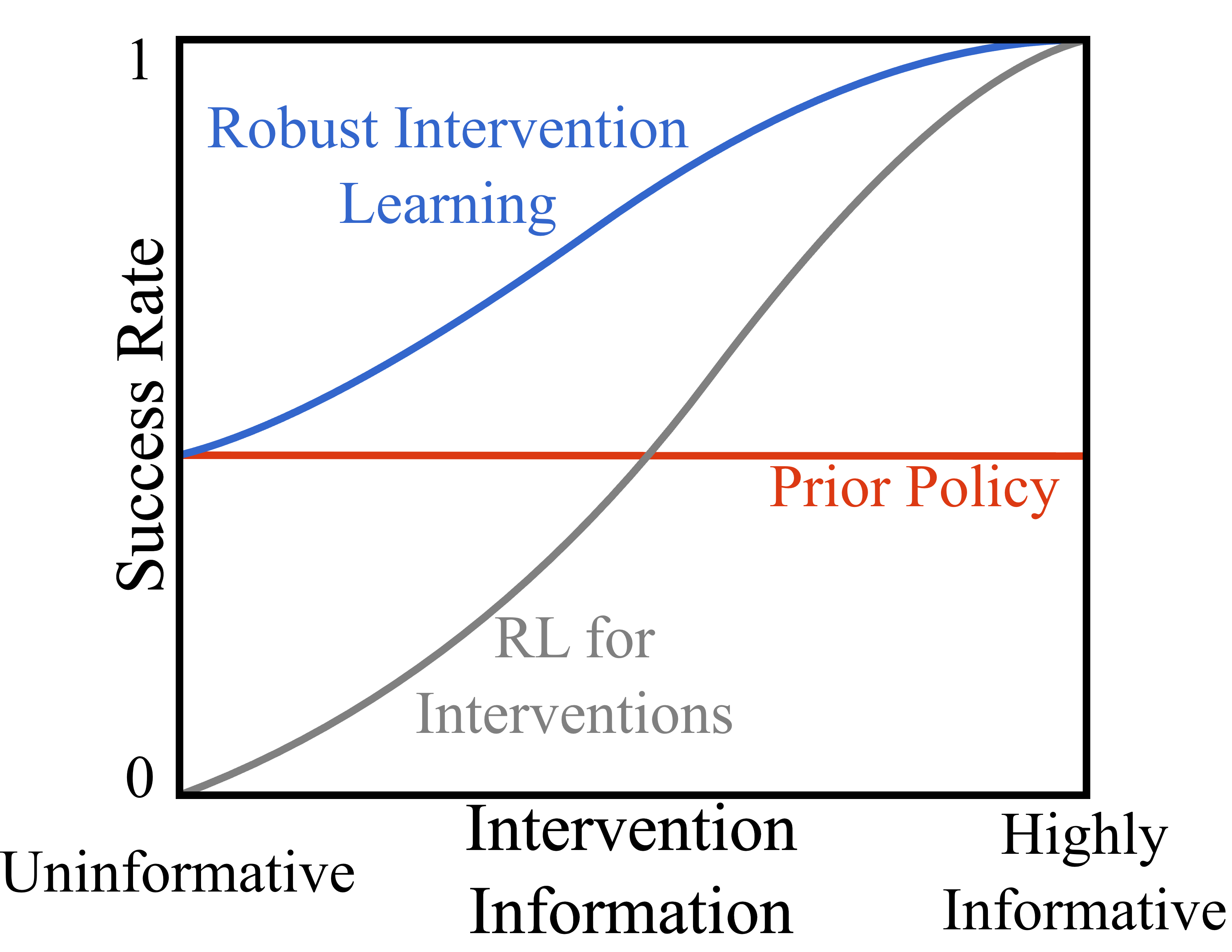}
      \caption{Robust intervention learning.}
      \label{fig:conceptual_ril}
    \end{subfigure}
  \end{minipage}
  \hfill
  \begin{minipage}[b]{0.5\textwidth}
    \centering
    \begin{subfigure}{\textwidth}
      \centering
      \includegraphics[width=0.95\textwidth]{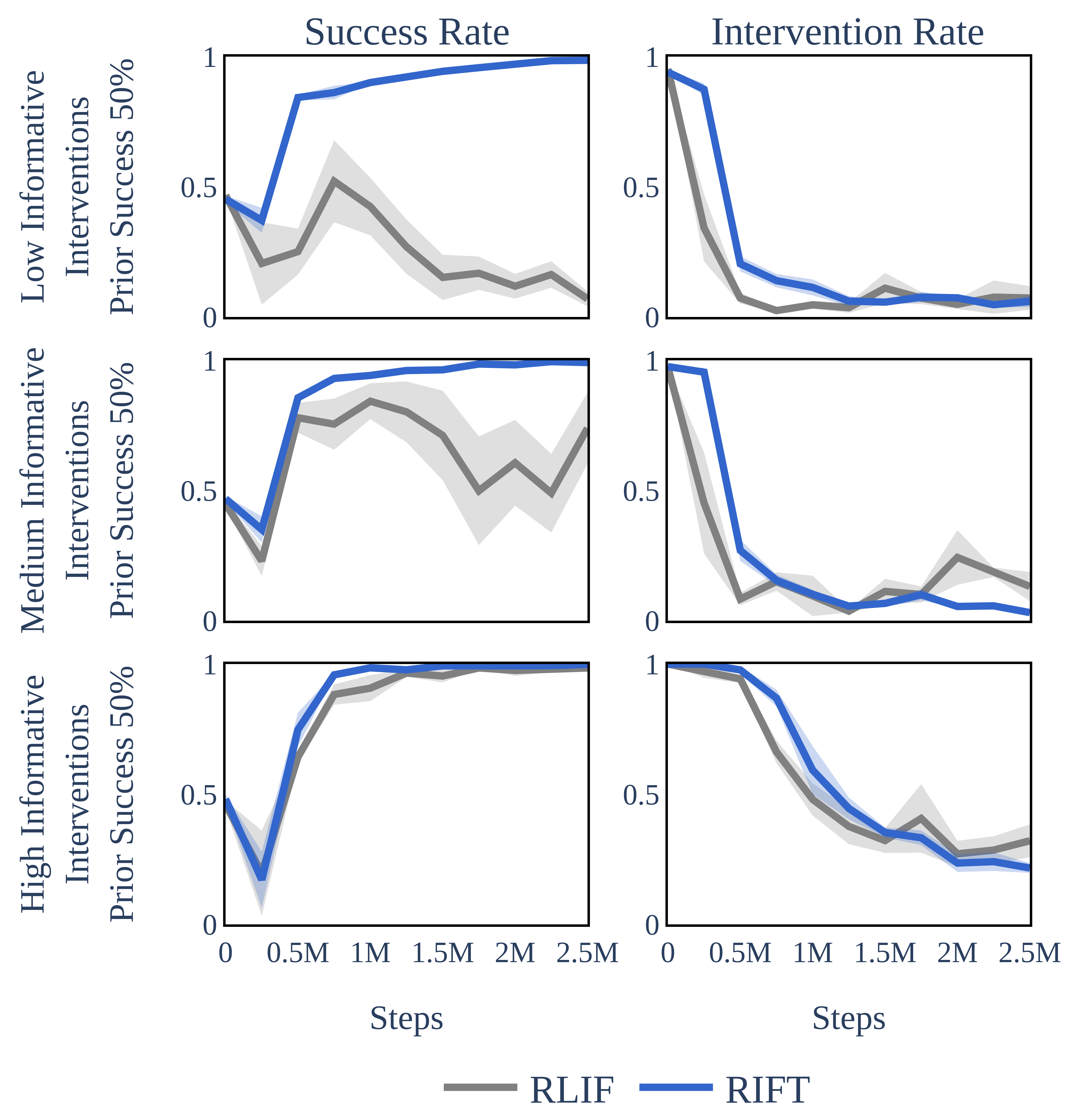}
      \caption{Performance across intervention strategies.}
      \label{fig:ll_result_simple}
    \end{subfigure}
  \end{minipage}
\caption{
\textbf{Intervention learning under varying levels of intervention informativeness.}
In Fig.~\ref{fig:ll_example_rollouts}, the intervention strategy prevents the lander from descending too quickly near the ground.
While this rules out clearly undesirable behavior, it does not fully specify the task: a policy can avoid interventions by either landing successfully or avoiding the ground altogether.
RLIF, which optimizes intervention avoidance alone, learns the latter behavior, while RIFT successfully completes the task.
Fig.~\ref{fig:conceptual_ril} illustrates the central perspective of this work: intervention strategies vary in informativeness, and robust intervention learning should improve policy performance across this spectrum. Fig.~\ref{fig:ll_result_simple} empirically reflects this structure.
RLIF performs well when interventions are highly informative, but degrades as the intervention signal becomes weaker.
In contrast, RIFT provides consistent policy improvement across a broad range of intervention strategies.}
\vspace{-9mm}
\end{figure}

A common approach is to learn policies that avoid interventions~\citep{trial_without_error_saunders_2017, eil_spencer_2020, rlif_luo_2024}.
However, a principled theoretical understanding of this approach is lacking compared to the strong theoretical foundations of DAgger-based methods~\citep{dagger_ross_2011, hg-dagger2019, korkmaz2025mile}. Notably, existing methods based on avoiding interventions implicitly rely on a strong assumption: that the intervention signal is sufficiently informative to uniquely determine the desired behavior. Under this assumption, minimizing interventions reliably leads to improving task performance.
However, this assumption is neither stated nor satisfied in many practical settings.
When interventions are less informative, merely avoiding interventions can lead to policies that satisfy the constraint without solving the task. This setting is illustrated in Fig.~\ref{fig:ll_example_rollouts}, where the intervention strategy prevents the lander from descending too quickly near the ground. While this constraint rules out clearly undesirable behavior, it does not uniquely determine the task. A policy can avoid interventions by landing successfully or avoiding the ground altogether.


In this work, we characterize intervention learning through a geometric lens.
We show that interventions define a linear constraint on the space of state-action visitation distributions, restricting policies to a face of the occupancy measure polytope.
The size and structure of this face capture the \emph{informativeness} of the intervention strategy.
This characterization formalizes when intervention avoidance uniquely determines a solution and when it leaves the problem under-specified.

This perspective suggests that intervention learning is fundamentally a problem of selecting a policy within a constrained set.
We formalize this as \emph{Robust Intervention Learning} (RIL): the problem of learning policies that improve task performance using intervention feedback without assuming that intervention avoidance uniquely specifies the solution.
Fig.~\ref{fig:conceptual_ril} illustrates this perspective.
RIL aims to achieve meaningful policy improvement across the spectrum of possible intervention strategies, rather than relying on a narrow assumption about intervention quality.

From this formulation, we derive \emph{Residual Intervention Fine-Tuning} (RIFT), which combines intervention signals with a prior policy to select among intervention-avoiding behaviors.
We show that RIFT yields provable improvement over the prior and corresponds to solving a constrained reinforcement learning problem with an induced reward.
The empirical results in Fig.~\ref{fig:ll_result_simple} reflect this structure.
Methods based solely on avoiding interventions perform well when the intervention signal is highly informative, but degrade when the signal is weaker.
In contrast, RIFT achieves consistent policy improvement across this spectrum.

\begin{figure}[ht!]
  \centering
  \begin{subfigure}[b]{0.32\textwidth}
    \centering
    \includegraphics[width=0.8\textwidth]{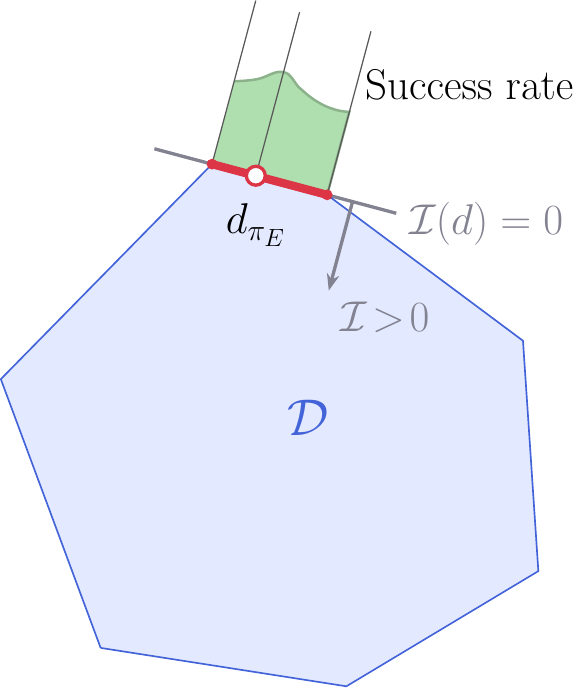}
    \caption{Good case}
    \label{fig:int-good-poly}
  \end{subfigure}
  \hfill
  \begin{subfigure}[b]{0.32\textwidth}
    \centering
    \includegraphics[width=\textwidth]{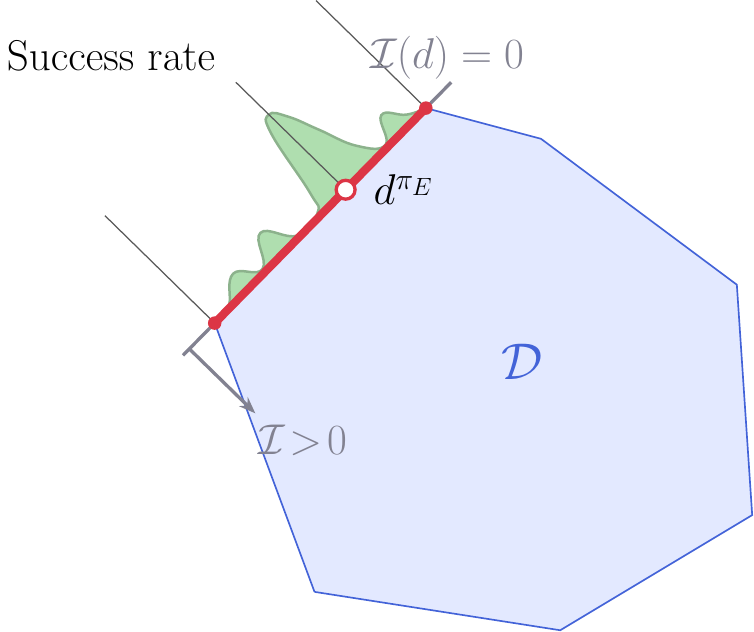}
    \caption{Bad case}
    \label{fig:int-bad-poly}
  \end{subfigure}
  \hfill
  \begin{subfigure}[b]{0.32\textwidth}
    \centering
    \includegraphics[width=\textwidth]{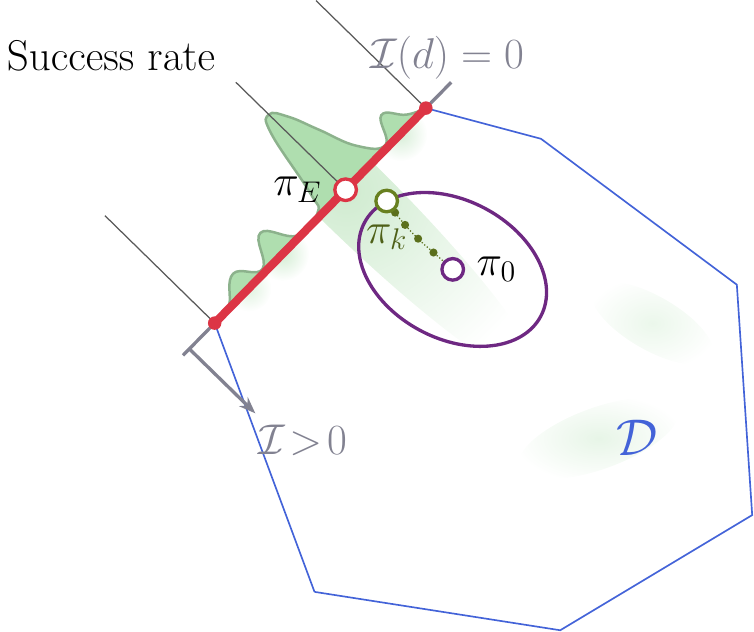}
    \caption{RIFT}
    \label{fig:rift-poly}
  \end{subfigure}
  \caption{\textbf{The geometry of intervention learning.} Avoiding interventions restricts policies to a face of the occupancy polytope.
  When in luck (Fig.~\ref{fig:int-good-poly}) this is a small face where all policies are quite successful.
  In general (Fig.~\ref{fig:int-bad-poly}) this is a large face which contains many bad policies that avoid interventions without solving the task.
  RIFT (Fig.~\ref{fig:rift-poly}) leverages complementary signals from the prior policy $\pi_0$ and interventions to push towards the expert policy.}
  \label{fig:polytopes}
  \vspace{-4mm}
\end{figure}

We evaluate our approach using \emph{emergency stop (e-stop) interventions} \citep{mo_states_ainsworth_2019}, where the supervisor provides only a binary intervention signal.
This setting is the simplest form of human supervision to deploy and isolates the core challenge of learning to avoid interventions.

Our contributions are threefold. First, we provide a geometric characterization of intervention learning that formalizes the role of intervention informativeness in Sec.~\ref{sec:geometry}. Second, we define Robust Intervention Learning and derive a simple algorithm RIFT with provable guarantees in Sec.~\ref{sec:rift}. Third, we empirically validate these predictions across a range of intervention settings in Sec.~\ref{sec:experiments}.

\section{Preliminaries}

\subsection{Markov Decision Process}
We consider a standard Markov decision process (MDP) $\mathcal M = \left( \mathcal S, \mathcal A, r, \mathcal T, \mu, \gamma \right)$ where $\mathcal S$ is the state space, $\mathcal A$ is the action space, $r : \mathcal S \times \mathcal A \to \mathbb R$ is the reward function, $\mathcal T : \mathcal S \times \mathcal A \to \Delta_{\mathcal S}$ is the transition function, $\mu \in \Delta_{\mathcal S}$ is the initial state distribution, and $\gamma \in [0, 1)$ is a temporal discount factor. We denote policies as $\pi : \mathcal S \to \Delta_{\mathcal A}$. We write $\pi(s)$ instead of $\pi(\cdot \mid s)$ when the meaning is clear. For a policy $\pi$, the infinite horizon state-action visitation distribution is defined as 
\begin{equation} \label{eqn:sa-visitation}
d^\pi (s, a) \doteq (1 - \gamma) \sum_{t=0}^\infty \gamma^t \Pr \left[ s_t=s \land a_t =a \mid \pi \right] 
\end{equation}
The infinite horizon state visitation is similarly defined, and can be expressed as $d^\pi (s) = \sum_a d^\pi (s,a)$.

\subsection{Problem Setting}
\label{sec:problem-setting}
In this work, we consider the setting where the true reward function $r^*$ is unknown. Instead, we have limited access to an \emph{interactive} expert who knows the true optimal policy $\pi^*$. Our goal is to provably and efficiently recover $\pi^*$ associated with $r^*$, through interactions with the expert under appropriate conditions on the intervention strategy. We focus on one of the simplest forms of expert interaction: an emergency stop the expert triggers when ``something bad happens''~ \citep{mo_states_ainsworth_2019}. 

We describe intervention strategies as functions $\phi : \mathcal S \times \mathcal A \to [0, 1]$ that define the probability of intervention in a given state-action pair.
We can restrict this strategy to be deterministic by limiting the output to be either 0 or 1.
While we use this model for theoretical analysis, our empirical experiments include non-Markovian intervention strategies (e.g. human reaction delay) as well.

We assume access to a prior policy $\pi_0$, consistent with \citet{rlif_luo_2024}. In practice, this prior policy is often obtained via offline training (e.g. behavior cloning). While we conduct experiments with a range of prior policies, the main setting to consider is when the prior policy has some useful task information but is still far from optimal (hence the need for intervention learning).

\subsection{Residual Q-Learning as a Fine-Tuning Primitive}
Residual Q-Learning (RQL) \citep{rql_li_2023} updates a policy $\pi_0$ optimal for the reward function $r_0$ to get a new policy $\pi_1$ optimal for reward function $r_1 = \omega r_0 + r_R$, where $\omega >0$ and $r_R$ is a residual reward. RQL introduces a modified Bellman equation for the residual soft-Q function $Q_R = Q_1 - \omega Q_0$. Defining $\widetilde Q_R (s, a) = \frac{1}{\alpha} Q_R(s, a) + \omega \log \pi_0(a \mid s)$,
we have that
\begin{equation}
\label{eqn:rql-residual-bellman}
Q_R(s, a) = r_R(s, a) + \gamma \E_{s'} \left[ \alpha \log \sum_{a' \in \mathcal A} \exp \widetilde Q_R(s', a') \right] 
\end{equation}
This residual Bellman equation only depends on $\pi_0$ and $r_R$: we do not need to know $r_0$ or $Q_0$ explicitly.
Hence, $\pi_0$ can come from imitation learning (e.g. behavior cloning) if the true reward function is unknown.
Given $Q_R$ satisfying Eqn.~\ref{eqn:rql-residual-bellman}, the new policy $\pi_1 (a \mid s) \propto \exp \widetilde Q_R(s, a)$.

\begin{figure}[t]
\begin{minipage}[t]{0.45\textwidth}
    \begin{algorithm}[H]
        \caption{Rollout with Emergency Stop}\label{alg:supervised-rollout}
        \begin{algorithmic}
        \REPEAT
          \STATE Select action $a$ from policy $\hat \pi(\cdot \mid s)$
          \STATE Advance state $s' \sim \mathcal T(\cdot \mid s,a)$
          \STATE Sample $e \sim \textrm{Bernoulli}(\phi(s, a))$
          \IF{E-stop $e$ is triggered} 
            \STATE \textbf{break} 
          \ENDIF
          \STATE Update $s \leftarrow s'$
        \UNTIL{Rollout is done.}
        \end{algorithmic}
    \end{algorithm}
\end{minipage}
\hfill
\begin{minipage}[t]{0.53\textwidth}
    \begin{algorithm}[H]
      \caption{Residual Intervention Fine-Tuning}
      \label{alg:ril}
      \begin{algorithmic}
        \REPEAT
        \STATE Initialize an empty dataset $\mathcal D$
        \REPEAT
        \STATE Rollout $\hat\pi$ with intervention using Alg \ref{alg:supervised-rollout}
        \STATE Update $\mathcal D$ with $\{(s_i, a_i, s_{i+1}, - e_i)\}_{i=0}^N$
        \UNTIL{Sufficient data collected.}
        \STATE Update policy $\hat\pi \leftarrow \text{RQL}(\hat\pi ,\mathcal D)$
        \UNTIL{Performance of $\hat\pi$ is acceptable.}
      \end{algorithmic}
    \end{algorithm}
\end{minipage}
\vspace{-0.6cm}
\end{figure}

\section{Geometry of Intervention Learning}
\label{sec:geometry}

In this section, we develop a geometric perspective of intervention learning based on state-action visitations.
In Sec.~\ref{sec:geometry-intro} we show how interventions constrain the policy to a face of the polytope of realizable state-action visitations.
The size and orientation of this face corresponds to how informative the intervention strategy is.
In Sec.~\ref{sec:geometry-opt-for-interventions}, we show how regularization acts as a selection mechanism to pick a specific point within this constraint face, and that regularization based on the prior policy $\pi_0$ offers provable intervention improvement.
Finally, in Sec.~\ref{sec:geometry-constrained-mdp}, we show how these regularized objective functions can be viewed as performing maximum-entropy RL over a constrained MDP where RLIF corresponds to optimizing the zero reward function and our proposed regularization corresponds to using $\log \pi_0$ as the reward function.
Intuitively, if $\pi_0$ captures useful task information within this constrained MDP, our proposed objective function will result in a better policy.

\subsection{Interventions as Geometric Constraints}
\label{sec:geometry-intro}
To understand the interaction between intervention avoidance and recovery imitation, we analyze the problem in occupancy space.
Let $\mathcal D \doteq \{ d^\pi : \pi \in \Pi \}$ denote the set of realizable state-action visitations.
Throughout the geometric analysis we assume that the policy class $\Pi$
contains all stationary stochastic policies.
Under this assumption the realizable occupancy set $\mathcal D$ is a convex and bounded polytope~\citep{altman2021constrained}.
When $\Pi$ is restricted (e.g., deterministic policies or function-approximated policies),
the realizable occupancy set need not be convex.
In that case the geometric arguments below should be interpreted
as describing the idealized occupancy geometry induced by the full stochastic policy class.
Define the intervention functional for an occupancy $d \in \mathcal D$ as
\begin{equation}
\label{eqn:intervention-functional}
\mathcal I(d) \doteq \E_{(s,a)\sim d}\left[\phi(s,a)\right].
\end{equation}

\begin{proposition}[Intervention Defines a Face]
If $\Pi$ contains all stochastic policies, then $\mathcal D$ is convex.
Assume $\phi \neq 0$. Since $\mathcal I(d)$ is linear in $d$, the minimizer set
\begin{equation}
\mathcal D_{\min} \doteq \argmin_{d\in\mathcal D} \mathcal I(d) 
\end{equation} 
is the intersection of $\mathcal D$
with a supporting hyperplane.
\end{proposition}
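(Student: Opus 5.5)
The plan is to work in occupancy space and use the Bellman-flow characterization of $\mathcal D$. First I will establish convexity. Every $d^\pi$ satisfies the linear flow constraints
\begin{equation*}
\sum_a d(s,a) = (1-\gamma)\mu(s) + \gamma \sum_{s',a'} \mathcal T(s \mid s',a')\, d(s',a'), \qquad d \ge 0 .
\end{equation*}
Conversely, any nonnegative $d$ satisfying these constraints is realized by the stationary stochastic policy $\pi(a\mid s) = d(s,a)/\sum_{a'} d(s,a')$, with an arbitrary choice where the denominator vanishes. Hence, when $\Pi$ contains all stationary stochastic policies, $\mathcal D$ is exactly this feasible set: an intersection of an affine subspace with the nonnegative orthant, so it is convex. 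Boundedness follows because each $d$ sums to one, so $\mathcal D$ is a compact polytope. This is the standard result cited from \citet{altman2021constrained}, and I would invoke it rather than reprove it in the infinite-state case. For finite $\mathcal S,\mathcal A$ the argument above is complete.

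Next, linearity: $\mathcal I(d) = \langle \phi, d\rangle = \sum_{s,a} \phi(s,a) d(s,a)$ is a linear functional. Since $\mathcal D$ is compact and nonempty, $\mathcal I$ attains its minimum $c^\star = \min_{d\in\mathcal D}\mathcal I(d)$. Define the hyperplane $H = \{d : \langle \phi, d\rangle = c^\star\}$. This is a genuine hyperplane precisely because $\phi \neq 0$, which is where that assumption enters. All of $\mathcal D$ lies in the closed half-space $\{\langle\phi,d\rangle \ge c^\star\}$, and $H \cap \mathcal D \ne \emptyset$, so $H$ is a supporting hyperplane. Finally, $\mathcal D_{\min} = \{d\in\mathcal D : \langle\phi,d\rangle = c^\star\} = \mathcal D\cap H$ by definition, and this set is therefore a face of $\mathcal D$.

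The main subtlety I expect is not the hyperplane argument, which is immediate, but two edge cases. If $\phi$ is nonzero yet constant on the affine hull of $\mathcal D$, then $H$ contains all of $\mathcal D$ and the face is the whole polytope. This is still a (trivial) supporting hyperplane, and I will note it explicitly since it corresponds to maximally uninformative interventions. The second edge case is infinite state spaces, where "polytope" and compactness require care. There I would state the result for finite MDPs, or appeal to weak-* compactness of the set of probability measures to guarantee the minimum is attained.
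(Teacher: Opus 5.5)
Your proof is correct and follows the same route as the paper. The paper likewise takes convexity and compactness of $\mathcal D$ from the occupancy-polytope result of \citet{altman2021constrained}, and then uses only that the linear functional $\mathcal I(d)=\langle \phi, d\rangle$ attains a minimum $c^\star$ on $\mathcal D$, so that $\mathcal D_{\min}=\mathcal D\cap\{d:\langle\phi,d\rangle=c^\star\}$, which is a genuine hyperplane because $\phi\neq 0$; your Bellman-flow description of $\mathcal D$ and your remark that the face can be all of $\mathcal D$ (e.g.\ when $\phi$ is constant) are sound elaborations of what the paper leaves implicit.
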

Henceforth we'll call $D_{\min}$ as the \emph{minimal-intervention face}. This formalizes intervention avoidance as a geometric constraint in occupancy space.
The strength of this constraint depends on the intervention strategy. If $\pi^*$ is a deterministic policy and an intervention occurs for any action that differs from $\pi^*$ (i.e. $\phi(s, a) = \mathbbm{1} \left[ a \neq \pi^*(s) \right]$), then $\mathcal D_\textrm{min} = \{ d^{\pi^*} \}$. On the other hand, if the intervention strategy only intervenes to avoid a specific bad state-action pair (i.e. $\phi(s, a) = \mathbbm{1} \left[ s = s_\textrm{bad} \land a = a_\textrm{bad} \right]$), then $\mathcal D_\textrm{min}$ contains a large number of policies.

The \emph{informativeness} of the intervention strategy is how tightly it constrains the policies to solve the task.
If $r^\star$ is the (unknown) true reward function, we can measure this as $\sup_{d, d' \in \mathcal D_\textrm{min}} \langle d - d', r^\star \rangle$.
When the minimal-intervention face is narrow in reward-relevant directions, simply being on the face is enough to ensure good task performance. When the face remains large, however, it is possible to be on the minimal-intervention face and still have poor task performance.

\subsection{Optimization for Intervention Learning}
\label{sec:geometry-opt-for-interventions}

This geometric perspective highlights the limitations of solely avoiding interventions: $\argmin_{d} \mathcal I(d)$ is an \emph{under-specified problem} for all but the most highly informative intervention strategies. 
We need some additional selection mechanism to identify an occupancy from $\mathcal D_\textrm{min}$.

A natural choice is to employ a continuously differentiable and strictly convex regularizer $\Psi : \mathcal D \to \mathbb R$ with regularization strength $\omega$ to define the objective function $\mathcal I(d) + \omega \Psi(d)$.
Since $\mathcal I(d)$ is linear and $\Psi(d)$ is strictly convex, the overall objective is strictly convex as well and admits a unique minimizer over $\mathcal D$. The strict convexity of the regularizer transforms intervention minimization from a constraint problem with a flat solution manifold into a well-posed optimization problem with a unique solution.
Define $d^\star_\Psi(\omega) \doteq \argmin_{d \in \mathcal D} \mathcal I(d) + \omega \Psi(d)$.
As shown in the following lemma, this mapping traces a continuous path through $\mathcal D$ that approaches $\mathcal D_\textrm{min}$.

\begin{lemma}[Path of Regularized Solutions]
\label{lem:path-of-reg-solutions}
Suppose $\Psi$ is strictly convex and continuously differentiable on $\mathcal D$. Then the mapping $d^\star_\Psi$ is continuous on $(0, \infty)$ and has the limit
\begin{equation}
\lim_{\omega \to 0^+} d^\star_\Psi (\omega) = \argmin_{d \in \mathcal D_\textrm{min}} \Psi(d)
\end{equation}
\end{lemma}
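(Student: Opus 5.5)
The proof is a standard parametric-optimization argument on the compact convex set $\mathcal D$. First we record the basic structure. $\mathcal D$ is a bounded polytope, hence compact and convex, by the fact cited from \citet{altman2021constrained}. $\mathcal I$ is linear, hence continuous. $\Psi$ is continuous and strictly convex, so for every $\omega>0$ the objective $F_\omega(d) \doteq \mathcal I(d) + \omega\Psi(d)$ is continuous and strictly convex on $\mathcal D$. It therefore attains its minimum at a unique point, and $d^\star_\Psi(\omega)$ is well defined. The same reasoning applies to $\Psi$ on the face $\mathcal D_{\min}$. This set is nonempty and compact, being the set of minimizers of a continuous function on a compact set. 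It is convex, being the intersection of $\mathcal D$ with a supporting hyperplane by the preceding proposition. So $\bar d \doteq \argmin_{d\in\mathcal D_{\min}}\Psi(d)$ exists and is unique.

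\textbf{Continuity on $(0,\infty)$.} We use a subsequence/uniqueness argument, which is Berge's maximum theorem specialized to a unique minimizer. Fix $\omega_0>0$ and a sequence $\omega_n\to\omega_0$. By compactness, any subsequence of $d_n \doteq d^\star_\Psi(\omega_n)$ has a further subsequence converging to some $\hat d\in\mathcal D$. For every $d\in\mathcal D$ we have $F_{\omega_n}(d_n)\le F_{\omega_n}(d)$. Since $(\omega,d)\mapsto F_\omega(d)$ is jointly continuous ($\Psi$ is bounded on compact $\mathcal D$), passing to the limit gives $F_{\omega_0}(\hat d)\le F_{\omega_0}(d)$. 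Uniqueness of the minimizer then forces $\hat d = d^\star_\Psi(\omega_0)$. Since every subsequence has a further subsequence with this same limit, the whole sequence converges, which proves continuity.

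\textbf{The limit $\omega\to 0^+$.} This is the main step. Let $\omega_n\downarrow 0$ and $d_n = d^\star_\Psi(\omega_n)$, and let $\hat d$ be any cluster point. We need two comparisons.

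\emph{(i) $\hat d\in\mathcal D_{\min}$.} Compare $d_n$ against any $d_{\min}\in\mathcal D_{\min}$. Optimality of $d_n$ gives $\mathcal I(d_n)+\omega_n\Psi(d_n)\le \mathcal I(d_{\min})+\omega_n\Psi(d_{\min})$. Because $\Psi$ is bounded on $\mathcal D$, this yields $\mathcal I(d_n)\le \min_{\mathcal D}\mathcal I + \omega_n C$ for a constant $C$. Letting $n\to\infty$ gives $\mathcal I(\hat d)=\min_{\mathcal D}\mathcal I$.

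\emph{(ii) $\hat d$ minimizes $\Psi$ on the face.} Compare $d_n$ against $\bar d$. Since $\mathcal I(\bar d)=\min_{\mathcal D}\mathcal I\le \mathcal I(d_n)$, optimality of $d_n$ gives $\omega_n\Psi(d_n)\le \omega_n\Psi(\bar d) + (\mathcal I(\bar d)-\mathcal I(d_n)) \le \omega_n\Psi(\bar d)$. Dividing by $\omega_n>0$ gives $\Psi(d_n)\le\Psi(\bar d)$. Taking the limit with continuity of $\Psi$ gives $\Psi(\hat d)\le \Psi(\bar d)$.

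Together, (i) and (ii) show that $\hat d$ is a minimizer of $\Psi$ over $\mathcal D_{\min}$, so by uniqueness $\hat d=\bar d$. Every sequence $\omega_n\to0^+$ has all cluster points equal to $\bar d$ inside a compact set, so $d^\star_\Psi(\omega)\to\bar d$.

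The main obstacle is step (ii). The key observation there is that the linear term can only help the comparison against a point of the face, so dividing by $\omega_n$ is harmless. Beyond that, the only things to check are that boundedness and continuity of $\Psi$ on all of $\mathcal D$, including its boundary, are available. Continuous differentiability is not actually needed beyond continuity.
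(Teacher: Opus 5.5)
Your proof is correct. It shares the paper's overall structure: continuity via a Berge-type argument, then a comparison against a point of $\mathcal D_{\min}$ using boundedness of $\Psi$ to force cluster points onto the face, then uniqueness from strict convexity.

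The real difference is your step (ii). The paper identifies the limit by noting that $\mathcal I$ is constant on $\mathcal D_{\min}$, so the objective restricted to the face reduces to $\Psi$. But $d^\star_\Psi(\omega)$ generally lies off the face, so this observation alone does not show that cluster points of the unrestricted minimizers minimize $\Psi$ over $\mathcal D_{\min}$. Your comparison against $\bar d$ supplies exactly the inequality that step needs: you use $\mathcal I(\bar d)\le\mathcal I(d_n)$ to discard the linear term and get $\Psi(d_n)\le\Psi(\bar d)$, then pass to the limit.

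You also prove continuity directly with the subsequence/uniqueness argument instead of citing Berge's maximum theorem. This costs a few lines but makes explicit the hypotheses the citation relies on: a fixed compact feasible set, joint continuity in $(\omega,d)$, and a unique minimizer.

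Your remark that only continuity of $\Psi$ is needed is worth keeping. The paper later applies this lemma to the regularizer $\Theta$, which is not differentiable at occupancies that put zero probability on some action in a visited state. Such points typically include the face $\mathcal D_{\min}$ itself. So the lemma's stated differentiability hypothesis would not actually cover that application, whereas your continuity-only version does.
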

The proof is given in Sec.~\ref{apdx:proof-path-of-reg-solutions}.
The next lemma shows that if this regularization is centered on the prior policy $\pi_0$, optimizing $\mathcal I(d) + \omega \Psi(d)$ achieves provable intervention improvement.

\begin{lemma}[Intervention Improvement]
\label{lem:intervention-improvement}
Assume $\Psi$ is a strictly convex regularizer with its minimum at $d^{\pi_0}$.
Let $\omega>0$, and define $d^\textrm{opt} \doteq \argmin_{d \in \mathcal D} \mathcal I(d) + \omega \Psi(d)$.
Then $\mathcal I(d^\textrm{opt}) \le \mathcal I(d^{\pi_0})$, with strict inequality unless $d^{\pi_0} \in \mathcal D_\textrm{min}$ already.
\end{lemma}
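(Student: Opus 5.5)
The plan is to compare the regularized objective at $d^\textrm{opt}$ and at $d^{\pi_0}$, and to use the fact that $d^{\pi_0}$ minimizes $\Psi$ to move the comparison onto $\mathcal I$ alone. Write $d_0 \doteq d^{\pi_0}$ and $F(d) \doteq \mathcal I(d) + \omega \Psi(d)$.

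\emph{Weak inequality.} Since $d^\textrm{opt}$ minimizes $F$ over $\mathcal D$ and $d_0 \in \mathcal D$,
\begin{equation*}
\mathcal I(d^\textrm{opt}) + \omega \Psi(d^\textrm{opt}) \le \mathcal I(d_0) + \omega \Psi(d_0).
\end{equation*}
Because $d_0$ minimizes $\Psi$, we have $\Psi(d_0) \le \Psi(d^\textrm{opt})$. Together with $\omega > 0$ this gives $\mathcal I(d^\textrm{opt}) \le \mathcal I(d_0)$.

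\emph{Equality forces $d^\textrm{opt} = d_0$.} Suppose $\mathcal I(d^\textrm{opt}) = \mathcal I(d_0)$. The display then yields $\Psi(d^\textrm{opt}) \le \Psi(d_0)$. Strict convexity makes the minimizer of $\Psi$ unique, so $d^\textrm{opt} = d_0$. In other words, $d_0$ itself minimizes $F$ over the convex set $\mathcal D$, and we must show this forces $d_0 \in \mathcal D_\textrm{min}$.

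\emph{Contradiction via a directional derivative.} Assume $d_0 \notin \mathcal D_\textrm{min}$, and pick $d_1 \in \mathcal D$ with $\mathcal I(d_1) < \mathcal I(d_0)$. By convexity of $\mathcal D$ (Proposition on the minimal-intervention face), the segment $d_t = d_0 + t(d_1 - d_0)$, $t \in [0,1]$, lies in $\mathcal D$. Using linearity of $\mathcal I$ and differentiability of $\Psi$,
\begin{equation*}
\frac{d}{dt} F(d_t)\Big|_{t=0^+} = \bigl(\mathcal I(d_1) - \mathcal I(d_0)\bigr) + \omega \langle \nabla \Psi(d_0), d_1 - d_0 \rangle .
\end{equation*}
If the second term vanishes, this derivative is strictly negative. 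Then $F(d_t) < F(d_0)$ for small $t > 0$, contradicting the minimality of $d_0$.

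\emph{Main obstacle.} The hard step is justifying $\langle \nabla \Psi(d_0), d_1 - d_0 \rangle = 0$. Minimality of $\Psi$ at $d_0$ only gives $\langle \nabla \Psi(d_0), d - d_0 \rangle \ge 0$ for feasible directions, which has the wrong sign for the argument above. If $d_0$ sits on the boundary of $\mathcal D$ with a nonzero gradient, strictness can genuinely fail.

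I would therefore read ``minimum at $d^{\pi_0}$'' as saying that $d_0$ is a stationary point of $\Psi$ relative to the affine hull of $\mathcal D$. This holds, for instance, for Bregman-type regularizers centered at $d_0$, such as $\|d - d_0\|^2$ or $\KL(d \,\|\, d_0)$, whose gradient vanishes at $d_0$. It also holds whenever $d_0$ lies in the relative interior of $\mathcal D$, which is the case for a fully supported stochastic $\pi_0$. Under this reading the gradient term vanishes and the proof closes. I would state this assumption explicitly.
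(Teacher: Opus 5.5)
Your proof has the same skeleton as the paper's. The weak inequality comes from comparing your $F=\mathcal I+\omega\Psi$ at $d^{\textrm{opt}}$ and at $d^{\pi_0}$. Strictness comes from perturbing along $d_\eta=(1-\eta)d^{\pi_0}+\eta d'$ toward a point with $\mathcal I(d')<\mathcal I(d^{\pi_0})$. The paper runs your equality step contrapositively: it shows $d^{\textrm{opt}}\neq d^{\pi_0}$, hence $\Psi(d^{\textrm{opt}})>\Psi(d^{\pi_0})$, hence strict decrease of $\mathcal I$.

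The two proofs differ in how the perturbation step is closed, and yours is the sound version. The paper only observes that $\Psi(d_\eta)\to\Psi(d^{\pi_0})$ by continuity and then concludes $F(d_\eta)<F(d^{\pi_0})$ for small $\eta$. But the gain $\eta\,\bigl(\mathcal I(d^{\pi_0})-\mathcal I(d')\bigr)$ also vanishes linearly in $\eta$. What is actually needed is $\Psi(d_\eta)-\Psi(d^{\pi_0})=o(\eta)$, which is exactly your vanishing directional derivative.

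You are right that strictness can fail without this, so the lemma as literally stated (with $\Psi$ minimized over $\mathcal D$) is false. Here is a counterexample.
\begin{itemize}
\item Take one state with a self-loop and two actions $a_1,a_2$, so $\mathcal D=\{(p,1-p):p\in[0,1]\}$ with $p=d(s,a_1)$.
\item Set $\phi(s,a_1)=1$ and $\phi(s,a_2)=0$, so $\mathcal D_{\min}=\{(0,1)\}$.
\item Let $\pi_0$ always play $a_1$, so $d^{\pi_0}=(1,0)\notin\mathcal D_{\min}$.
\item Let $\Psi(d)=\|d-(2,-1)\|^2$. On $\mathcal D$ this equals $2(2-p)^2$, which is uniquely minimized at $d^{\pi_0}$.
\end{itemize}
Then $F(p)=p+2\omega(2-p)^2$ satisfies $F'(p)=1-4\omega(2-p)\le 1-4\omega\le 0$ on $[0,1]$ whenever $\omega\ge 1/4$. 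Hence $d^{\textrm{opt}}=d^{\pi_0}$, and $\mathcal I$ does not decrease even though $d^{\pi_0}\notin\mathcal D_{\min}$.

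Stating your stationarity hypothesis explicitly is therefore necessary, not a weakness. It does hold in the case the paper cares about. For $\Psi=\Theta(\cdot\parallel d^{\pi_0})$ one computes $\partial\Theta/\partial d(s,a)=\log\bigl(\pi_d(a\mid s)/\pi_0(a\mid s)\bigr)$. This vanishes at $d=d^{\pi_0}$ on all reachable state-action pairs when $\pi_0$ has full support, consistent with your relative-interior remark.
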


The proof is given in Sec.~\ref{apdx:proof-intervention-improvement}.
RLIF \citep{rlif_luo_2024} uses maximum-entropy RL \citep{deep_energy_based_policies_haarnoja_2017}, and is equivalent to optimizing $\mathcal I(d) + \omega \mathcal H(d)$ where $\mathcal H(d) \doteq \E_{(s, a) \sim d} \left[  \log \frac{d(s,a)}{d(s)} \right] = \E_{s \sim d} \left[ - \mathcal H[\pi_d(s) ] \right]$ is an entropy regularizer on the policy $\pi_d$ corresponding to $d$.

\begin{corollary}
Optimizing $\mathcal I(d) + \omega \mathcal H(d)$ can cause the intervention rate to get worse if $\omega$ is too large.
\end{corollary}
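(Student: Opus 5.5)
The corollary is an existence statement, so I would prove it with an explicit example, using Lemma~\ref{lem:path-of-reg-solutions} to explain what goes wrong. The key contrast is with Lemma~\ref{lem:intervention-improvement}. That lemma guarantees $\mathcal I(d^\textrm{opt}) \le \mathcal I(d^{\pi_0})$ only when the minimum of the regularizer sits at $d^{\pi_0}$. The entropy regularizer $\mathcal H(d) = \E_{s\sim d}[-\mathcal H[\pi_d(s)]]$ is instead minimized by the uniform policy $\pi_U$, independent of $\pi_0$. So the plan is to show two things: as $\omega$ grows the regularized solution is pulled toward $d^{\pi_U}$, and in some MDP this point has a strictly larger intervention rate than $d^{\pi_0}$.

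First I would establish the large-$\omega$ limit. Dividing the objective by $\omega$ gives $\frac{1}{\omega}\mathcal I(d) + \mathcal H(d)$. Since $\mathcal I$ is bounded on $\mathcal D$ (because $\phi\in[0,1]$), the same compactness and continuity argument used in Lemma~\ref{lem:path-of-reg-solutions} gives
\[
d^\star_{\mathcal H}(\omega) \to \argmin_{d\in\mathcal D}\mathcal H(d) = d^{\pi_U} \quad \text{as } \omega\to\infty .
\]
Because $\mathcal I$ is continuous (indeed linear), it follows that $\mathcal I(d^\star_{\mathcal H}(\omega)) \to \mathcal I(d^{\pi_U})$.

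If one prefers to avoid the strict-convexity question for $\mathcal H$ as a function of $d$, I would work in a single-state MDP. There $d(s,a)=\pi(a)$, and $\mathcal H$ is just the negative entropy of $\pi$, which is strictly convex.

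Second, I would give the concrete instance: one state, two actions $a_1,a_2$, with $\phi(a_1)=0$ and $\phi(a_2)=1$. Take $\pi_0$ to put probability $1-\epsilon$ on $a_1$, so $\mathcal I(d^{\pi_0})=\epsilon<1/2$. RLIF initialized or compared at $\pi_0$ then has closed form
\[
\pi_\omega(a)\propto \exp\bigl(-\phi(a)/\omega\bigr),
\]
so $\mathcal I(\pi_\omega) = 1/(1+e^{1/\omega})$. This quantity exceeds $\epsilon$ whenever $\omega > 1/\log\bigl((1-\epsilon)/\epsilon\bigr)$, and it increases to $1/2$ as $\omega\to\infty$. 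This shows the intervention rate gets strictly worse than the prior's for large $\omega$.

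The main obstacle is only interpretive: the corollary must be read as "there exist MDPs and priors for which this happens." The explicit softmax computation makes that step routine. I would close by remarking that the failure is exactly the violated hypothesis of Lemma~\ref{lem:intervention-improvement}: the minimizer of the regularizer is at $d^{\pi_U}$, not at $d^{\pi_0}$.
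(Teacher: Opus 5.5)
Your proposal is correct, and it does more than the paper does. The paper gives no proof of this corollary. It only implies a contrast with Lemma~\ref{lem:intervention-improvement}: $\mathcal H$ is minimized at the uniform policy rather than at $d^{\pi_0}$, and the paper then remarks that regularization should involve $\pi_0$. That contrast alone shows only that the improvement guarantee no longer applies. A failed sufficient condition does not exhibit a failure.

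What actually establishes the existence claim is your one-state, two-action softmax computation. There $\mathcal I(\pi_\omega) = 1/(1+e^{1/\omega}) > \epsilon$ once $\omega > 1/\log\bigl((1-\epsilon)/\epsilon\bigr)$. Your large-$\omega$ limit adds a general criterion the paper never states: whenever $\mathcal I(d^{\pi_U}) > \mathcal I(d^{\pi_0})$, continuity forces the regularized solution to be worse than the prior for all sufficiently large $\omega$.

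One step you assert without justification is that $d^{\pi_U}$ is the unique minimizer of $\mathcal H$ over $\mathcal D$. It is true. The expected conditional entropy is at most $\log|\mathcal A|$, with equality iff $\pi_d$ is uniform on the support of $d$. Any such realizable $d$ equals $d^{\pi_U}$, because the two policies agree on every state that is ever reached. Since every limit point of any selection of minimizers minimizes $\mathcal H$, this uniqueness is all the limit argument needs. Strict convexity of $\mathcal H$ in $d$ is not required, so your single-state fallback is not even necessary for that step.
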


These findings suggest that regularization $\Psi$ should involve $\pi_0$ in order to achieve reliable policy improvement. In the next section we offer another perspective on why using $\pi_0$ in the objective function is beneficial when the intervention strategy provides useful but incomplete task information.

\subsection{Intervention-Constrained MDP}
\label{sec:geometry-constrained-mdp}

The preceding sections show that intervention learning reduces to selecting a visitation distribution from the minimal-intervention face $\mathcal D_\textrm{min}$.
To give another perspective on this selection mechanism, we first characterize the structure of $\mathcal D_{\min}$. 
Define sets for the states and actions that can appear in minimal-intervention visitation distributions:
\begin{equation}
\mathcal G(s) \doteq \left \{ a \in \mathcal A \mid \exists \, d \in \mathcal D_\textrm{min} \textrm{ such that } d(s, a) > 0 \right \} \qquad \mathcal S' \doteq \left \{ s \in \mathcal S \mid \mathcal G(s) \neq \emptyset \right \}
\end{equation}
$\mathcal G(s)$ is not merely the set of actions where $\phi(s,a) = 0$. An action may satisfy $\phi(s,a)=0$ and still fail to belong to $\mathcal G(s)$ if it leads to states from which an intervention is unavoidable.
The following lemma shows that $\mathcal D_\textrm{min}$ exactly corresponds to a constrained MDP with states $\mathcal S'$ and actions $\mathcal G(s)$.

\begin{lemma}[Intervention-Constrained MDP]
\label{lem:structure-dmin}
Assume an intervention-free policy exists. Then
\begin{equation}
\mathcal D_{\min}
=
\left\{
d^\pi \;\middle|\;
\pi(a\mid s)=0
\;\; \forall s\in\mathcal S',\; a\notin\mathcal G(s)
\right\}.
\end{equation}
\end{lemma}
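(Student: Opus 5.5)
The plan is to prove the two inclusions separately, after two reductions. Throughout I assume finite $\mathcal S$ and $\mathcal A$, as is implicit in calling $\mathcal D$ a polytope, and I use the bijection $d \mapsto \pi_d(a\mid s) = d(s,a)/d(s)$ on states with $d(s)>0$.

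\textbf{Reduction 1: $\mathcal D_{\min}$ is the zero-intervention set.} Since an intervention-free policy exists and $\phi \ge 0$, we have $\min_{d\in\mathcal D}\mathcal I(d)=0$. Hence
\[
\mathcal D_{\min}=\{d\in\mathcal D : \mathcal I(d)=0\}=\{d \in \mathcal D : \phi(s,a)=0 \text{ whenever } d(s,a)>0\}.
\]

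\textbf{Reduction 2: a single "maximal-support" witness.} For each $s\in\mathcal S'$ and $a\in\mathcal G(s)$, fix some $d_{s,a}\in\mathcal D_{\min}$ with $d_{s,a}(s,a)>0$. Let $\bar d$ be their uniform average. By convexity of $\mathcal D$ (the preceding Proposition), $\bar d\in\mathcal D$. By linearity of $\mathcal I$, $\mathcal I(\bar d)=0$, so $\bar d\in\mathcal D_{\min}$. Its support is exactly $\{(s,a): s\in\mathcal S',\, a\in\mathcal G(s)\}$. This gives three facts:
\begin{enumerate}
\item[(i)] $\phi(s,a)=0$ for every $a\in\mathcal G(s)$.
\item[(ii)] $\mu(s)>0$ implies $s\in\mathcal S'$, because $\bar d(s)\ge(1-\gamma)\mu(s)$.
\item[(iii)] If $s\in\mathcal S'$, $a\in\mathcal G(s)$ and $\mathcal T(s'\mid s,a)>0$, then $s'\in\mathcal S'$. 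This follows from the flow constraint $\bar d(s')\ge \gamma\sum_{s,a}\bar d(s,a)\mathcal T(s'\mid s,a)>0$.
\end{enumerate}

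\textbf{Inclusion $\supseteq$.} Let $\pi$ put zero mass outside $\mathcal G(s)$ for all $s\in\mathcal S'$. By induction on $t$, using (ii) for the base case and (iii) for the step, every state reached with positive probability at time $t$ lies in $\mathcal S'$. Every action taken there therefore lies in $\mathcal G(s)$. By (i), every pair in the support of $d^\pi$ has $\phi=0$. So $\mathcal I(d^\pi)=0$, and $d^\pi\in\mathcal D_{\min}$ by Reduction 1.

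\textbf{Inclusion $\subseteq$.} Let $d\in\mathcal D_{\min}$.
\begin{itemize}
\item On states with $d(s)>0$: here $s\in\mathcal S'$, and $\pi_d(a\mid s)>0$ forces $d(s,a)>0$, so $a\in\mathcal G(s)$ by definition.
\item On states $s\in\mathcal S'$ with $d(s)=0$: the policy $\pi_d$ is unconstrained there. Redefine $\pi$ at such states to be any distribution on the nonempty set $\mathcal G(s)$. Unvisited states do not affect the occupancy, so $d^{\pi}=d^{\pi_d}=d$, and $\pi$ satisfies the required support condition.
\end{itemize}

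\textbf{Main obstacle.} The delicate step is (iii), which makes $\mathcal S'$ closed under transitions taken with $\mathcal G$-actions. This is what rules out the subtlety flagged in the text: an action with $\phi(s,a)=0$ that nevertheless leads to unavoidable intervention. It relies on building the single witness $\bar d$ via convexity, rather than arguing pair by pair. A secondary technicality is the non-uniqueness of $\pi$ on unvisited states, which is handled by reading the set in the statement as a set of occupancies. With infinite spaces one would replace the finite average by a countable convex combination; I expect this to be routine but do not treat it here.
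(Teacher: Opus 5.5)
Your proof is correct and follows the paper's route. Like the paper, you reduce $\mathcal D_{\min}$ to the zero-intervention set, patch $\pi_d$ on unvisited states of $\mathcal S'$ for the $\subseteq$ direction, and use closure of $\mathcal S'$ under $\mathcal G$-actions for $\supseteq$; your induction on $t$ is a cleaner rendering of the paper's contradiction via a first transition leaving $\mathcal S'$. Your "main obstacle" remark is mistaken, though: the convexity witness $\bar d$ is unnecessary, because each of (i)--(iii) already follows from a single suitable $d\in\mathcal D_{\min}$ (e.g.\ one with $d(s,a)>0$ gives $d(s')\ge\gamma\, d(s,a)\,\mathcal T(s'\mid s,a)>0$, hence (iii)), which is exactly the pair-by-pair argument the paper uses.
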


The proof is given in Sec.~\ref{apdx:proof-structure-dmin}.
We next characterize what regularization does on this constrained MDP. For any reference policy $\pi_q$, we can define a strictly convex regularizer over $\mathcal D$:
\begin{equation}
\Theta(d \parallel d^{\pi_q})
\doteq
\E_{s\sim d}
\left[
\KL\!\left(\pi_d(\cdot\mid s)\,\|\,\pi_q(\cdot\mid s)\right)
\right].
\end{equation}

\begin{theorem}[Objectives on the Constrained MDP]
\label{thm:induced-objective}
Assume an intervention-free policy exists. Then
\begin{equation}
\lim_{\omega \to 0^+} \argmin_{d \in \mathcal D} \mathcal I(d) + \omega \Theta(d \parallel d^{\pi_q} )
\end{equation}
is equivalent to solving maximum-entropy reinforcement learning on the intervention-constrained MDP with reward $r_q(s,a)=\log \pi_q(a\mid s)$.
\end{theorem}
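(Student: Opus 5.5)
The argument has two steps. First, Lemma~\ref{lem:path-of-reg-solutions} turns the limit into a minimization over the minimal-intervention face. Then Lemma~\ref{lem:structure-dmin} identifies that face with the constrained MDP, and a short calculation rewrites the KL regularizer as a max-entropy objective with reward $\log\pi_q$.

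\textbf{Step 1 (reduce to the face).} The regularizer $\Theta(\cdot\parallel d^{\pi_q})$ is strictly convex on $\mathcal D$, as stated just before the theorem. If it is also continuously differentiable there, Lemma~\ref{lem:path-of-reg-solutions} gives
\[
\lim_{\omega\to0^+} d^\star_\Theta(\omega)=\argmin_{d\in\mathcal D_{\min}}\Theta(d\parallel d^{\pi_q}).
\]
Since an intervention-free policy exists, $\mathcal D_{\min}$ is exactly the set of occupancies with $\mathcal I(d)=0$. By Lemma~\ref{lem:structure-dmin}, it is the set of $d^\pi$ with $\pi(\cdot\mid s)$ supported on $\mathcal G(s)$ for every $s\in\mathcal S'$.

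Two details need checking here. Every trajectory of such a $\pi$ stays in $\mathcal S'$; this holds because an action in $\mathcal G(s)$ can only lead to states visited by some minimal-intervention occupancy. Also, Lemma~\ref{lem:structure-dmin} lets us identify policies on $\mathcal S'$ restricted to $\mathcal G$ with policies of the constrained MDP $(\mathcal S',\mathcal G,\mathcal T,\mu,\gamma)$, and their occupancies agree.

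\textbf{Step 2 (rewrite the objective).} For $d=d^\pi$ in $\mathcal D_{\min}$, expand the KL term:
\[
\Theta(d\parallel d^{\pi_q})=\E_{s\sim d}\!\left[\sum_a \pi(a\mid s)\log\pi(a\mid s)-\sum_a\pi(a\mid s)\log\pi_q(a\mid s)\right]
=-\E_{(s,a)\sim d}[\log\pi_q(a\mid s)]-\E_{s\sim d}\big[\mathcal H[\pi(s)]\big].
\]
Using $\E_{(s,a)\sim d^\pi}[f]=(1-\gamma)\E_\pi\big[\sum_t\gamma^t f(s_t,a_t)\big]$, this becomes
\[
-\Theta=(1-\gamma)\,\E_\pi\!\left[\sum_t\gamma^t\big(r_q(s_t,a_t)+\mathcal H[\pi(s_t)]\big)\right],\qquad r_q(s,a)=\log\pi_q(a\mid s).
\]
So minimizing $\Theta$ over $\mathcal D_{\min}$ is the same as maximizing the max-entropy return (temperature $1$) with reward $r_q$ over policies of the intervention-constrained MDP. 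The factor $(1-\gamma)$ does not change the argmax. Uniqueness of the strictly convex minimizer means the max-entropy optimal occupancy is the same point, so the limit equals the solution of the constrained max-entropy problem.

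\textbf{Main obstacle.} The hard part is the regularity hypotheses of Lemma~\ref{lem:path-of-reg-solutions}.

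The first issue is that $\Theta$ is not differentiable, and can even be infinite, on the boundary of $\mathcal D$. This happens where $\pi_q(a\mid s)=0$ but $\pi_d(a\mid s)>0$, and derivatives blow up where $d(s,a)\to0$. My first attempt would be to assume $\pi_q$ has full support, so $\Theta$ is finite and continuous on all of $\mathcal D$. I would then re-derive the limit directly, since the original proof likely needs only lower semicontinuity plus compactness:
\begin{itemize}
\item take any cluster point $\bar d$ of $d^\star(\omega)$;
\item note $\mathcal I(d^\star(\omega))\le \mathcal I(d)+\omega(\Theta(d)-\Theta(d^\star(\omega)))$, which forces $\bar d\in\mathcal D_{\min}$;
\item for the comparison with $d\in\mathcal D_{\min}$, use $\mathcal I(d^\star(\omega))\ge \mathcal I(d)$ to get $\Theta(d^\star(\omega))\le\Theta(d)$, then pass to the limit.
\end{itemize}

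The second issue is that strict convexity of $\Theta$ in $d$ holds only on state-visited coordinates. Policies at unvisited states are irrelevant, so the argument should be phrased in occupancy space. There uniqueness is of the occupancy, and it matches the uniqueness of the max-entropy optimal occupancy.
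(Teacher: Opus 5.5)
Your proposal is correct and follows the paper's proof. You use Lemma~\ref{lem:path-of-reg-solutions} to reduce the limit to $\argmin_{d\in\mathcal D_{\min}}\Theta(d\parallel d^{\pi_q})$, identify $\mathcal D_{\min}$ with the constrained MDP via Lemma~\ref{lem:structure-dmin}, and expand $\Theta$ as $-\E_{(s,a)\sim d}[\log\pi_q(a\mid s)]-\E_{s\sim d}[\mathcal H(\pi_d(\cdot\mid s))]$. Your direct cluster-point argument (full support, then $\mathcal I(d^\star(\omega))\ge\mathcal I(d)$ for $d\in\mathcal D_{\min}$ giving $\Theta(d^\star(\omega))\le\Theta(d)$, plus lower semicontinuity) is extra care the paper omits: it handles the fact that $\Theta$ is not $C^1$ on the boundary of $\mathcal D$, and it rigorously supplies the limit-point step that the lemma's own proof only sketches.
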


The proof is given in Sec.~\ref{apdx:proof-induced-objective}.
The regularizer determines the objective after intervention constraints are enforced. Entropy regularization corresponds to $\Theta (d \parallel d^{\pi_\textrm{unif}})$ and therefore induces a zero reward function. Reverse KL centered at a prior policy $\pi_0$ induces reward $\log \pi_0(a\mid s)$.

\begin{corollary}[RLIF on the Constrained MDP]
Assume there exists an intervention-free policy. Then the limit as $\omega \to 0^+$ of $\argmin_{d \in \mathcal D} \mathcal I(d) + \omega \mathcal H(d)$ is equivalent to performing maximum-entropy reinforcement learning on the intervention-constrained MDP with zero reward.
\end{corollary}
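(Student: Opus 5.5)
The corollary follows from Theorem~\ref{thm:induced-objective} once we identify $\mathcal H(d)$ with $\Theta(d \parallel d^{\pi_\textrm{unif}})$ up to an additive constant. We will apply the theorem with $\pi_q = \pi_\textrm{unif}$ and then show that the induced reward is constant, so it can be replaced by the zero reward.

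First, let $\pi_\textrm{unif}(a\mid s) = 1/|\mathcal A|$. For any policy $\pi_d$ we have $\KL(\pi_d(\cdot\mid s)\,\|\,\pi_\textrm{unif}) = -\mathcal H[\pi_d(s)] + \log|\mathcal A|$. Taking the expectation over $s\sim d$ and using $\sum_s d(s) = 1$ gives
\begin{equation*}
\Theta(d\parallel d^{\pi_\textrm{unif}}) = \mathcal H(d) + \log|\mathcal A| .
\end{equation*}
Therefore, for every $\omega>0$, the objectives $\mathcal I(d)+\omega\mathcal H(d)$ and $\mathcal I(d)+\omega\Theta(d\parallel d^{\pi_\textrm{unif}})$ differ by the constant $\omega\log|\mathcal A|$. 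They have the same argmin over $\mathcal D$, and hence the same limit as $\omega\to 0^+$. That limit exists and is a single point by Lemma~\ref{lem:path-of-reg-solutions}, since $\mathcal H$ is strictly convex and differentiable on the relevant domain, as the paper asserts for $\Theta$.

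Second, Theorem~\ref{thm:induced-objective}, using the assumption that an intervention-free policy exists, says this limit is the solution of maximum-entropy RL on the intervention-constrained MDP with reward $r_q(s,a) = \log\pi_\textrm{unif}(a\mid s) = -\log|\mathcal A|$. This reward is constant. Its discounted return under any policy of the constrained MDP is $-\log|\mathcal A|/(1-\gamma)$, since occupancies are normalized. The max-entropy objective with reward $r_q$ therefore differs from the one with zero reward by a policy-independent constant, and the two problems have the same optimizer.

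The only step needing care is the constant-shift argument on the constrained MDP. The action sets $\mathcal G(s)$ are restricted there, so one might worry that a constant reward should be normalized by $|\mathcal G(s)|$ rather than $|\mathcal A|$. This does not matter: the reward is $-\log|\mathcal A|$ at every $(s,a)$ regardless of which actions are allowed. The shift also holds exactly in the soft Bellman equation, since adding a constant $c$ to $r$ shifts $Q$ by $c/(1-\gamma)$ and leaves the softmax policy unchanged. Either the occupancy argument or this Bellman argument suffices, and with it the corollary follows.
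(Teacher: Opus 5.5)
Your proposal is correct and follows the paper's own route: the paper gets the corollary by observing that $\mathcal H(d)$ is $\Theta(d\parallel d^{\pi_\textrm{unif}})$, since the two differ only by the constant $\log|\mathcal A|$, and then applying Theorem~\ref{thm:induced-objective}, whose induced reward $\log\pi_\textrm{unif}(a\mid s)=-\log|\mathcal A|$ is constant and hence equivalent to zero reward. You spell out the two constant-shift steps that the paper leaves implicit, including the point that restricting actions to $\mathcal G(s)$ does not affect the shift.
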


The interventions define the constrained MDP, and regularization induces an objective on this reduced domain. The quality of the selected policy depends on whether that induced objective is aligned with the task structure that remains after interventions have ruled out inadmissible behavior.
We argue that for most practical intervention settings, $\log \pi_0$ is a better induced objective than the zero reward function.
In Sec.~\ref{sec:experiments}, we show that this holds empirically across a wide range of settings.
The role of regularization as an induced reward function is further characterized in Sec.~\ref{apdx:additional-geometry-proofs}.

In this work we consider the original and most general version of intervention learning in which the true reward is entirely unknown.
However, from the results in this section it is clear how additional task information can be incorporated into the objective function as additional reward terms.

\section{Residual Intervention Fine-Tuning}
\label{sec:rift}

From the previous section, using $\mathcal I(d) + \omega \Theta (d \parallel d^{\pi_0})$ as the objective function offers both provable policy improvement and a better reward function in the intervention-constrained MDP. In this section we propose a practical algorithm for optimizing the aforementioned objective.

\subsection{Residual Fine-Tuning for Interventions}
\label{sec:residual-fine-tuning}

The reverse-KL objective over state-action visitations is equivalent to maximizing
\begin{equation}
\label{eqn:j-int-objective}
J_\textrm{INT}(\pi) = \E_{s_t,a_t} \left[ \sum_{t=0}^\infty \gamma^t \Bigl( -\phi(s_t, a_t) - \omega \KL \left[ \pi(s_t) \parallel \pi_0 (s_t) \right] \Bigr) \right]
\end{equation}
This is equivalent to performing RL fine-tuning \citep{rafailov2023dpo} on $\pi_0$ with reward $-\phi$.
By casting intervention learning as a fine-tuning problem, we can leverage existing methods for fine-tuning.
The following lemma connects Equation~\ref{eqn:j-int-objective} with Residual Q-Learning \citep{rql_li_2023}.

\begin{lemma}
\label{lem:rql-equivalence}
Let $\pi_0$ be a policy with full support over actions.
Set the entropy coefficient $\alpha = \omega$.
Then 
\begin{equation}
\textrm{RQL}(\pi_0, -\phi) = \argmax_{\pi} J_\textrm{INT}(\pi)
\end{equation}
\end{lemma}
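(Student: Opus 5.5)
We plan to show that both sides are characterized by the same soft Bellman fixed point, and then to invoke uniqueness of that fixed point. We begin with the right-hand side. We expand the per-step KL term as $\KL[\pi(s)\parallel\pi_0(s)] = \E_{a\sim\pi(s)}[\log\pi(a\mid s) - \log\pi_0(a\mid s)]$. This rewrites $J_\textrm{INT}$ as a maximum-entropy RL objective with entropy coefficient $\omega$ and reward
\begin{equation*}
\tilde r(s,a) = -\phi(s,a) + \omega\log\pi_0(a\mid s).
\end{equation*}
The reward is finite because $\pi_0$ has full support. Standard soft-RL theory \citep{deep_energy_based_policies_haarnoja_2017} then says the maximizer is $\pi^\star(a\mid s)\propto\exp(Q^\star(s,a)/\omega)$, where $Q^\star$ is the unique fixed point of the soft Bellman operator
\begin{equation*}
Q^\star(s,a) = \tilde r(s,a) + \gamma\E_{s'}\Bigl[\omega\log\sum_{a'}\exp\bigl(Q^\star(s',a')/\omega\bigr)\Bigr].
\end{equation*}
Uniqueness follows because this operator is a $\gamma$-contraction in the sup norm.

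Next we handle the left-hand side. By definition, $\textrm{RQL}(\pi_0,-\phi)$ returns $\pi_1\propto\exp\widetilde Q_R$, where $Q_R$ solves Eqn.~\ref{eqn:rql-residual-bellman} with $r_R=-\phi$ and $\widetilde Q_R = Q_R/\alpha + \omega\log\pi_0$. Setting $\alpha=\omega$, we define the candidate $Q := Q_R + \omega^2\log\pi_0$, so that $Q/\omega = \widetilde Q_R$. Substituting into Eqn.~\ref{eqn:rql-residual-bellman} should give a Bellman equation for $Q$ matching the one above, with $\pi_1\propto\exp(Q/\omega)=\pi^\star$.

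We expect this step to be the main obstacle: reconciling the scaling between $\alpha$ and $\omega$ in the paper's definition of $\widetilde Q_R$. The target reward carries $\omega\log\pi_0$, whereas the naive substitution produces $\alpha\omega\log\pi_0=\omega^2\log\pi_0$. The first thing we would try is to re-derive RQL from scratch in our setting. Write the fine-tuned reward as $r_1 = r_R + \omega\,r_0$, where $\pi_0$ is soft-optimal for $r_0$ with temperature $\alpha$, so that $\alpha\log\pi_0 = Q_0 - V_0$. Then check directly that $Q_1 - \omega Q_0$ satisfies the residual equation, with the log-sum-exp argument $(Q_R+\omega Q_0)/\alpha$ collapsing to $Q_R/\alpha + \omega\log\pi_0$ up to state-dependent terms. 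We will verify how the $V_0$ terms telescope through the transition expectation; this is the delicate part.

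Equivalently, RQL maximizes the soft-RL objective with reward $r_R + \alpha\omega\log\pi_0$ at temperature $\alpha$. Dividing the whole objective by $\omega$ is an affine rescaling and leaves the argmax unchanged. With $\alpha=\omega$ under the paper's normalization convention, this matches the maximum-entropy form of $J_\textrm{INT}$ up to that rescaling. If the constants fail to align exactly, we will state the identification with the appropriate reparameterization of $\omega$, since the argmax is invariant to positive scaling of the full objective. Finally, we conclude by uniqueness of the soft Bellman fixed point: the two policies coincide.
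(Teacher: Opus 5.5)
Your overall plan is sound. Rewriting $J_{\textrm{INT}}$ as maximum-entropy RL with reward $\tilde r = -\phi + \omega\log\pi_0$ at temperature $\omega$ is exactly the paper's first step. Matching the RQL output to the unique soft-Bellman fixed point is a legitimate alternative to the paper's second step. The paper instead uses the defining property of RQL: it returns the max-ent optimum for $r_0 + r_R$ for any $r_0$ under which $\pi_0$ is soft-optimal. It then checks that $r_0 = \alpha\log\pi_0$ qualifies, since $J_{\textrm{MaxEnt}}(\pi'\mid\alpha\log\pi_0) = -\alpha\,\E\bigl[\sum_t\gamma^t\KL[\pi'(s_t)\parallel\pi_0(s_t)]\bigr]$ is maximized at $\pi'=\pi_0$.

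The gap is in the step you flag as the main obstacle, and your proposed fix does not work. If the weight on $r_0$ in RQL is the same $\omega$ as the KL coefficient, RQL optimizes the per-step reward $-\phi + \alpha\omega\log\pi_0$ with entropy bonus $\alpha\,\mathcal H[\pi(s)]$. With $\alpha=\omega$ this is $-\phi + \omega^2\log\pi_0 + \omega\,\mathcal H$, whereas $J_{\textrm{INT}}$ is $-\phi + \omega\log\pi_0 + \omega\,\mathcal H$. Dividing the whole objective by $\omega$ cannot reconcile these. It rescales $-\phi$ as well, so the ratio between the intervention penalty and the prior term is unchanged. The two maximizers differ in general unless $\omega=1$: with $\phi\equiv 0$, $J_{\textrm{INT}}$ returns $\pi_0$, while the other objective generally does not. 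Your fallback of ``an appropriate reparameterization'' is therefore not a bookkeeping detail; it is the missing content of the proof.

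What you need is to fix the weight on the prior reward in $\textrm{RQL}(\pi_0,-\phi)$ to be $1$, which is what the paper's proof does. It takes $r_1 = r_0 - \phi$ with $r_0 = \alpha\log\pi_0$, so the KL coefficient enters only through the temperature $\alpha=\omega$. The $\omega$ in the RQL preliminaries is a separate weight that must equal $1$ here; equivalently, the coefficient of $\log\pi_0$ inside $\widetilde Q_R$ must be $1$. Then $\widetilde Q_R = Q_R/\omega + \log\pi_0$, and the substitution $Q \doteq Q_R + \omega\log\pi_0$ yields
\[
Q(s,a) = -\phi(s,a) + \omega\log\pi_0(a\mid s) + \gamma\,\E_{s'}\Bigl[\omega\log\sum_{a'}\exp\bigl(Q(s',a')/\omega\bigr)\Bigr]
\]
exactly, with $\pi_1\propto\exp(Q/\omega)$. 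Your contraction and uniqueness argument then finishes the proof. Note also that with $r_0=\alpha\log\pi_0$ one has $Q_0=\alpha\log\pi_0$ and $V_0\equiv 0$, so the $V_0$ telescoping you anticipate never arises.
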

The proof is in Sec.~\ref{apdx:proof-rql-equivalence}.
We can view $J_\textrm{INT}$ as adding a residual reward term $-\phi$ to the implicit reward function of $\pi_0$ to get the fine-tuned policy $\hat \pi = \textrm{RQL}(\pi_0, - \phi)$.
Using an off-policy algorithm allows us to re-use previously collected intervention data.
Human intervention data is typically expensive to collect, so being able to reuse old data is a major practical benefit.

Our ensuing Algorithm~\ref{alg:ril} is deceptively simple. A dataset of transitions is obtained by performing rollouts with interventions according to Algorithm~\ref{alg:supervised-rollout}.
These transitions are labeled with a reward that is -1 if an intervention occurred and 0 otherwise.
The expected reward in a given state-action pair is thus $-\phi(s, a)$. The policy $\hat \pi$ is updated via RQL using these transitions. 
Different stopping criteria for the two loops of Algorithm~\ref{alg:ril} could be used.
For example, the inner loop could be executed for a fixed number of rollouts, and the outer loop could be executed until the rate of interventions falls below a threshold.
If the expert executes actions after intervening, these transitions can be added to the dataset as further examples of transitions without interventions.

The specific residual Q-learning algorithm used (e.g. residual soft Q-learning \citep{deep_energy_based_policies_haarnoja_2017} or residual soft actor-critic \citep{sac_haarnoja_2018}) will depend on the given task (e.g. whether the action space is discrete or continuous).
This algorithm can be run both online and offline.

\subsection{Directions of Policy Improvement}
\label{sec:policy-improvement}

Viewing this algorithm as applying a residual reward term $-\phi$ to $\pi_0$ allows us to characterize intervention strategies that bring $\pi_0$ closer to the expert policy $\pi_E$.
In Sec.~\ref{apdx:directions-of-policy-improvement}, we show how intervention strategies based on the advantage difference, state-action visitation difference, and state visitation difference can all bring the learner policy closer to the expert policy.
While we do not have an explicit equation for the $\phi$ implemented by human experts, these results motivate how ``intervening when the policy does something a human expert would not do'' leads to policy improvement.

\section{Experiments}
\label{sec:experiments}

Since this work focuses on understanding intervention avoidance, our experiments use emergency stop interventions across a range of settings.
We simulate the human expert using an optimal policy trained via standard RL, similar to \citet{eil_spencer_2020, rlif_luo_2024}.
We use a variety of environments, prior policies, and intervention strategies.
These experiments are designed to test how well the geometric perspective described in Sec.~\ref{sec:geometry} translates to practice.
We also provide practical tips for stable residual fine-tuning from interventions.

\begin{figure}[t]
  \centering
  \begin{subfigure}[b]{0.32\textwidth}
    \centering
    \includegraphics[width=\textwidth]{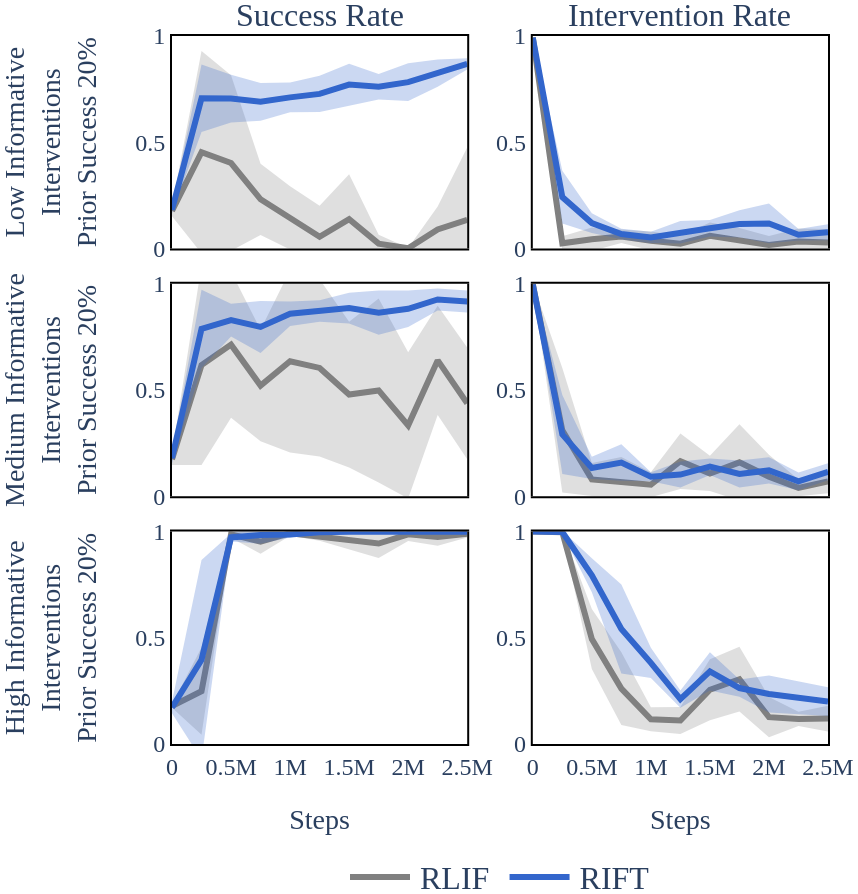}
  \end{subfigure}
  \begin{subfigure}[b]{0.32\textwidth}
    \centering
    \includegraphics[width=\textwidth]{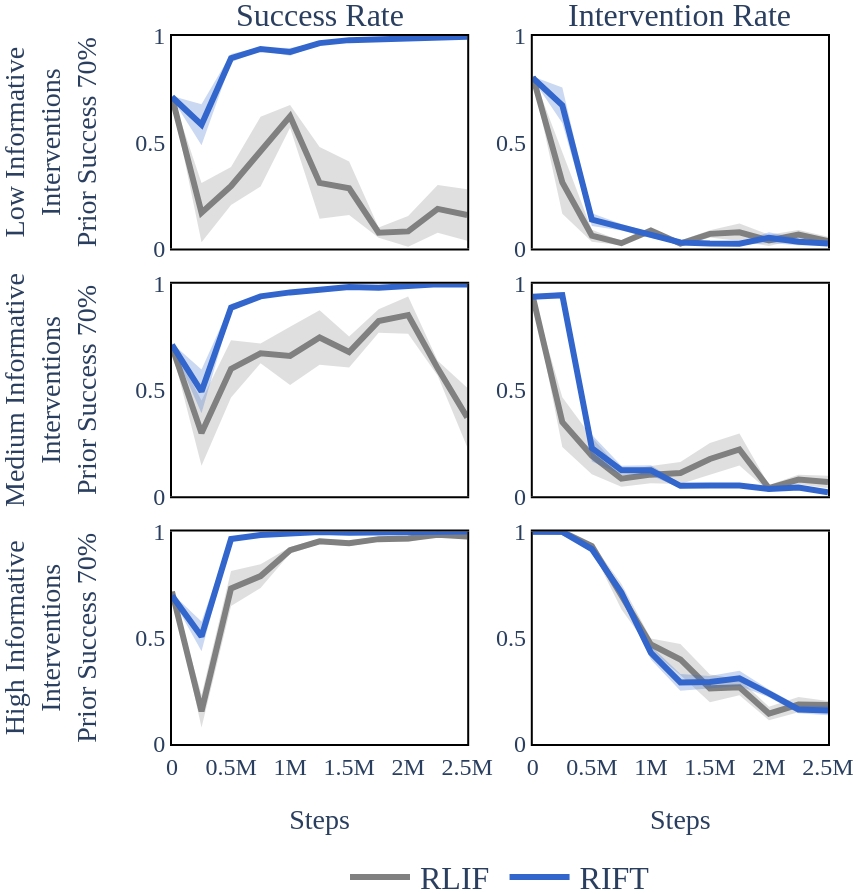}
  \end{subfigure}
  \begin{subfigure}[b]{0.32\textwidth}
    \centering
    \includegraphics[width=\textwidth]{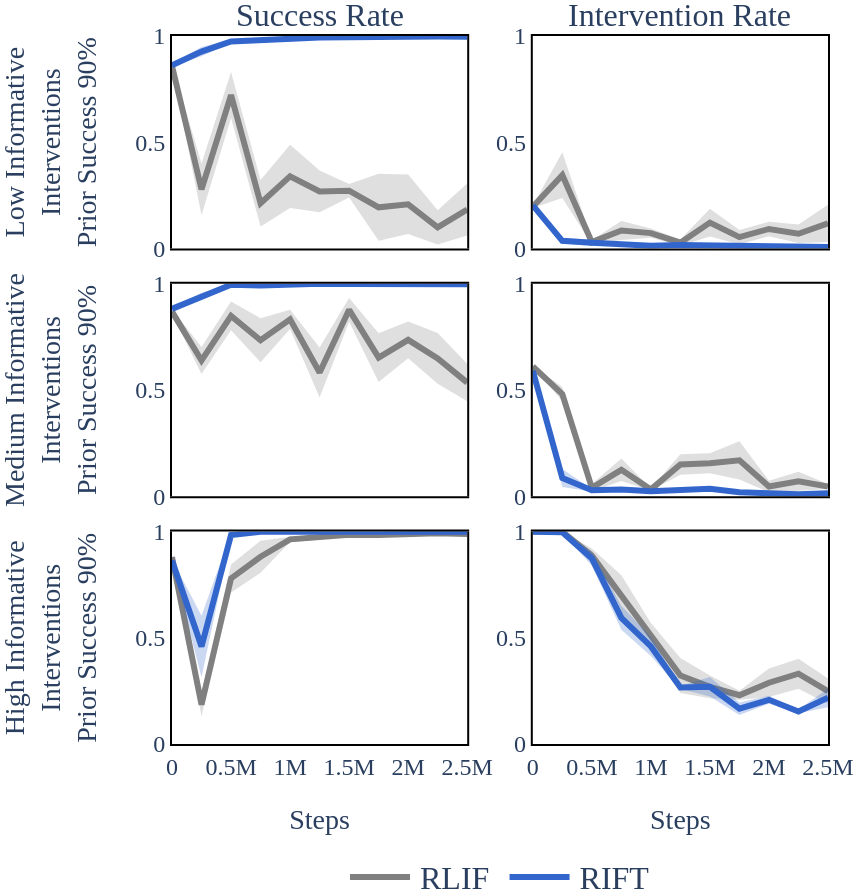}
  \end{subfigure}
  \caption{\textbf{RIFT achieves consistent policy improvement, while RLIF requires highly informative interventions.} Results in Lunar Lander using different prior policies and intervention strategies.}
  \label{fig:lunar-lander-big-grid}
  \vspace{-4mm}
\end{figure}


\subsection{Experimental Setup}
\label{sec:experimental-setup}
We perform experiments using Gymnasium \citep{gym_towers_2024}.
We treat RLIF~\citep{rlif_luo_2024} as the unregularized baseline corresponding to $\omega = 0$.
The default intervention model is based on the difference in Q-values, similar to \citet{rlif_luo_2024}:
\begin{equation}
\label{eqn:q-diff-intervention-strategy}
\phi(s, a) = \mathbf{1} \left[ Q^*(s, \pi^*(s)) - Q^*(s, a) > B \right]
\end{equation}
Here the threshold $B >0$ represents the suboptimality gap allowed before an intervention occurs.
As $B \to 0$ this intervention strategy becomes more informative, and as $B \to \infty$ the interventions become less informative.
Additional experimental details are given in Sec.~\ref{apdx:experimental-setup}.
We also experiment with different intervention models, as described below.

\begin{figure}[t]
  \centering

  \begin{subfigure}{0.335\textwidth}
    \centering
    \includegraphics[width=\textwidth]{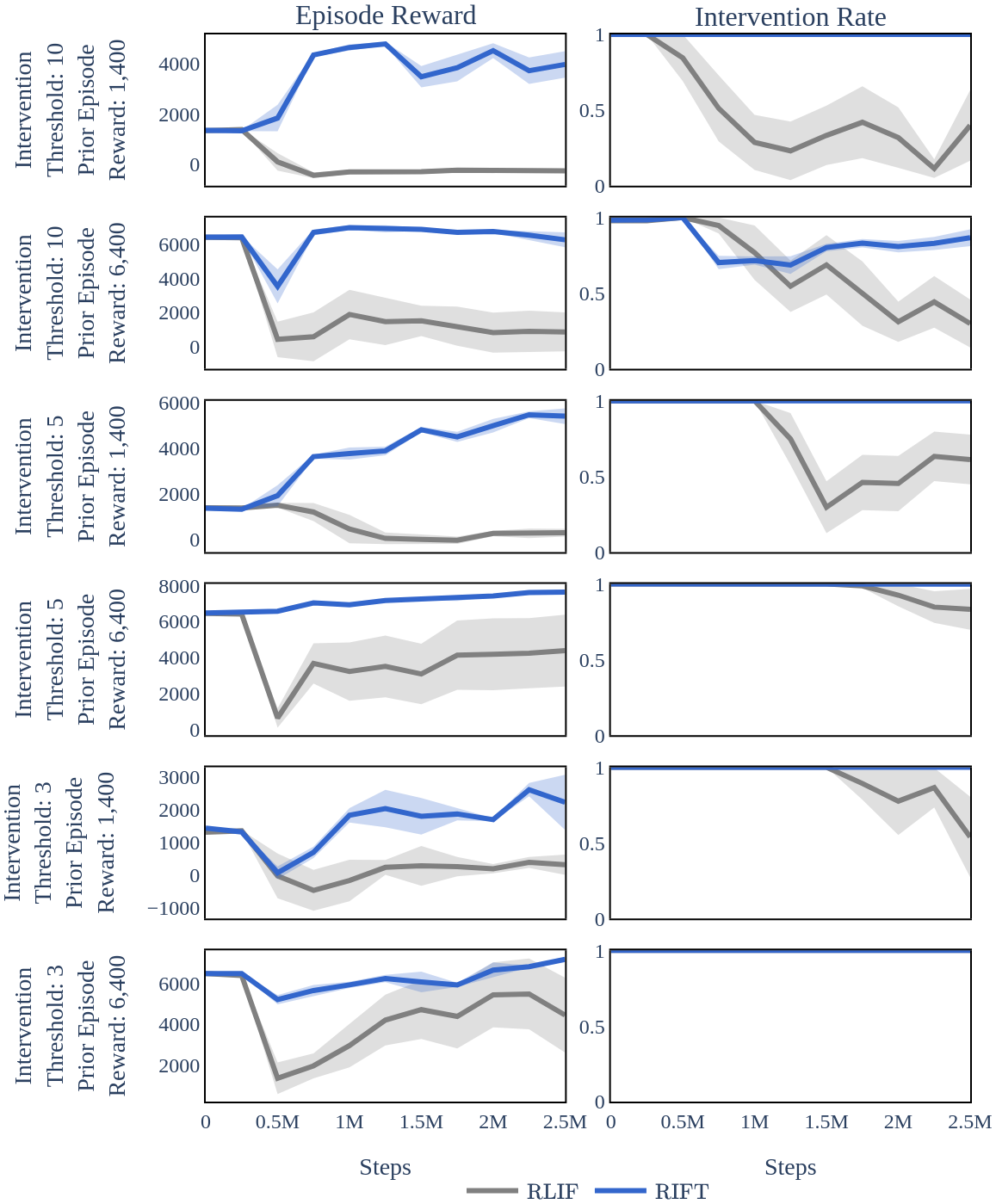}
    \caption{Half cheetah.}
    \label{fig:half-cheetah}
  \end{subfigure}
  \hfill
  \begin{subfigure}{0.32\textwidth}
     \centering 
     \includegraphics[width=\textwidth]{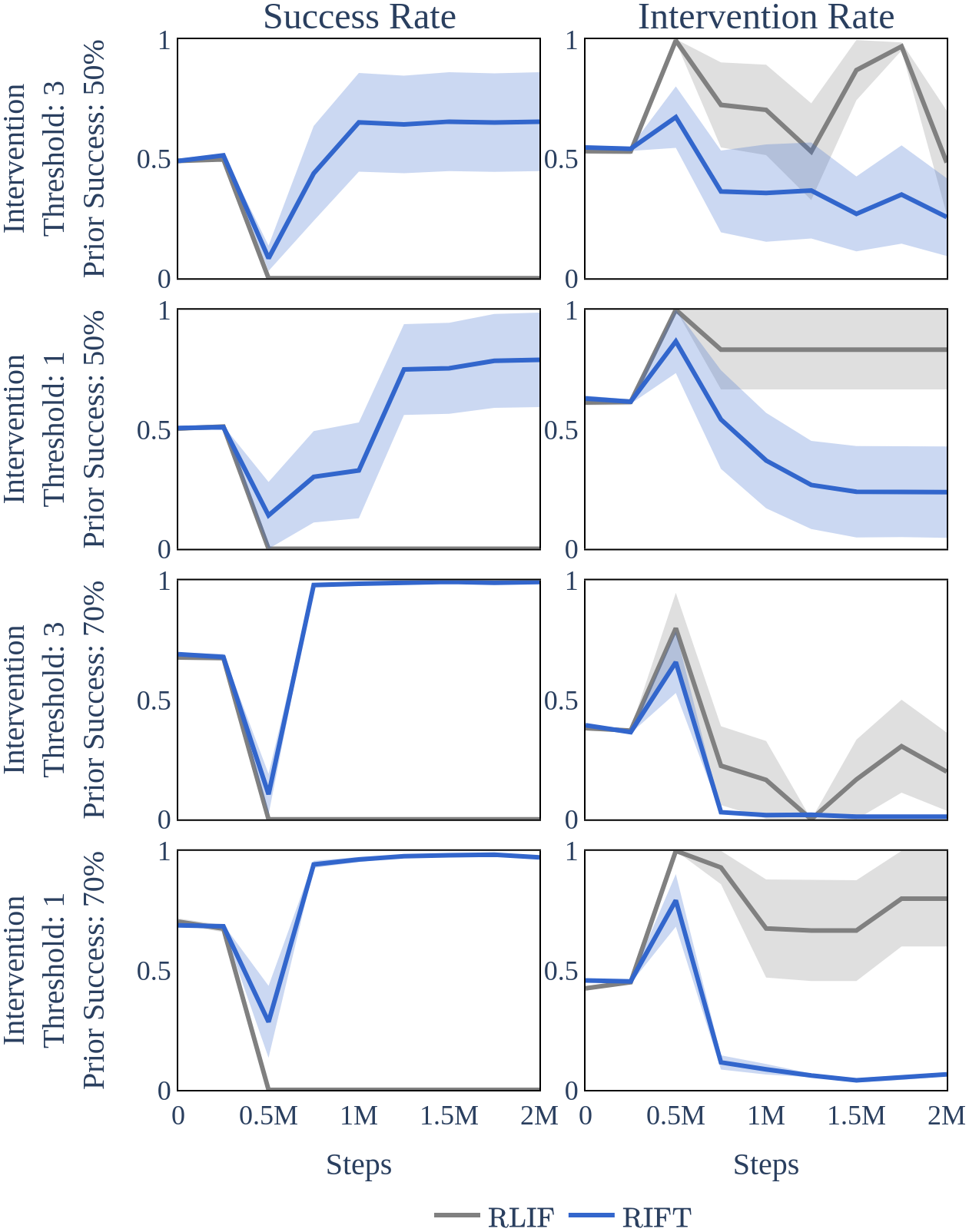}
     \caption{Bipedal walker.}
     \label{fig:bipedal-walker}
  \end{subfigure}
  \hfill
  \begin{subfigure}{0.30\textwidth}
     \centering 
     \includegraphics[width=\textwidth]{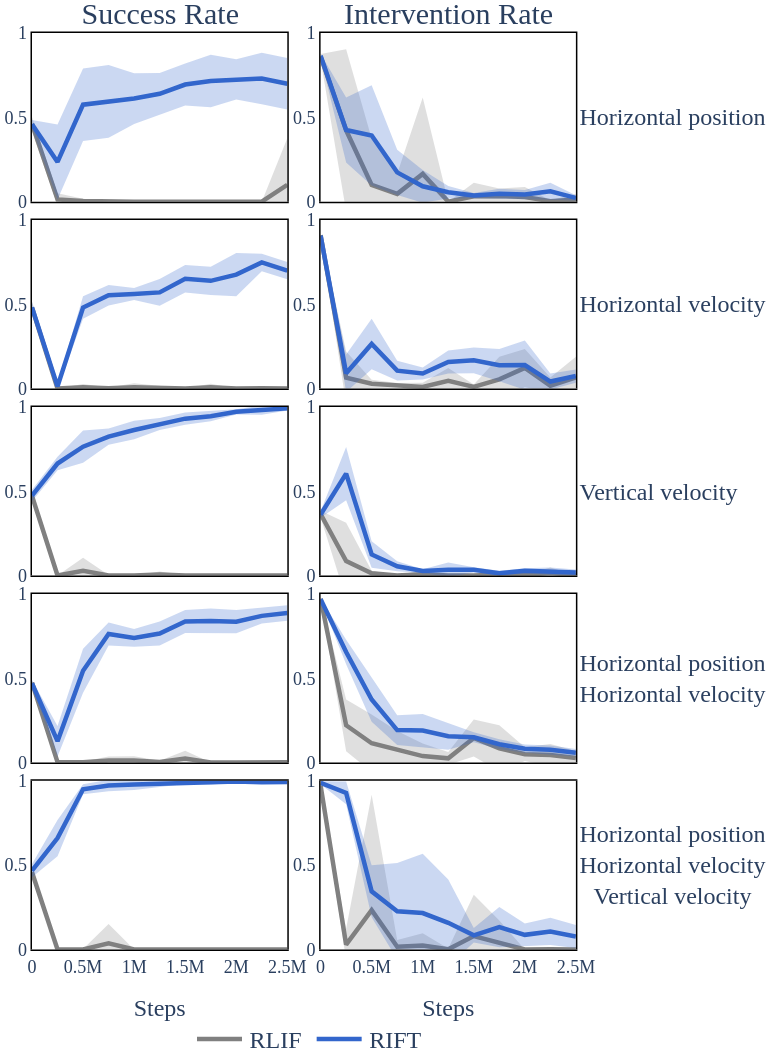}
     \caption{Heuristic interventions.}
     \label{fig:heuristic-interventions-main}
  \end{subfigure}
\caption{\textbf{RIFT consistently outperforms RLIF across environments, intervention strategies, and prior policies.}
Fig.~\ref{fig:half-cheetah}: Half Cheetah with Q-value-based interventions using different thresholds. Fig.~\ref{fig:bipedal-walker}: Bipedal Walker with Q-value-based interventions using different thresholds. Fig.~\ref{fig:heuristic-interventions-main}: Lunar Lander with physics-based heuristic interventions.
}
\vspace{-6mm}
\end{figure}

\subsection{Performance Across Intervention Settings}
We compare RLIF and RIFT across environments, prior policies, and intervention strategies (via the $B$ threshold).
Quantitative results are shown in Fig.~\ref{fig:ll_result_simple}, Fig.~\ref{fig:lunar-lander-big-grid}, Fig.~\ref{fig:half-cheetah}, and Fig.~\ref{fig:bipedal-walker}, with qualitative examples in Fig.~\ref{fig:half-cheetah-video}.
RIFT consistently outperforms RLIF across these domains.
The gap is most pronounced under low-informative interventions and with a high quality prior, just as theory predicts.



\subsection{Breaking the Intervention Model}
\label{sec:breaking-intervention-model}

We test how RIFT performs under intervention settings that violate the assumptions used in Sec.~\ref{sec:geometry}. We first experiment with non-Markovian intervention delay, where the intervention happens a fixed number of steps after the intervention criteria are met.
If the episode ends during the delay period, the intervention happens on the final timestep.
By adding delay, the intervention strategy becomes less informative due to challenges in credit assignment.
As seen in Table~\ref{tab:delay}, RIFT is robust to intervention delay while RLIF performance deteriorates.

In Sec.~\ref{apdx:noisy-interventions} we experiment with stochastic interventions, breaking the assumption that an intervention-free policy exists.
RIFT is robust to high levels of both false positive and false negative noise, consistently outperforming RLIF.
In Sec.~\ref{apdx:heuristic-interventions} we construct intervention strategies that overlap with the expert policy visitation distribution. RIFT converges to new ways of solving the task not observed in either the expert or prior policies.

\subsection{Interpretable Intervention Strategies}

The intervention model described in Eqn.~\ref{eqn:q-diff-intervention-strategy} acts globally: all states have the potential for intervention.
This property does not universally hold for all intervention strategies.
For example, a human can be quick to intervene in safety-critical states, but be much more relaxed otherwise.
The locality of the intervention strategy is another aspect that impacts its informativeness.

We define physics-based heuristic intervention strategies that operate only in safety-critical states.
Fig.~\ref{fig:ll_example_rollouts} shows policies trained using an intervention strategy based on the rate of descent and proximity to the ground.
While both RLIF and RIFT learn to avoid interventions, RLIF does so by avoiding the ground while RIFT learns to land successfully.
This difference is clear in the quantitative results shown in Fig.~\ref{fig:heuristic-interventions-main} as well.
Further details and results are described in Sec.~\ref{apdx:heuristic-interventions}.
Fascinatingly, if we modify the intervention heuristics to deliberately overlap with the expert state distribution, RIFT is able to learn new behaviors to solve the task not observed in either the expert or prior policies.

\subsection{Regularization Strength}

Lemma~\ref{lem:path-of-reg-solutions} suggests that the optimal solution to the RIFT objective should be fairly stable in a window of $\omega$ values close to 0.
In practice, however, having $\omega$ be too small could require significantly more data to accurately estimate the difference of very small reward terms.
To test this, we perform an ablation over $\omega$, as shown in 
Fig.~\ref{fig:omega_ablation}.
We find that near-optimal performance can be achieved with $\omega$ values in a 1 or 2 order of magnitude window.
Furthermore, a much larger window of $\omega$ values still yields results comparable or superior to those of RLIF.
This suggests that $\omega$ does not need be extensively tuned, as getting within a factor of 10 seems to be sufficient.
For a given prior policy, the optimal choice of $\omega$ decreases as the interventions become more informative, reflecting the diminished importance of regularization when the minimal-intervention face is highly constrained.

\begin{figure}[t]
  \centering

  \begin{subfigure}{0.4\textwidth}
    \centering
    \includegraphics[width=\textwidth]{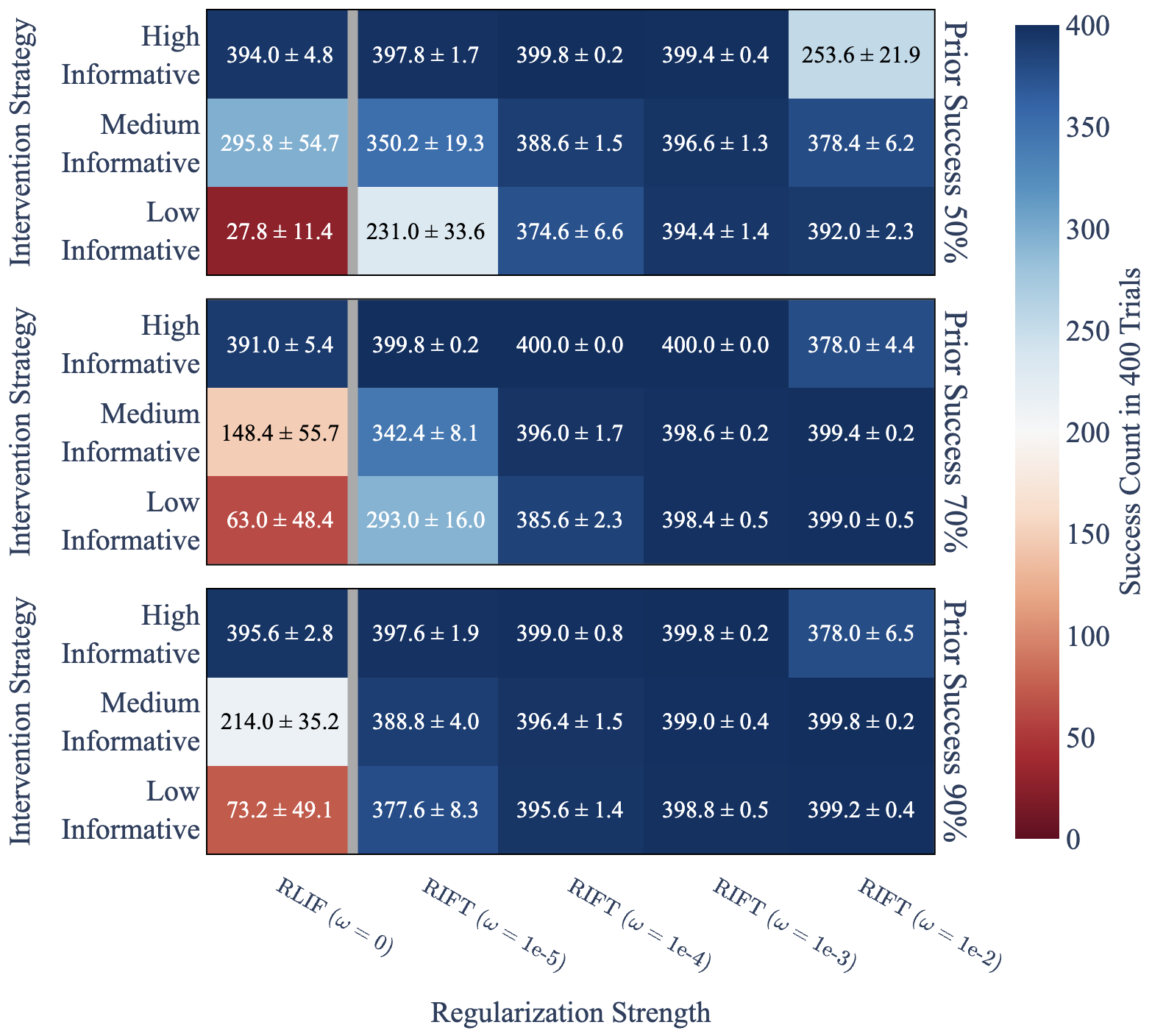}
    \caption{Ablating regularization strength}
    \label{fig:omega_ablation}
  \end{subfigure}
  \hfill
  \begin{subfigure}{0.55\textwidth}
     \centering 
     \includegraphics[width=\textwidth]{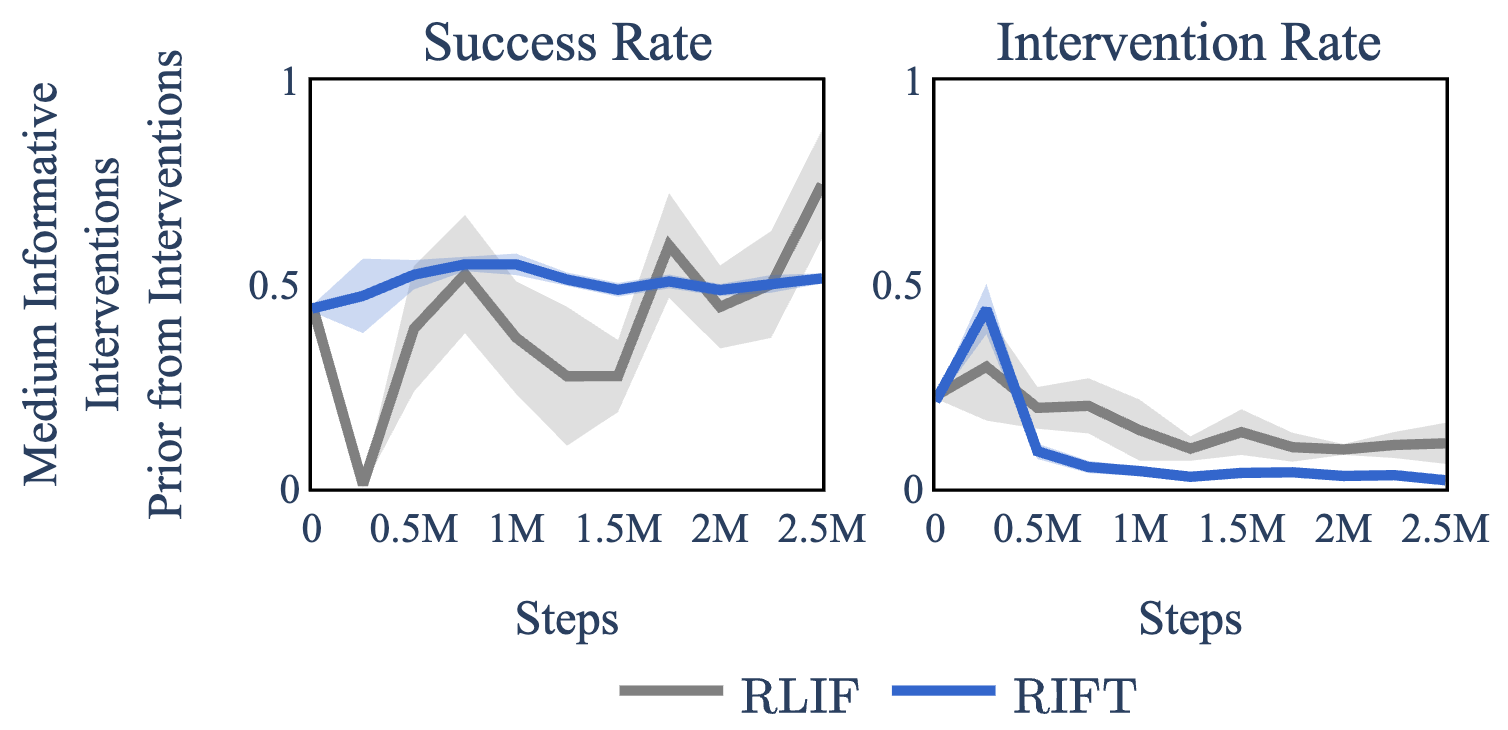}
     \caption{Prior policy constructed to be redundant with interventions.}
     \label{fig:failure-case-intervention-prior}
  \end{subfigure}
\caption{
\textbf{Interaction between regularization strength and intervention informativeness.} In Fig.~\ref{fig:omega_ablation} we ablate the regularization coefficient $\omega$.
Near-optimal performance can be achieved with $\omega$ values over several orders of magnitude, suggesting this parameter does not need to be extensively tuned.
In Fig.~\ref{fig:failure-case-intervention-prior} we construct a non-uniform prior policy that does not provide a meaningful induced reward on the constrained MDP, resulting in near-constant success rate from RIFT.
}
\vspace{-6mm}
\end{figure}

\subsection{Constructed Counterexample}

If the uniform policy is used as the prior then the RIFT objective reduces to the RLIF objective.
Using the theory of Sec.~\ref{sec:geometry}, we can construct other prior policies that we predict should make RIFT perform similarly to RLIF.
We train a policy from scratch using standard maximum-entropy RL on the intervention reward.
From Theorem~\ref{thm:induced-objective}, this prior policy should not induce a useful reward function on the intervention-constrained MDP, as it contains no information other than the intervention signal.
Indeed, we see in Fig.~\ref{fig:failure-case-intervention-prior} that although this prior policy achieves nearly a 50\% success rate, RIFT does not meaningfully change the policy's performance.

\section{Related Work}
In intervention learning, the expert interacts with the policy during training. 
DAgger~\citep{dagger_ross_2011} is a core algorithm in this space, in which the expert provides action labels in states visited by the policy.
HG-DAgger~\citep{hg-dagger2019} uses an interaction format in which the takes control of the robot during the intervention.
This algorithm only leverages the expert actions, ignoring the information implicit in when the expert decided to intervene.
This can result in learning to correct mistakes instead of learning to avoid them in the first place~\citep{eil_spencer_2020}.

An alternative approach is to learn to avoid interventions using the binary intervention signal~\citep{trial_without_error_saunders_2017,rlif_luo_2024}.
Other works incorporate both the binary signal and expert actions, but do so with custom optimization functions instead of using standard MDP objectives \citep{eil_spencer_2020, korkmaz2025mile}.
By considering e-stop interventions, our work focuses on learning from the binary intervention signal.
This is most similar to RLIF \citep{rlif_luo_2024}, with the key difference that this work does not assume a specific, highly informative intervention strategy is used.

\section{Discussion and Limitations}
\label{sec:discussion}
This work provides a novel theoretical understanding of learning to avoid interventions.
We show that interventions alone do not fully specify a task, and that combining intervention avoidance with the prior policy in the objective function yields stronger theoretical justification and empirical performance.
We instantiate this approach in \emph{Residual Intervention Fine-Tuning} (RIFT).
More broadly, the geometric perspective proposed in this work suggests a unifying lens for deployment-time learning, where feedback, priors, and objectives all partially describe the task.
We believe this view opens the door to richer forms of intervention, adaptive supervision, and scalable human-in-the-loop learning.

One limitation of our work is that we do not conduct experiments with human subjects as the expert.
We find simulation experiments better suited for testing the predictions of our theory, as it is difficult to precisely control the intervention setting with human subjects.
We leave applying RIFT to real human feedback as an exciting direction for future work.

By explicitly modeling the limits of intervention feedback, our approach aims to reduce unsafe or brittle behavior during deployment.
We do not introduce new forms of data collection or human monitoring beyond standard intervention protocols already used in robotic systems.
We expect the primary societal impact of this work to be improved safety and reliability in autonomous systems that operate under human supervision.



\bibliographystyle{abbrvnat}
\bibliography{neurips_2026}


\appendix

\newpage
\section{Background}

\subsection{Maximum Entropy Objective}
\label{apdx:max-ent}
The maximum-entropy objective \citep{deep_energy_based_policies_haarnoja_2017, ziebart_thesis_2010} for a given temperature $\alpha \in (0, \infty)$ is 
\begin{equation} \label{eqn:max-ent-objective}
J_\textrm{MaxEnt}(\pi \mid r) = \E_{s_t, a_t} \Biggl[ \sum_{t=0}^\infty \gamma^t \Bigl( r(s_t, a_t) + \alpha \mathcal H [ \pi(s_t)] \Bigr) \Biggr]
\end{equation}
where variables $\{(s_t, a_t) \}_{t=0}^\infty$ are sampled as $s_0 \sim d$, $a_t \sim \pi( s_t)$, and $s_{t+1} \sim \mathcal T(\cdot \mid s_t, a_t)$. Here $\mathcal H[\pi(s)]$ is the entropy of the policy at state $s$.
The soft-Q function $Q^\pi$ captures the expected reward from a given state-action pair under policy $\pi$:
\begin{equation} \label{eqn:soft-q-definition}
Q^\pi (s_0, a_0) = r(s_0, a_0) + \E_{s_t, a_t} \left[ \sum_{t=1}^\infty \gamma^t \Bigl( r(s_t, a_t) + \alpha \mathcal H \left[ \pi( s_t) \right] \Bigr) \right]
\end{equation}
where $s_t$ and $a_t$ for $t \ge 1$ are sampled as in (\ref{eqn:max-ent-objective}).
Similarly, we define the soft-value function 
\begin{equation} \label{eqn:v-star}
V^\pi(s) = \E_{a \sim \pi(s)} \Bigl[ Q^\pi (s, a) + \alpha \mathcal H \left[ \pi (s) \right] \Bigr]
\end{equation}
The soft-advantage is the difference between soft-Q function and soft-value function: $A^\pi(s, a) = Q^\pi(s, a) - V^\pi(s)$.

\subsection{Single Timestep Visitation}

Define single-timestep state-action visitations as 
\begin{equation}
d^\pi_t(s, a) \doteq \Pr \left[ s_t = s \land a_t = a \mid \pi \right]
\end{equation}
and similarly $d^\pi_t (s) = \sum_{a \in \mathcal A} d_t^\pi (s, a)$.

\section{Directions of Policy Improvement}
\label{apdx:directions-of-policy-improvement}

Sec.~\ref{sec:residual-fine-tuning} shows that the intervention objective $J_\textrm{INT}$ is equivalent to applying Residual Q-Learning \citep{rql_li_2023} to $\pi_0$ with residual reward $-\phi$.
This connection allows us to consider choices for $\phi$ that will bring the policy closer to the expert policy $\pi_E$.
We provide two different characterizations of such provable intervention strategies. In the first, interventions occur when the robot prefers an action that the expert does not.
In the second, interventions occur when the robot reaches a state that the expert does not. These characterizations align with how we would intuitively expect an expert to behave.

\subsection{Advantage Difference}
Using a residual reward of $\alpha \log \pi_E(a \mid s) - \alpha \log \pi_0(a \mid s)$ (plus reward shaping terms) will yield $\textrm{RQL}(\pi_0, \alpha \log \pi_E - \alpha \log \pi_0) = \pi_E$.
This represents the difference in advantage predicted by the expert versus the prior policy.
Intervention strategies are restricted to $\phi(s, a) \in [0, 1]$, but if $-\phi$ is close to such a reward, then the result of fine-tuning should approach $\pi_E$.
While there are adversarial MDPs where this is not true, it holds fairly well for real-world MDPs.
This amounts to having $\phi(s, a)$ be high whenever $\pi_0(a \mid s) > \pi_E(a \mid s)$.
If we assume that the student is proposing actions where $\pi_0(a \mid s)$ is high, this would correlate with intervening whenever $\pi_E(a \mid s)$ is low.

\subsection{Visitation Difference}
Framing the policy update as a residual reward allows us to use the gradients of $\hat \pi$ with respect to the residual reward to get a first-order estimate of the impact of our policy update.
Consider the following imitation objective:
\begin{equation} \label{eqn:weighted-kl}
\Lambda (\pi) \doteq \sum_{s \in \mathcal S} d^{\pi_E}(s) D_\textrm{KL} \left[ \pi_E( s) \parallel \pi( s) \right]
\end{equation}
Clearly $\pi_E = \argmin_{\pi} \Lambda(\pi)$, so minimizing $\Lambda$ will lead us to $\pi_E$.
This objective relates to $J_\textrm{MaxEnt}(\pi_E \mid \hat r)$, where $\hat r$ is the implicit reward corresponding to $\pi$.

\begin{lemma} \label{thm:irl-to-weighted-kl}
Let $\hat r$ be a reward function with corresponding optimal policy $\hat \pi = \argmax_\pi J_\textrm{MaxEnt} (\pi \mid \hat r)$ and soft-value function $\hat V$.
Then
\begin{equation}
J_\textrm{MaxEnt}(\pi_E \mid \hat r) = \E_{s_0 \sim \mu} \left[ \hat V(s_0) \right] - \frac{\alpha}{1 - \gamma} \Lambda(\hat \pi)
\end{equation}  
\end{lemma}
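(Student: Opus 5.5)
The plan is to use the standard soft-optimality identity: under the max-entropy optimal policy for $\hat r$, the soft advantage is a scaled log-probability, $\hat Q(s,a) - \hat V(s) = \alpha \log \hat\pi(a\mid s)$. I will then run a performance-difference style telescoping argument for the expert policy $\pi_E$, evaluated with reward $\hat r$. Throughout, $\hat V$ is the soft value of $\hat\pi$, so that
\[
\hat V(s) = \alpha\log\sum_a \exp(\hat Q(s,a)/\alpha)
\quad\text{and}\quad
\hat Q(s,a) = \hat r(s,a) + \gamma \E_{s'}[\hat V(s')],
\]
with the paper's convention that $V$ absorbs the entropy bonus appropriately. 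The first step is to record these soft Bellman relations for the optimal $\hat\pi$ and deduce $\hat\pi(a\mid s)=\exp((\hat Q(s,a)-\hat V(s))/\alpha)$.

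Next, I expand $J_\textrm{MaxEnt}(\pi_E\mid\hat r)$ as a sum over time. Per step the contribution is $\E_{s_t,a_t\sim\pi_E}[\hat r(s_t,a_t)+\alpha\mathcal H[\pi_E(s_t)]]$. I substitute
\[
\hat r(s_t,a_t)=\hat Q(s_t,a_t)-\gamma\E[\hat V(s_{t+1})]
=\hat V(s_t)+\alpha\log\hat\pi(a_t\mid s_t)-\gamma\E[\hat V(s_{t+1})].
\]
Summing $\gamma^t$ times this along $\pi_E$ trajectories, the $\hat V$ terms telescope and leave $\E_{s_0\sim\mu}[\hat V(s_0)]$. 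This telescoping is justified by boundedness and $\gamma<1$. The remaining per-step term is
\[
\alpha\,\E_{a\sim\pi_E(s_t)}[\log\hat\pi(a\mid s_t)-\log\pi_E(a\mid s_t)] = -\alpha\,\KL[\pi_E(s_t)\parallel\hat\pi(s_t)].
\]

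Finally, I convert $\sum_t\gamma^t\E_{s_t\sim d^{\pi_E}_t}[\cdot]$ into $\frac{1}{1-\gamma}\sum_s d^{\pi_E}(s)[\cdot]$ using the normalized visitation definition (Eqn.~\ref{eqn:sa-visitation}). This yields exactly $-\frac{\alpha}{1-\gamma}\Lambda(\hat\pi)$.

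The main obstacle is bookkeeping rather than ideas. The paper's definition of $V^\pi$ includes an entropy term, and whether the initial state's entropy is counted must match the definition of $J_\textrm{MaxEnt}$. I will first check the identity $\hat V(s)=\E_{a\sim\hat\pi}[\hat Q-\alpha\log\hat\pi]$ against the paper's conventions, and adjust where the entropy at $t=0$ is counted so that the telescoping closes exactly. A second point to handle is support: when $\pi_E$ puts mass where $\hat\pi$ is zero, both sides are $-\infty$. Under softmax optimality $\hat\pi$ has full support, so this case does not arise.
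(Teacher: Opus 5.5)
Your proposal is correct and is essentially the paper's proof. Both substitute the soft-Bellman (path-consistency) identity $\hat r(s,a) = \alpha \log \hat\pi(a\mid s) + \hat V(s) - \gamma \E_{s'}[\hat V(s')]$ into $J_\textrm{MaxEnt}(\pi_E \mid \hat r)$, combine the $\alpha\log\hat\pi$ term with $\alpha\mathcal H[\pi_E(s)]$ to get $-\alpha\KL[\pi_E(s)\parallel\hat\pi(s)]$, and telescope the value terms down to $\E_{s_0\sim\mu}[\hat V(s_0)]$. The only difference is cosmetic: the paper passes to $d^{\pi_E}$ first and telescopes via $d^{\pi_E}_{t+1}$, while you telescope along trajectories first. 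The paper's definitions of $J_\textrm{MaxEnt}$ and $V^\pi$ already agree, so no adjustment of the $t=0$ entropy term is needed.
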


\begin{proof}
Rewriting (\ref{eqn:max-ent-objective}) in terms of state visitations,
\begin{align}
&J_\textrm{MaxEnt}(\pi_E \mid \hat r) \\
&= \E_{s_t, a_t} \sum_{t=0}^\infty \gamma^t \left( \hat r(s_t, a_t) + \alpha \mathcal H \left[ \pi_E( s_t) \right] \right) \\
&= \frac{1}{1 - \gamma} \sum_{s \in \mathcal S} d^{\pi_E} (s) \sum_{a \in \mathcal A} \pi_E(a \mid s) \left( \hat r(s, a) + \alpha \mathcal H \left[ \pi_E( s) \right] \right)  \nonumber
\end{align}
Using the relationship between reward function, policy, and value function \citep{bridging_gap_value_policy_rl_nachum_2017}, we can expand the inner sum as
\begin{align}
&\sum_{a \in \mathcal A} \pi_E(a \mid s) \left( \hat r(s, a) + \alpha \mathcal H \left[ \pi_E( s) \right] \right) \\
&=\sum_{a \in \mathcal A} \left( \pi_E(a \mid s) \cdot \hat r(s, a) \right) + \alpha \mathcal H \left[ \pi_E( s) \right]  \\
&= \sum_{a \in \mathcal A} \pi_E(a \mid s) \left( \alpha \log \hat \pi (a \mid s) + \hat V(s) - \gamma \E_{s'} \left[ \hat V(s') \right] - \alpha \log \pi_E(a \mid s) \right) \nonumber \\
&= - \alpha D_\textrm{KL} \left[ \pi_E( s) \parallel \hat \pi( s) \right] + \hat V(s) - \gamma \E_{s'} \left[ \hat V(s') \right] \nonumber
\end{align}
Thus, the overall objective is
\begin{equation}
J_\textrm{MaxEnt}(\pi_E \mid \hat r) = \frac{1}{1 - \gamma} \sum_{s \in \mathcal S} d^{\pi_E}(s) \left( \hat V(s) - \gamma \E_{s'} \left[ \hat V(s') \right] - \alpha D_\textrm{KL} \left[ \pi_E( s) \parallel \hat \pi( s) \right] \right)
\end{equation}

The soft-value terms can be simplified as
\begin{align}
&\frac{1}{1 - \gamma}\sum_{s \in \mathcal S} d^{\pi_E}(s) \left( \hat V(s) - \gamma \E_{s'} \left[ \hat V(s')\right] \right) \\
&= \sum_{s \in \mathcal S} \sum_{t=0}^\infty \gamma^t d^{\pi_E}_t (s) \hat V(s) - \sum_{\substack{s \in \mathcal S \\ a \in \mathcal A \\ s' \in \mathcal S}} \sum_{t=0}^\infty \gamma^t d^{\pi_E}_t (s) \cdot \gamma  \pi_E(a \mid s) \mathcal T(s' \mid s, a) \hat V(s') \nonumber \\
&= \sum_{s \in \mathcal S} \sum_{t=0}^\infty \gamma^t d^{\pi_E}_t (s) \hat V(s) - \sum_{t=0}^\infty \gamma^{t+1} d^{\pi_E}_{t+1} (s) \hat V(s) \\
&= \sum_{s \in \mathcal S} d_0^{\pi_E} (s) \hat V(s) \\
&= \sum_{s \in \mathcal S} \mu(s) \hat V(s) \\
&= \E_{s_0 \sim \mu}\left[ \hat V(s_0) \right]
\end{align}
Thus,
\begin{equation}
J_\textrm{MaxEnt}(\pi_E \mid \hat r) = \E_{s_0 \sim \mu} \left[ \hat V(s_0) \right] - \frac{\alpha}{1 - \gamma} \sum_{s \in \mathcal S} d^{\pi_E}(s) D_\textrm{KL} \left[ \pi_E( s) \parallel \hat \pi( s) \right] 
\end{equation}

\end{proof}

We can use the gradient of $\Lambda$ with respect to $\hat r$ to get a first-order approximation of how adding a residual reward impacts $\Lambda(\hat \pi)$.
Fortunately, this gradient simplifies to an interpretable expression.
\begin{theorem} \label{thm:grad-kl-div}
Define $\Lambda$ as in (\ref{eqn:weighted-kl}).
Let $\hat r$ be a reward function whose optimal policy is $\hat \pi$.
Then the partial derivative evaluated at the point $\hat r$ is
\begin{equation}
\frac{\partial \, \Lambda (\hat \pi)}{\partial  \, \hat r(s, a)} = \frac{1}{\alpha} \left( d^{\hat \pi}(s, a) - d^{\pi_E}(s, a) \right)
\end{equation}
\end{theorem}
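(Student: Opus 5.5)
The plan is to read the gradient off Lemma~\ref{thm:irl-to-weighted-kl}, rearranged as
\begin{equation*}
\Lambda(\hat\pi) = \frac{1-\gamma}{\alpha}\Bigl( \E_{s_0\sim\mu}\bigl[\hat V(s_0)\bigr] - J_\textrm{MaxEnt}(\pi_E \mid \hat r) \Bigr),
\end{equation*}
where $\hat\pi$ and $\hat V$ are the optimal soft policy and soft value for $\hat r$. Both terms on the right are explicit functions of $\hat r$, so it suffices to differentiate each with respect to a single entry $\hat r(s,a)$.

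\textbf{Expert term.} $J_\textrm{MaxEnt}(\pi_E \mid \hat r)$ is linear in $\hat r$, since $\pi_E$ does not depend on $\hat r$. From the visitation form used in the proof of Lemma~\ref{thm:irl-to-weighted-kl},
\begin{equation*}
\frac{\partial J_\textrm{MaxEnt}(\pi_E\mid\hat r)}{\partial \hat r(s,a)} = \frac{1}{1-\gamma}\, d^{\pi_E}(s,a).
\end{equation*}

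\textbf{Value term.} This is the main step. Note that $\E_{s_0\sim\mu}[\hat V(s_0)] = \max_\pi J_\textrm{MaxEnt}(\pi\mid\hat r)$, and the maximizer $\hat\pi$ is unique and smooth in $\hat r$: it is the softmax of the soft-Q fixed point, and that fixed point is differentiable by the implicit function theorem for the contracting soft Bellman operator. A Danskin/envelope argument then says the derivative of the optimal value equals the partial derivative of the objective at the fixed policy $\hat\pi$. That objective is linear in $\hat r$ with coefficient $d^{\hat\pi}(s,a)/(1-\gamma)$, so
\begin{equation*}
\frac{\partial\, \E_{s_0\sim\mu}[\hat V(s_0)]}{\partial \hat r(s,a)} = \frac{1}{1-\gamma}\, d^{\hat\pi}(s,a).
\end{equation*}

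As a backup if the envelope step seems too slick, I would differentiate the soft Bellman equation $\hat V(s') = \alpha\log\sum_{a'}\exp\bigl(\hat Q(s',a')/\alpha\bigr)$ with $\hat Q = \hat r + \gamma \mathcal T \hat V$ directly. The derivative of the log-sum-exp is $\hat\pi$, which gives the linear recursion
\begin{equation*}
\partial \hat V = \hat\pi\bigl(e_{s,a} + \gamma\mathcal T\,\partial\hat V\bigr).
\end{equation*}
Unrolling this under $\mu$ yields $\sum_t \gamma^t d^{\hat\pi}_t(s,a)$, which equals the same quantity. I expect this to be the only delicate point: justifying the exchange of derivative and optimization, or the convergence of the unrolled series. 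Both are routine for finite MDPs with $\gamma<1$.

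\textbf{Combining.} Substituting the two derivatives into the rearranged identity, the factors of $1-\gamma$ cancel and leave
\begin{equation*}
\frac{\partial\, \Lambda(\hat\pi)}{\partial\, \hat r(s,a)} = \frac{1}{\alpha}\bigl(d^{\hat\pi}(s,a) - d^{\pi_E}(s,a)\bigr).
\end{equation*}
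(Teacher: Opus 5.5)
Your proof is correct, and it takes a different route from the paper's. The paper does not use Lemma~\ref{thm:irl-to-weighted-kl} in this proof. It applies the chain rule through $\hat Q$: it combines the softmax gradient $\partial\Lambda/\partial\hat Q(s,a) = \frac{d^{\pi_E}(s)}{\alpha}\bigl(\hat\pi(a\mid s) - \pi_E(a\mid s)\bigr)$ with the fixed-point Jacobian $\partial\hat Q/\partial\hat r = (\mathbb I - \gamma W_{\hat\pi})^{-1}$. It then expands both factors as geometric series in time and telescopes, using $d^{\pi_E}_t(s)\hat\pi(a\mid s) = (\mathbf d^{\pi_E}_{t-1}W_{\hat\pi})(s,a)$ for $t\ge 1$ and $\mu(s)\hat\pi(a\mid s) = d^{\hat\pi}_0(s,a)$, to collapse the product to $\frac{1}{\alpha}(\mathbf d^{\hat\pi} - \mathbf d^{\pi_E})$.

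Your decomposition makes that bookkeeping unnecessary. The two occupancies arise separately: one from linearity of the expert's return in $\hat r$, the other from the envelope theorem for the optimal soft value. This also shows the result to be the familiar maximum-entropy IRL gradient, expert occupancy minus learner occupancy. In exchange, the paper's route gives the intermediate gradient with respect to $\hat Q$ and needs only the chain rule and the fixed-point Jacobian. Your backup computation is exactly that Jacobian applied to $\E_{s_0\sim\mu}[\hat V(s_0)]$ instead of to $\Lambda$, where the Neumann series gives $d^{\hat\pi}/(1-\gamma)$ directly with no telescoping.

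One precision point on the envelope step: the maximizer of $J_\textrm{MaxEnt}(\cdot\mid\hat r)$ is not unique at states it never visits. This is harmless. Lemma~\ref{thm:irl-to-weighted-kl} with an arbitrary $\pi$ in place of $\pi_E$ gives $J_\textrm{MaxEnt}(\pi\mid\hat r) = \E_{s_0\sim\mu}[\hat V(s_0)] - \frac{\alpha}{1-\gamma}\sum_s d^\pi(s)\KL\bigl(\pi(s)\,\|\,\hat\pi(s)\bigr)$. Hence every maximizer agrees with $\hat\pi$ on its own support and has occupancy $d^{\hat\pi}$, so Danskin still yields the gradient you claim.
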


Viewng these functions as $|\mathcal S| \cdot |\mathcal A|$ dimensional vectors, this result can be written as \begin{equation}
\nabla_{\mathbf {\hat r}} \Lambda (\hat \pi) = \frac{1}{\alpha} \left( \mathbf d^{\hat \pi} - \mathbf d^{\pi_E} \right) 
\end{equation}
The proof is given in Sec.~\ref{apdx:proof-grad-kl-div}.

If $\phi$ satisfies $\langle \phi, \mathbf d^{\hat \pi} - \mathbf d^{\pi_E} \rangle > 0$ then updating $\hat \pi$ with residual reward $-\phi$ can be viewed as approximating a step of gradient descent on $\Lambda(\hat \pi)$.
In words, this dot product means that interventions occur in state-action pairs that the student is likely to visit and the expert is not.
Although $\mu^{\pi_E}$ is not observable, this characterization shows that intervention strategies aligned with visitation mismatch implicitly approximate descent directions on this objective.

\subsection{State-Based Strategies}
State-based interventions form an important and practically relevant subclass of intervention strategies, for example if the expert does not directly observe the actions.
We can view a state-based intervention strategy as constraining $\phi(s, a) = \varphi(s)$ for some $\varphi : \mathcal S \to [0, 1]$.
For any policy $\pi$, let $\bm \rho^\pi$ be the $|\mathcal S|$ dimensional vector of state visitations (i.e. $\bm \rho^\pi [i] = d^\pi(s_i)$).

\begin{lemma}
Define $\Lambda$ as in (\ref{eqn:weighted-kl}) and constraint $\phi$ to only depend on the state: $\phi(s, a) = \varphi(s)$ for some function $\varphi : \mathcal S \to [0,1]$ with vector representation $\bm \varphi$.
Then
\begin{equation}
\left \langle \phi, \nabla_{\hat{\mathbf r}} \Lambda(\hat \pi)\right \rangle = \frac{1}{\alpha} \left \langle \bm \varphi,  \bm \rho^{\hat \pi} - \bm \rho^{\pi_E} \right\rangle
\end{equation}
\end{lemma}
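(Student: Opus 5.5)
The plan is to substitute the gradient formula from Theorem~\ref{thm:grad-kl-div} directly into the inner product. Viewing $\phi$ as an $|\mathcal S|\cdot|\mathcal A|$ dimensional vector, Theorem~\ref{thm:grad-kl-div} gives
\begin{equation}
\left\langle \phi, \nabla_{\hat{\mathbf r}} \Lambda(\hat\pi) \right\rangle
= \frac{1}{\alpha} \sum_{s \in \mathcal S} \sum_{a \in \mathcal A} \phi(s,a) \left( d^{\hat\pi}(s,a) - d^{\pi_E}(s,a) \right).
\end{equation}
No further property of $\Lambda$ or of the soft-optimality of $\hat\pi$ is needed beyond what that theorem already provides. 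The rest of the argument is a bookkeeping step that reduces the state-action inner product to a state inner product.

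Next, I use the constraint $\phi(s,a) = \varphi(s)$ to pull $\varphi(s)$ out of the inner sum over actions:
\begin{equation}
\frac{1}{\alpha} \sum_{s} \varphi(s) \sum_{a} \left( d^{\hat\pi}(s,a) - d^{\pi_E}(s,a) \right).
\end{equation}
I then invoke the marginalization identity $d^\pi(s) = \sum_a d^\pi(s,a)$ from the preliminaries, applied to both $\hat\pi$ and $\pi_E$. The inner sum becomes $d^{\hat\pi}(s) - d^{\pi_E}(s) = \bm\rho^{\hat\pi}[s] - \bm\rho^{\pi_E}[s]$. The remaining sum over $s$ is exactly $\langle \bm\varphi, \bm\rho^{\hat\pi} - \bm\rho^{\pi_E} \rangle$, which gives the claim with the factor $1/\alpha$.

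There is no real obstacle here. The only points needing care are that the sums are finite, so they can be rearranged freely in the finite tabular setting assumed by Theorem~\ref{thm:grad-kl-div}, and that the indexing of $\bm\varphi$ and $\bm\rho$ over states is consistent. All the substantive work, namely the derivative of $\Lambda$ with respect to the reward, is inherited from Theorem~\ref{thm:grad-kl-div}, which we assume.
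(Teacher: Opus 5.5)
Your proposal is correct and follows the paper's proof exactly: substitute the gradient from Theorem~\ref{thm:grad-kl-div}, pull $\varphi(s)$ out of the sum over actions, and marginalize $d^\pi(s) = \sum_a d^\pi(s,a)$ to get the state-visitation inner product. Nothing is missing.
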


\begin{proof}
Using the result of Theorem~\ref{thm:grad-kl-div},
\begin{align}
&\left \langle \phi, \nabla_{\mathbf{\hat r}} \Lambda(\hat \pi)\right \rangle \\
&= \frac{1}{\alpha }\sum_{\substack{s \in \mathcal S \\ a \in \mathcal A}} \phi(s, a)  \left( d^{\hat \pi} (s, a) - d^{\pi_E} (s, a) \right) \\
&= \frac{1}{\alpha }\sum_{\substack{s \in \mathcal S \\ a \in \mathcal A}} \varphi(s)  \left( d^{\hat \pi} (s, a) - d^{\pi_E} (s, a) \right) \\
&= \frac{1}{\alpha }\sum_{\substack{s \in \mathcal S}} \varphi(s) \left( d^{\hat \pi} (s) - d^{\pi^*} (s) \right) \\
&= \frac{1}{\alpha} \left \langle \bm \varphi, \bm \rho^{\hat \pi} - \bm \rho^{\pi_E} \right \rangle
\end{align}
\end{proof}

For state-based intervention strategies, the dot product between the residual reward and the gradient of $\Lambda(\hat \pi)$ is proportional to the dot product of $\varphi$ and the difference in state visitation between $\hat \pi$ and $\pi_E$.
For a state-based intervention strategy, intervening in states that the student visits frequently and the expert does not can also be viewed as a projected gradient step on $\Lambda(\hat \pi)$.
This explains why coarse interventions that only signal undesirable states can still lead to policy improvement.

\section{Proofs}
\label{apdx:proofs}

\subsection{Lemma~\ref{lem:path-of-reg-solutions}}
\label{apdx:proof-path-of-reg-solutions}
\begin{proof}
For each $\omega>0$, the objective $\mathcal I(d)+\omega \Psi(d)$ is strictly convex and continuously differentiable on the compact convex set $\mathcal D$, and admits a unique minimizer.
As the objective varies continuously with respect to $\omega$, so too the minimizer depends continuously on $\omega$ \citep{berge1963topologicalspaces}.

For any $\Psi$ and $\omega > 0$, let $d^\textrm{opt} = d^\star_\Psi(\omega)$.
By optimality, for any $d \in \mathcal D$
\begin{equation}
\mathcal I(d^\textrm{opt}) + \omega \Psi(d^\textrm{opt}) \le \mathcal I(d) + \omega \Psi(d)
\end{equation}
In particular, considering any $d^\textrm{min} \in \mathcal D_{\min}$,
\begin{equation}
\mathcal I(d^\textrm{opt}) \le \mathcal I(d^\textrm{min}) + \omega \big(\Psi(d^\textrm{min}) - \Psi(d^\textrm{opt})\big).
\end{equation}

Since $\mathcal I(d^\textrm{min}) \le \mathcal I(d^\textrm{opt})$ as well and $\Psi$ is bounded on $\mathcal D$,
taking $\omega \to 0$ forces $\mathcal I(d^\textrm{opt}) \to \mathcal I(d^\textrm{min})$.
Thus any limit point must lie in $\mathcal D_{\min}$.

Now restrict the objective $\mathcal I(d)+\omega\Psi(d)$ to $d\in\mathcal D_{\min}$.
For all $d\in\mathcal D_{\min}$ the first term $\mathcal I(d)$ is a constant, hence
\begin{equation}
\argmin_{d\in\mathcal D_{\min}} \mathcal I(d) + \omega \Psi(d) = \argmin_{d\in\mathcal D_{\min}} \Psi(d)
\end{equation}
Therefore any limit point of $d^\textrm{opt}$ as $\omega\to 0^+$ lies in
$\argmin_{d\in\mathcal D_{\min}} \Psi(d)$.
As $\Psi$ is strictly convex, this minimizer is unique.
\end{proof}

\subsection{Lemma~\ref{lem:intervention-improvement}}
\label{apdx:proof-intervention-improvement}

\begin{proof}
By optimality of $d^{\mathrm{opt}}$ and feasibility of $d^{\pi_0}$,
\begin{equation}
\mathcal I(d^{\mathrm{opt}})+\omega\Psi(d^{\mathrm{opt}})
\le
\mathcal I(d^{\pi_0})+\omega\Psi(d^{\pi_0}).
\end{equation}
Since $\Psi$ attains its unique minimum at $d^{\pi_0}$,
\begin{equation}
\Psi(d^{\mathrm{opt}})
\ge
\Psi(d^{\pi_0}).
\end{equation}
Rearranging gives
\begin{equation}
\mathcal I(d^{\mathrm{opt}})
\le
\mathcal I(d^{\pi_0}).
\end{equation}

It remains to show strictness. Since $\mathcal I$ is linear and $\mathcal D$ is convex,
$d^{\pi_0}\notin\mathcal D_{\min}$ implies that there exists $d' \in \mathcal D$ with
\begin{equation}
\mathcal I(d') < \mathcal I(d^{\pi_0}).
\end{equation}
For any sufficiently small $\eta\in(0,1)$, define
\begin{equation}
d_\eta = (1-\eta)d^{\pi_0}+\eta d' \in \mathcal D.
\end{equation}
By linearity,
\begin{equation}
\mathcal I(d_\eta)
=
(1-\eta)\mathcal I(d^{\pi_0})+\eta \mathcal I(d')
<
\mathcal I(d^{\pi_0}).
\end{equation}
Since $\Psi$ is continuous and minimized at $d^{\pi_0}$, we have
\begin{equation}
\Psi(d_\eta) \to \Psi(d^{\pi_0})
\quad \text{as } \eta\to 0.
\end{equation}
Therefore, for sufficiently small $\eta>0$,
\begin{equation}
\mathcal I(d_\eta)+\omega\Psi(d_\eta)
<
\mathcal I(d^{\pi_0})+\omega\Psi(d^{\pi_0}).
\end{equation}
Thus $d^{\pi_0}$ is not the minimizer of the regularized objective, so
$d^{\mathrm{opt}}\neq d^{\pi_0}$.
Since $\Psi$ has a unique minimizer at $d^{\pi_0}$,
\begin{equation}
\Psi(d^{\mathrm{opt}})>\Psi(d^{\pi_0}).
\end{equation}
Returning to the optimality inequality, this strict increase in $\Psi$ forces
\begin{equation}
\mathcal I(d^{\mathrm{opt}})
<
\mathcal I(d^{\pi_0}).
\end{equation}
\end{proof}

\subsection{Lemma~\ref{lem:structure-dmin}}
\label{apdx:proof-structure-dmin}

\begin{proof}
Since an intervention-free policy exists and $\phi(s,a)\ge 0$, we have
\begin{equation}
d\in\mathcal D_{\min}
\quad\Longleftrightarrow\quad
\mathcal I(d)=0.
\end{equation}

First consider any $d\in\mathcal D_{\min}$. By definition of $\mathcal G(s)$, if $a\notin\mathcal G(s)$, then no distribution in $\mathcal D_{\min}$ places positive mass on $(s,a)$. Hence $d(s,a)=0$. For any state with $d(s)>0$, the policy induced by $d$ satisfies
\begin{equation}
\pi_d(a\mid s)=\frac{d(s,a)}{d(s)}=0
\qquad
\text{for all } a\notin\mathcal G(s).
\end{equation}
For states in $\mathcal S'$ with $d(s)=0$, the policy can be defined arbitrarily on $\mathcal G(s)$ and with zero probability outside $\mathcal G(s)$.
Thus, we can define a policy $\pi$ that satisfies $\pi(a \mid s) = 0$ for all $s \in \mathcal S'$ and $a \notin \mathcal G(s)$ such that $d = d^\pi$.

Conversely, suppose $\pi(a\mid s)=0$ for all $s\in\mathcal S'$ and $a\notin\mathcal G(s)$. We will show that $d^\pi\in\mathcal D_{\min}$.

For states $s\in\mathcal S'$, the policy assigns no probability to actions outside $\mathcal G(s)$ by assumption. We now show that states outside $\mathcal S'$ have zero visitation. Since an intervention-free policy exists, every state in the support of the initial distribution must belong to $\mathcal S'$; otherwise no intervention-free visitation distribution could exist.

Assume for contradiction that $d^\pi(s')>0$ for some $s'\notin\mathcal S'$. Then there must exist a transition $(s,a,s')$ with $s\in\mathcal S'$, $\pi(a\mid s)>0$, and $\mathcal T(s'\mid s,a)>0$. But because $s'\notin\mathcal S'$, no visitation distribution in $\mathcal D_{\min}$ can place mass on $s'$. Therefore no distribution in $\mathcal D_{\min}$ can place positive mass on $(s,a)$ either, since doing so would induce positive visitation of $s'$. Hence $a\notin\mathcal G(s)$, contradicting $\pi(a\mid s)>0$.

Thus $d^\pi$ assigns zero mass to every infeasible state-action pair. Since all state-action pairs with positive intervention cost are infeasible, $\mathcal I(d^\pi)=0$, and therefore $d^\pi\in\mathcal D_{\min}$.
\end{proof}

\subsection{Theorem~\ref{thm:induced-objective}}
\label{apdx:proof-induced-objective}

We first show the following proposition.
\begin{proposition}[Convexity of Conditional Reverse KL]
\label{prop:reverse-kl-convexity}
Assume $\pi_q(a\mid s)>0$ on all feasible state-action pairs.
Then $\Theta(d\parallel d^{\pi_q})$ is strictly convex in $d$.
\end{proposition}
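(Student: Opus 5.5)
The approach is to rewrite $\Theta$ as an explicit function of the occupancy vector and recognize it as a sum of perspective functions. Using $\pi_d(a\mid s)=d(s,a)/d(s)$ with $d(s)=\sum_a d(s,a)$, we get
\begin{equation}
\Theta(d\parallel d^{\pi_q}) = \sum_{s}\sum_{a} d(s,a)\log\frac{d(s,a)}{d(s)} - \sum_{s,a} d(s,a)\log\pi_q(a\mid s).
\end{equation}
The second term is linear in $d$; it is well defined because $\pi_q>0$ on feasible pairs. So convexity reduces to the first term, $F(d)=\sum_s f_s(d(s,\cdot))$ with $f_s(x)=\sum_a x_a\log(x_a/\sum_b x_b)$. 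Each $f_s$ is the perspective of the convex function $y\mapsto\sum_a y_a\log y_a$ on the simplex, equivalently $-\,(\sum_b x_b)\,\mathcal H(x/\sum_b x_b)$. Hence $f_s$ is convex and positively homogeneous of degree one. One can also see convexity directly from the log-sum inequality: $\sum_a (x_a+y_a)\log\frac{x_a+y_a}{X+Y}\le \sum_a x_a\log\frac{x_a}{X}+\sum_a y_a\log\frac{y_a}{Y}$.

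Convexity of $\Theta$ on $\mathcal D$ follows immediately. For strictness, the obstacle is that a degree-one homogeneous function is never strictly convex along rays. The log-sum inequality is an equality exactly when $x_a/X=y_a/Y$ for all $a$, i.e.\ when the two conditional policies agree at $s$. So along a segment between $d\neq d'$ in $\mathcal D$, strict inequality fails only if $\pi_d(\cdot\mid s)=\pi_{d'}(\cdot\mid s)$ at every state visited by both.

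The main step is to rule this out using the structure of $\mathcal D$. I would argue that if $\pi_d$ and $\pi_{d'}$ agree on every state with $d(s)>0$ and $d'(s)>0$, then $d=d'$. Concretely, I would define a single policy $\pi$ equal to the common conditional on jointly visited states, to $\pi_d$ on states visited only by $d$, and to $\pi_{d'}$ on states visited only by $d'$. The Bellman flow constraint $d(s)=(1-\gamma)\mu(s)+\gamma\sum_{s',a'}\mathcal T(s\mid s',a')d(s',a')$ then makes both $d$ and $d'$ solutions of the flow equation for $\pi$. That equation has a unique solution $d^\pi$ by invertibility of $I-\gamma P_\pi$.

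The delicate point is states visited by one occupancy but not the other. I expect this to need care: the flow equation for $d'$ at such a state may not be consistent with $\pi_d$ there. If that argument stalls, the fallback is to restrict the statement to the common support (or to occupancies with full state support) and prove strict convexity there. The alternative is to interpret strictness as "strict convexity in the policy," which is what the uniqueness argument in Theorem~\ref{thm:induced-objective} actually uses, since distinct occupancies with equal policies on visited states cannot occur.
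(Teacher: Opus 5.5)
Your route differs from the paper's, and yours is the one that actually proves the claim. The paper writes $\Theta=\sum_{s,a}d(s,a)\log d(s,a)-\sum_s d(s)\log d(s)-\sum_{s,a}d(s,a)\log\pi_q(a\mid s)$ and concludes ``strictly convex plus convex plus linear.'' But the middle term carries a minus sign, so it is concave. A strictly convex function plus a concave one need not even be convex.

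More fundamentally, the paper's argument uses no property of $\mathcal D$ beyond nonnegativity, and no such argument can succeed. As you note, the first two terms together are positively homogeneous of degree one. On the set of all distributions over $\mathcal S\times\mathcal A$, $\Theta$ is affine along any segment joining two distributions that share the conditionals $d(s,a)/d(s)$ but have different state marginals. Your log-sum/perspective decomposition gives convexity together with the exact equality case. Your flow-uniqueness step supplies the missing ingredient: inside $\mathcal D$, the conditionals on visited states determine $d$.

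The step you flag as delicate goes through with no extra work, so you should drop both fallbacks. The full-support fallback would in fact be useless exactly where the result is used, since elements of $\mathcal D_{\min}$ put no mass outside $\mathcal S'$.

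At a state with $d'(s)=0$ one has $d'(s,a)=0=d'(s)\pi(a\mid s)$ for every choice of $\pi(\cdot\mid s)$, and likewise for $d$. Hence your merged $\pi$ satisfies $d(s,a)=d(s)\pi(a\mid s)$ and $d'(s,a)=d'(s)\pi(a\mid s)$ for all pairs. Both occupancies satisfy the flow identity $x(s)=(1-\gamma)\mu(s)+\gamma\sum_{s',a'}\mathcal T(s\mid s',a')\,x(s',a')$ as elements of $\mathcal D$. Substituting shows that both state marginals solve $x=(1-\gamma)\mu+\gamma P_\pi^\top x$. Invertibility of $I-\gamma P_\pi^\top$ then gives $d(s)=d'(s)$ for all $s$, and so $d(s,a)=d(s)\pi(a\mid s)=d'(s,a)$.

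There is no consistency issue at one-sidedly visited states. A state with zero $d'$-mass enters the right-hand side of the flow equation for $d'$ only through terms that vanish, so the choice $\pi=\pi_d$ there is irrelevant.

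Combine this with your equality characterization: equality along a segment forces agreement of conditionals at every jointly visited state, and is automatic at one-sidedly visited states. This yields $d=d'$, hence strict convexity of $\Theta$ on $\mathcal D$. That is precisely the uniqueness invoked in Lemma~\ref{lem:path-of-reg-solutions} and Theorem~\ref{thm:induced-objective}. The hypothesis $\pi_q>0$ is needed only to keep the linear term finite.
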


\begin{proof}
Observe that 
\begin{align}
\Theta(d \parallel d^{\pi_q})
&\doteq
\E_{s\sim d}
\left[
\KL\!\left(\pi_d(\cdot\mid s)\,\|\,\pi_q(\cdot\mid s)\right)
\right] \\
&= \sum_{s,a} d(s,a) \log \frac{d(s,a)}{d(s) \pi_q(a \mid s)} \\
&=
\sum_{s,a} d(s,a)\log d(s,a)
-
\sum_s d(s)\log d(s)
-
\sum_{s,a} d(s,a)\log \pi_q(a|s).
\end{align}

The first term is strictly convex in $d$, the second is convex because $d(s)=\sum_a d(s,a)$ is linear and $x\log x$ is convex on $\mathbb R_{\ge 0}$, and the third is linear.
Therefore $\Psi(d)$ is strictly convex.
\end{proof}

Having shown that $\Theta$ is strictly convex, we can now prove Theorem~\ref{thm:induced-objective}.
\begin{proof}
From Lemma~\ref{lem:path-of-reg-solutions}, this limit is $\argmin_{d \in \mathcal D_\textrm{min}} \Theta (d \parallel d^{\pi_q})$.
For $d\in\mathcal D_{\min}$, Lemma~\ref{lem:structure-dmin} implies that the corresponding policy is supported on $\mathcal S'$ and $\mathcal G(s)$. Expanding the definition of $\Theta$ gives
\begin{equation}
\Theta(d\parallel d^{\pi_q})
=
\E_{(s,a)\sim d}
\left[
\log \pi_d(a\mid s)-\log \pi_q(a\mid s)
\right].
\end{equation}
Since
\begin{equation}
\E_{(s,a)\sim d}[\log \pi_d(a\mid s)]
=
-\E_{s\sim d}\left[\mathcal H(\pi_d(\cdot\mid s))\right],
\end{equation}
we obtain
\begin{equation}
\Theta(d\parallel d^{\pi_q})
=
-\E_{s\sim d}\left[\mathcal H(\pi_d(\cdot\mid s))\right]
-
\E_{(s,a)\sim d}[\log \pi_q(a\mid s)].
\end{equation}
Therefore minimizing $\Theta$ over $\mathcal D_{\min}$ is equivalent to maximizing
\begin{equation}
\E_{(s,a)\sim d}[\log \pi_q(a\mid s)]
+
\E_{s\sim d}\left[\mathcal H(\pi_d(\cdot\mid s))\right],
\end{equation}
which is the maximum-entropy RL objective on the constrained MDP with reward $r_q(s,a)=\log \pi_q(a\mid s)$.
Note that $\pi_q$ is defined in the original MDP, so is not necessarily a normalized policy in the constrained MDP.
\end{proof}

\subsection{Evaluating the Induced Objectives}
\label{apdx:additional-geometry-proofs}
We now quantify alignment between the induced reward and the true task reward on the constrained MDP. Let $r^\star$ denote the true reward function.

\begin{lemma}[Alignment Bound]
\label{lem:alignment-bound}
Let $\widehat \pi_q$ maximize the maximum-entropy objective with reward $r_q(s,a)$ on the constrained MDP, and let $\pi^\star$ maximize the unregularized task return under $r^\star$ on the same constrained MDP. If
\begin{equation}
\sup_{\substack{s \in \mathcal S' \\ a \in \mathcal G(s)}}
\left |r_q(s,a)-r^\star(s,a) \right|
\le \epsilon,
\end{equation}
then
\begin{equation}
\E_{d^{\pi^\star}}[r^\star]
-
\E_{d^{\widehat \pi_q}}[r^\star]
\le
2 \epsilon + \log |\mathcal A|.
\end{equation}
\end{lemma}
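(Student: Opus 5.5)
We use a standard sandwich argument in occupancy space: move from $r^\star$ to $r_q$ at cost $\epsilon$ per side, and absorb the entropy bonus into a $\log|\mathcal A|$ slack. We work with the normalized return $\E_{d}[r] = \sum_{s,a} d(s,a) r(s,a)$, where $d$ is a probability measure. Both $d^{\pi^\star}$ and $d^{\widehat\pi_q}$ are visitation distributions of policies on the constrained MDP. By Lemma~\ref{lem:structure-dmin} they are supported on pairs $(s,a)$ with $s\in\mathcal S'$ and $a\in\mathcal G(s)$. This is exactly where the uniform bound $|r_q - r^\star|\le\epsilon$ applies, so for either distribution $d$,
\begin{equation}
\bigl|\E_{d}[r_q] - \E_{d}[r^\star]\bigr| \le \epsilon .
\end{equation}

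Write the max-entropy objective on the constrained MDP in occupancy form, $F(d) \doteq \E_{d}[r_q] + \E_{s\sim d}\bigl[\mathcal H(\pi_d(\cdot\mid s))\bigr]$. This matches the form obtained in the proof of Theorem~\ref{thm:induced-objective}, with unit temperature. Since $\widehat\pi_q$ maximizes $F$ over the same feasible set that contains $d^{\pi^\star}$, we have $F(d^{\widehat\pi_q}) \ge F(d^{\pi^\star})$. Each entropy term lies in $[0,\log|\mathcal A|]$, because $\pi_d(\cdot\mid s)$ is a distribution on $\mathcal G(s)\subseteq\mathcal A$. Dropping the nonnegative entropy of $\pi^\star$ on the right and bounding the entropy of $\widehat\pi_q$ on the left gives
\begin{equation}
\E_{d^{\widehat\pi_q}}[r_q] + \log|\mathcal A| \ge \E_{d^{\pi^\star}}[r_q].
\end{equation}

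Chaining the three inequalities gives
\begin{equation}
\E_{d^{\pi^\star}}[r^\star] \le \E_{d^{\pi^\star}}[r_q] + \epsilon \le \E_{d^{\widehat\pi_q}}[r_q] + \log|\mathcal A| + \epsilon \le \E_{d^{\widehat\pi_q}}[r^\star] + 2\epsilon + \log|\mathcal A|,
\end{equation}
which is the claim.

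The step needing the most care is the normalization. The bound only comes out as $2\epsilon+\log|\mathcal A|$ if returns are measured under the normalized occupancy and the entropy coefficient is $1$, as in Theorem~\ref{thm:induced-objective}. If instead one uses the discounted-sum form with temperature $\alpha$, as in $J_\textrm{MaxEnt}$, extra factors of $1/(1-\gamma)$ and $\alpha$ appear. I would state the occupancy convention explicitly at the start and note that the other normalizations follow by rescaling.

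A secondary point is the normalization caveat noted after Theorem~\ref{thm:induced-objective}: $\pi_q$ need not be normalized on $\mathcal G(s)$. This is harmless here, because $r_q$ is used only as a reward function and the hypothesis bounds it directly on the feasible pairs.
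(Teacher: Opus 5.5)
Your proposal is correct and follows the paper's proof essentially step for step: optimality of $\widehat\pi_q$ for the entropy-regularized $r_q$ objective gives the $\log|\mathcal A|$ slack, and the uniform $\epsilon$ bound on feasible pairs is applied once under each occupancy. Your remarks on the support via Lemma~\ref{lem:structure-dmin} and on the unit-temperature, normalized-occupancy convention spell out assumptions the paper's proof uses without stating.
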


\begin{proof}
Since $\widehat \pi_q$ maximizes the maximum-entropy objective under $r_q$,
\begin{equation}
\E_{d^{\widehat \pi_q}}[r_q] - \mathcal H(d^{\widehat \pi_q})
\ge
\E_{d^{\pi^\star}}[r_q] - \mathcal H(d^{\pi^\star}).
\end{equation}
Rearranging,
\begin{equation}
\E_{d^{\pi^\star}}[r_q]
-
\E_{d^{\widehat \pi_q}}[r_q]
\le
\mathcal H(d^{\pi^\star}) - \mathcal H(d^{\widehat \pi_q})
\le
\log |\mathcal A|.
\end{equation}
The reward approximation assumption implies, for any policy $\pi$,
\begin{equation}
\left|
\E_{d^\pi}[r^\star-r_q]
\right|
\le
\epsilon.
\end{equation}
Therefore,
\begin{align*}
\E_{d^{\pi^\star}}[r^\star]
-
\E_{d^{\widehat \pi_q}}[r^\star]
&=
\E_{d^{\pi^\star}}[r^\star-r_q]
+
\E_{d^{\pi^\star}}[r_q]
-
\E_{d^{\widehat \pi_q}}[r_q]
+
\E_{d^{\widehat \pi_q}}[r_q-r^\star] \\
&\le
\epsilon
+
\log|\mathcal A|
+
\epsilon,
\end{align*}
which proves the claim.
\end{proof}

This bound separates two effects. The approximation term measures alignment between the induced reward $r_q$ and the true task reward on the constrained MDP. The entropy term reflects the fact that the reverse-KL selector solves a maximum-entropy problem rather than an unregularized one. If the comparison is made against the entropy-regularized task objective, the entropy term cancels and the bound becomes $2\epsilon$.

No reference policy is universally optimal. This is not a weakness of KL specifically, but a consequence of under-specification.

\begin{proposition}[No Universal Selector]
\label{prop:no-universal-selector}
Suppose $\mathcal D_{\min}$ contains at least two distinct visitation distributions. For any selector $f$ that returns a single $d_f\in\mathcal D_{\min}$, there exists a linear reward $r$ for which $d_f$ is not optimal over $\mathcal D_{\min}$.
\end{proposition}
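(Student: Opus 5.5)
The plan is to build, for the given selector output $d_f$, a linear reward that strictly prefers some other point of $\mathcal D_{\min}$. Write $r$ as a vector in $\mathbb R^{|\mathcal S|\cdot|\mathcal A|}$ and the return as $\langle d, r\rangle$. Since $\mathcal D_{\min}$ contains at least two distinct visitation distributions, I will pick some $d' \in \mathcal D_{\min}$ with $d' \neq d_f$. Such a $d'$ exists: if $d_f$ coincides with one of the two given distinct elements, take the other.

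The key step is the choice $r \doteq d' - d_f$. Then
\begin{equation}
\langle d', r\rangle - \langle d_f, r\rangle = \langle d' - d_f, d' - d_f\rangle = \|d' - d_f\|_2^2 > 0,
\end{equation}
so $d'$ achieves strictly higher return than $d_f$ under $r$. Because $d' \in \mathcal D_{\min}$, this means $d_f$ does not maximize $\langle \cdot, r\rangle$ over $\mathcal D_{\min}$. This holds whether or not any maximizer over $\mathcal D_{\min}$ exists, although one does exist since $\mathcal D_{\min}$ is compact (it is a closed face of the bounded polytope $\mathcal D$).

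Two remarks tie this to the paper's setting. First, $r$ is an arbitrary real-valued function on $\mathcal S\times\mathcal A$. If bounded rewards in a fixed range are required, one can rescale by a positive constant and add a constant offset. Adding a constant $c$ changes every return on $\mathcal D$ by the same amount, because $\sum_{s,a} d(s,a)=1$, so strict preferences are unchanged. Second, the selector may depend on anything, including $\pi_0$ or $\phi$, but not on $r$. The reward is chosen after $d_f$ is fixed, which matches the quantifier order in the statement. This is why no reference policy in Theorem~\ref{thm:induced-objective} can be universally optimal.

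I expect no real obstacle. The one point needing care is making the quantifier order explicit, so that the argument does not look circular, and noting that a single distinct $d'$ suffices. There is no need to argue about vertices or the dimension of the face.
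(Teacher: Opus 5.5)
Your proof is correct and follows the same approach as the paper: take a second element $d' \neq d_f$ of $\mathcal D_{\min}$ and choose a reward with $\langle d' - d_f, r\rangle > 0$. The only cosmetic difference is the choice of reward. The paper uses the signed indicator of a single state-action pair where $d'$ and $d_f$ differ, giving $|d'(s,a) - d_f(s,a)|$, while you use $r = d' - d_f$ itself, giving $\|d' - d_f\|_2^2$.
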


\begin{proof}
Choose any $d'\in\mathcal D_{\min}$ with $d'\neq d_f$. Since the two visitation distributions differ, there exists a state-action pair $(s,a)$ such that $d'(s,a)\neq d_f(s,a)$. Let $c = 1$ if $d'(s, a) > d_f(s, a)$ and -1 otherwise.
Define a reward function
\begin{equation}
r(s', a') = \begin{cases}
c & s = s' \land a = a' \\
0 & \textrm{otherwise}
\end{cases}
\end{equation}

Then
\begin{equation}
\langle d'-d_f,r\rangle = \left| d'(s, a) - d_f(s, a) \right| > 0.
\end{equation}
Thus $d_f$ is not optimal for reward $r$ over $\mathcal D_{\min}$.
\end{proof}

The dependence on the regularizer vanishes when the intervention signal is sufficiently informative.

\begin{proposition}[Informativeness Regime]
\label{prop:informativeness}
If
\begin{equation}
\sup_{d,d'\in\mathcal D_{\min}}
\left|
\langle d-d',r^\star\rangle
\right|
\le
\epsilon,
\end{equation}
then every selector returning an element of $\mathcal D_{\min}$ is $\epsilon$-optimal under $r^\star$.
\end{proposition}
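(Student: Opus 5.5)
The plan is to make \emph{$\epsilon$-optimal} precise and then compare any selected point against an optimizer within $\mathcal D_{\min}$ itself. The property we want is that, for any selector output $d_f\in\mathcal D_{\min}$,
\begin{equation}
\sup_{d\in\mathcal D_{\min}}\langle d,r^\star\rangle-\langle d_f,r^\star\rangle\le\epsilon .
\end{equation}
This is the notion matching the informativeness measure of Sec.~\ref{sec:geometry-intro}. It is also the sense in which Proposition~\ref{prop:no-universal-selector} is stated, namely optimality over $\mathcal D_{\min}$.

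First I would note that $\mathcal D_{\min}$ is a nonempty compact set. Compactness holds because $\mathcal D_{\min}$ is the intersection of the bounded polytope $\mathcal D$ with a supporting hyperplane (the proposition on the minimal-intervention face). Since $d\mapsto\langle d,r^\star\rangle$ is linear, hence continuous, it attains its maximum on $\mathcal D_{\min}$ at some $d^\dagger$. If one prefers to avoid attainment, it suffices to take a maximizing sequence $d_n\in\mathcal D_{\min}$ and pass to the supremum at the end.

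Next, fix any selector output $d_f\in\mathcal D_{\min}$. Apply the hypothesis with the pair $d=d^\dagger$ and $d'=d_f$, both of which lie in $\mathcal D_{\min}$. This gives
\begin{equation}
\langle d^\dagger,r^\star\rangle-\langle d_f,r^\star\rangle\le\left|\langle d^\dagger-d_f,r^\star\rangle\right|\le\epsilon .
\end{equation}
The left-hand side is exactly the suboptimality gap of $d_f$ over $\mathcal D_{\min}$, so $d_f$ is $\epsilon$-optimal. The argument uses nothing about $f$, so it applies to every selector, including the RLIF and RIFT limits from Theorem~\ref{thm:induced-objective}.

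There is no real obstacle here; the argument is essentially one line. The only subtle point is interpretation. If $\epsilon$-optimality were meant relative to all of $\mathcal D$, the statement would require the optimum of $r^\star$ to lie on $\mathcal D_{\min}$, for instance when $d^{\pi^\star}\in\mathcal D_{\min}$ because the expert never triggers interventions. I would state that assumption explicitly as a remark, since then the same inequality with $d=d^{\pi^\star}$ gives $\epsilon$-optimality globally.
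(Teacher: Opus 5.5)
Your proposal is correct and follows essentially the same argument as the paper: take a maximizer of $\langle d, r^\star\rangle$ over $\mathcal D_{\min}$, then apply the hypothesis to that maximizer and the selected point to bound the gap by $\epsilon$. The paper assumes the maximizer exists; your compactness argument (or the maximizing-sequence fallback) justifies this, and your reading of $\epsilon$-optimality as relative to $\mathcal D_{\min}$ is the one the paper intends.
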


\begin{proof}
Let $d^\star\in\argmax_{d\in\mathcal D_{\min}}\langle d,r^\star\rangle$ and let $\hat d\in\mathcal D_{\min}$ be any selected visitation distribution. By assumption, $\langle d^\star-\hat d,r^\star\rangle \le \epsilon$. Therefore $\hat d$ is $\epsilon$-optimal.
\end{proof}

Taken together, these results separate the roles of intervention and regularization. Intervention defines the feasible geometry through the constrained MDP. Regularization induces an objective on this reduced domain. The quality of the selected policy depends on whether that induced objective is aligned with the task structure that remains after interventions have ruled out inadmissible behavior.

\subsection{Lemma~\ref{lem:rql-equivalence}}
\label{apdx:proof-rql-equivalence}

\begin{proof}
We begin by rewriting the constrained RL problem.
\begin{align}
&\argmax_{\pi} - \E_{s_t, a_t} \left[ \sum_{t=0}^\infty \gamma^t \phi(s_t, a_t) \right] - \omega \E_{s_t} \left[ \sum_{t=0}^\infty \gamma^t D_\textrm{KL} \left[ \pi ( s_t) \parallel \pi_0( s_t) \right] \right] \\
&= \argmax_{\pi} - \E_{s_t, a_t} \left[ \sum_{t=0}^\infty \gamma^t \phi(s_t, a_t) \right] - \\
&\qquad \omega \E_{s_t} \left[ \sum_{t=0}^\infty \gamma^t \sum_{a \in \mathcal A} \pi(a \mid s_t) \left( \log \pi(a \mid s_t) - \log \pi_0(a \mid s_t) \right) \right] \nonumber \\
&= \argmax_{\pi} -\E_{s_t, a_t} \left[ \sum_{t=0}^\infty \gamma^t \phi(s_t, a_t) \right] - \omega \E_{s_t} \left[ \sum_{t=0}^\infty \gamma^t \sum_{a \in \mathcal A} \pi(a \mid s_t) \log \pi(a \mid s_t) \right] + \\
&\qquad \omega \E_{s_t, a_t} \left[ \sum_{t=0}^\infty \gamma^t \log \pi_0(a_t \mid s_t) \right] \nonumber \\
&= \argmax_{\pi} \E_{s_t, a_t} \left[ \sum_{t=0}^\infty \gamma^t \left( -\phi(s_t, a_t) + \omega \log \pi_0 (a_t \mid s_t) \right) \right] + \omega \E_{s_t} \left[ \sum_{t=0}^\infty \gamma^t \mathcal H \left[ \pi( s_t) \right] \right] \\
&= \argmax_{\pi} \E_{s_t, a_t} \left[ \sum_{t=0}^\infty \gamma^t \left( \underbrace{-\phi(s_t, a_t) + \omega \log \pi_0 (a_t \mid s_t)}_{\tilde r(s_t, a_t)} + \omega \mathcal H \left[ \pi(  s_t) \right] \right) \right] \\
&= \argmax_{\pi} J_\textrm{MaxEnt}(\pi \mid \tilde r)
\end{align}

Let $\pi = \textrm{RQL}(\pi_0, -\phi)$. We will show that $\pi = \argmax_{\pi'} J_\textrm{MaxEnt} (\pi' \mid \tilde r)$.
By the definition of Residual Q-Learning, we have that $\pi = \argmax_{\pi'} J_\textrm{MaxEnt}(\pi \mid r_0 - \phi)$ for any $r_0$ such that $\pi_0 = \argmax_{\pi'} J_\textrm{MaxEnt}(\pi' \mid r_0)$.
It therefore suffices to show that $\pi_0$ is optimal for the reward function $r_0 (s, a) := \alpha \log \pi_0 (a \mid s)$.
Expanding out the max-ent objective,
\begin{align}
&\argmax_{\pi'} J_\textrm{MaxEnt}(\pi' \mid r_0) \\
&= \argmax_{\pi'} \E_{s_t, a_t} \left[ \sum_{t=0}^\infty \gamma^t \left( \alpha \log \pi_0 (a_t \mid s_t) + \alpha \mathcal H \left[ \pi' ( s_t) \right] \right) \right] \\
&= \argmax_{\pi'} \E_{s_t, a_t} \left[ \sum_{t=0}^\infty \gamma^t \left( \log \pi_0 (a_t \mid s_t) - \log \pi' (a_t \mid s_t) \right) \right] \\
&= \argmin_{\pi'} \E_{s_t, a_t} \left[ \sum_{t=0}^\infty \gamma^t \left( \log \pi' (a_t \mid s_t) - \log \pi_0 (a_t \mid s_t) \right) \right] \\
&= \argmin_{\pi'} \E_{s_t} \left[ \sum_{t=0}^\infty \gamma^t D_\textrm{KL} \left[ \pi' ( s_t) \parallel \pi_0 ( s_t) \right] \right] \\
&= \argmin_{\pi'} \sum_{s \in \mathcal S} d^{\pi'} (s) D_\textrm{KL} \left[ \pi' ( s_t) \parallel \pi_0 ( s_t) \right] \\
&= \pi_0
\end{align}
\end{proof}

\subsection{Theorem \ref{thm:grad-kl-div}}
\label{apdx:proof-grad-kl-div}

We first compute the derivative of this with respect to $\hat Q$.
To describe $|\mathcal S|\cdot |\mathcal A| \times |\mathcal S| \cdot |\mathcal A|$ matrices with a row and column for each state-action pair, we will enumerate all such pairs $(s_i, a_i)$ for $i \in \{ 1, \ldots, |\mathcal S| \cdot |\mathcal A| \}$.

\begin{lemma} \label{psi-q-deriv}
Let $\hat Q$ be a Q-function corresponding to $\hat \pi$.
The partial derivative of $\Psi$ with respect to $\hat Q(s, a)$ is
\begin{equation}
\frac{\partial \, \Psi}{\partial \, \hat Q(s, a)} = \frac{d^{\pi_E}(s)}{\alpha } \left( \hat \pi(a \mid s) - \pi_E(a \mid s) \right)
\end{equation}
\end{lemma}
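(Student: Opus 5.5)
Here $\Psi$ means $\Lambda(\hat\pi)=\sum_{s} d^{\pi_E}(s)\,\KL[\pi_E(s)\parallel\hat\pi(s)]$, and $\hat\pi$ is the soft-greedy policy of $\hat Q$, i.e. $\hat\pi(a\mid s)=\exp\big(\hat Q(s,a)/\alpha\big)/\sum_{b}\exp\big(\hat Q(s,b)/\alpha\big)$. The key observation is that $d^{\pi_E}$ and $\pi_E$ do not depend on $\hat Q$. Only the factor $\log\hat\pi$ inside the KL varies, so the derivative is a plain softmax computation with no chain rule through occupancy measures.

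First expand
\[
\Lambda(\hat\pi)=\sum_{s'} d^{\pi_E}(s')\sum_{a'}\pi_E(a'\mid s')\big(\log\pi_E(a'\mid s')-\log\hat\pi(a'\mid s')\big).
\]
The $\log\pi_E$ terms are constant in $\hat Q$ and drop out. Next write
\[
\log\hat\pi(a'\mid s')=\frac{\hat Q(s',a')}{\alpha}-\log\sum_{b}\exp\frac{\hat Q(s',b)}{\alpha}.
\]
From this, $\hat Q(s,a)$ enters only through the terms with $s'=s$. Differentiating gives
\[
\frac{\partial\log\hat\pi(a'\mid s)}{\partial\hat Q(s,a)}=\frac{1}{\alpha}\big(\mathbbm 1[a'=a]-\hat\pi(a\mid s)\big).
\]

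Now substitute this into the expansion. Since $\sum_{a'}\pi_E(a'\mid s)=1$, we have
\[
-\sum_{a'}\pi_E(a'\mid s)\,\frac{1}{\alpha}\big(\mathbbm 1[a'=a]-\hat\pi(a\mid s)\big)=\frac{1}{\alpha}\big(\hat\pi(a\mid s)-\pi_E(a\mid s)\big).
\]
Multiplying by the weight $d^{\pi_E}(s)$ gives the claimed expression.

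The main obstacle is interpretive rather than computational, so I would fix the conventions up front. I treat $\hat Q$ as a free variable with $\hat\pi$ defined as its Boltzmann policy, which matches the RQL update $\pi_1\propto\exp\widetilde Q_R$. I do not treat $\hat Q$ as a fixed point that depends on $\hat\pi$. I also note that the softmax is invariant to per-state shifts of $\hat Q$, so the gradient sums to zero over $a$; the formula satisfies this. This lemma is the first link in the chain $\hat r\to\hat Q\to\hat\pi$ that Theorem~\ref{thm:grad-kl-div} needs.
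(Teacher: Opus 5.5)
Your proof is correct and is essentially the paper's argument, including reading $\Psi$ as $\Lambda(\hat\pi)$ and taking $\hat\pi$ to be the temperature-$\alpha$ softmax of $\hat Q$. The only difference is cosmetic: the paper computes $\partial\Psi/\partial\hat\pi(a'\mid s')$ and multiplies by the softmax Jacobian, cancelling the factor $\hat\pi(a'\mid s')$, whereas you differentiate the log-softmax directly, which skips that cancellation.
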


\begin{proof}
Expanding out the definition of the KL divergence,
\begin{equation}
\Psi = \sum_{s \in \mathcal S} d^{\pi_E}(s) \sum_{a \in \mathcal A} \pi_E(a \mid s) \left( \log \pi_E(a \mid s) - \log \hat \pi(a \mid s) \right) \nonumber
\end{equation}
Thus,
\begin{equation}
\frac{\partial \, \Psi}{\partial \, \hat \pi(a \mid s)} = \frac{- d^{\pi_E}(s) \pi_E(a \mid s)}{\hat \pi(a \mid s)}
\end{equation}

For the second partial derivative, note that $\hat \pi(\cdot \mid s)$ is the softmax of $\hat Q(s, \cdot)$:
\begin{equation}
\hat \pi (a \mid s) = \frac{\exp \left( \frac{1}{\alpha} \hat Q(s, a) \right)}{\sum_{a'} \exp \left( \frac{1}{\alpha} \hat Q(s, a') \right)}
\end{equation}
and so
\begin{equation}
\frac{\partial\, \hat \pi(a_i \mid s_i)}{\partial\,  \hat Q(s_j, a_j)} = \mathbf{1}[s_i = s_j] \cdot \frac{1}{\alpha} \hat \pi(a_i \mid s_i) \cdot \left( \mathbf{1}[a_i = a_j] - \hat \pi(a_j \mid s_j) \right)
\end{equation}

Combining these two derivatives with the chain rule,
\begin{align}
&\frac{\partial \, \Psi}{\partial \, \hat Q(s, a)} \\
&= \sum_{\substack{s' \in \mathcal S \\ a' \in \mathcal A}} \frac{\partial \, \Psi}{\partial \, \hat \pi(a' \mid s')} \frac{\partial \, \hat \pi (a' \mid s')}{\partial \, \hat Q(s, a)} \\
&= \sum_{\substack{s' \in \mathcal S \\ a' \in \mathcal A}} \frac{- d^{\pi_E}(s') \pi_E(a' \mid s')}{\hat \pi(a' \mid s')} \cdot \mathbf{1}[s = s'] \cdot \frac{1}{\alpha} \hat \pi(a' \mid s') \cdot \left( \mathbf{1}[a = a'] - \hat \pi(a \mid s) \right) \\
&= \frac{- d^{\pi_E}(s)}{\alpha} \sum_{a' \in \mathcal A}  \pi_E(a' \mid s) \cdot \left( \mathbf{1}[a = a'] - \hat \pi(a \mid s) \right) \\
&= \frac{d^{\pi_E} (s)}{\alpha} \left( \hat \pi (a \mid s) - \pi_E(a \mid s) \right)
\end{align}
\end{proof}

\begin{lemma}
Let $\hat r$ be a reward function with corresponding soft-Q function $\hat Q$ and policy $\hat \pi$.
Let $W_{\hat \pi}$ be the state-action transition matrix given by
\begin{equation}
W_{\hat \pi} [i, j] = \mathcal T(s_j \mid s_i, a_i) \hat \pi(a_j \mid s_j)
\end{equation}
Then the Jacobian $J$ with entries 
\begin{equation}
J[i, j] = \frac{\partial \, \hat Q(s_i, a_i)}{\partial \, \hat r(s_j, a_j)}
\end{equation}
is equal to
\begin{equation}
J = \left( \mathbb I - \gamma W_{\hat \pi} \right)^{-1}
\end{equation}
\end{lemma}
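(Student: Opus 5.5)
We prove the Jacobian identity by differentiating the soft Bellman equation that characterizes $\hat Q$ as a fixed point, and then solving the resulting linear system. With temperature $\alpha$, the soft-optimal $\hat Q$ for reward $\hat r$ satisfies
\begin{equation*}
\hat Q(s,a) = \hat r(s,a) + \gamma \sum_{s'} \mathcal T(s' \mid s,a)\, \hat V(s'), \qquad \hat V(s') = \alpha \log \sum_{a'} \exp\!\left(\tfrac{1}{\alpha}\hat Q(s',a')\right).
\end{equation*}
This identifies $\hat Q$ as the unique fixed point of a $\gamma$-contraction $\mathcal B_{\hat r}$ in the sup norm. The map $(\hat r, \hat Q) \mapsto \hat Q - \mathcal B_{\hat r}\hat Q$ is smooth, because log-sum-exp is smooth. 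We may therefore differentiate the fixed-point relation implicitly; the implicit function theorem, justified in the last step, guarantees that $\hat Q$ is differentiable in $\hat r$.

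\textbf{Step 1: derivative of the soft value.} The derivative of log-sum-exp is a softmax:
\begin{equation*}
\frac{\partial \hat V(s')}{\partial \hat Q(s',a')} = \hat\pi(a' \mid s'),
\end{equation*}
and $\hat V(s')$ does not depend on $\hat Q(s'',\cdot)$ for $s'' \neq s'$. Applying the chain rule to the Bellman equation, differentiated with respect to $\hat r(s_j,a_j)$, gives
\begin{equation*}
J[i,j] = \mathbf 1[i=j] + \gamma \sum_{k} \mathcal T(s_k \mid s_i,a_i)\,\hat\pi(a_k \mid s_k)\, J[k,j],
\end{equation*}
where $k$ runs over state-action pairs. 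The coefficient of $J[k,j]$ is exactly $W_{\hat\pi}[i,k]$. In matrix form this reads $J = \mathbb I + \gamma W_{\hat\pi} J$, equivalently
\begin{equation*}
(\mathbb I - \gamma W_{\hat\pi})\, J = \mathbb I .
\end{equation*}

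\textbf{Step 2: invertibility.} This is the only point needing care. The matrix $W_{\hat\pi}$ is row-stochastic: its entries are nonnegative, and each row sums to
\begin{equation*}
\sum_{s_j} \mathcal T(s_j \mid s_i,a_i) \sum_{a_j} \hat\pi(a_j \mid s_j) = 1 .
\end{equation*}
Hence $\|\gamma W_{\hat\pi}\|_\infty = \gamma < 1$. The Neumann series $\sum_t \gamma^t W_{\hat\pi}^t$ therefore converges, and $\mathbb I - \gamma W_{\hat\pi}$ is invertible. This gives $J = (\mathbb I - \gamma W_{\hat\pi})^{-1}$.

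The same invertibility supplies the nonsingularity hypothesis of the implicit function theorem. The partial Jacobian of $\hat Q - \mathcal B_{\hat r}\hat Q$ with respect to $\hat Q$ is precisely $\mathbb I - \gamma W_{\hat\pi}$, so the implicit differentiation in Step 1 is valid rather than merely formal.
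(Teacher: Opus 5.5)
Your proof is correct, and it takes a different route from the paper. The paper gives only a citation to Chapter 1 of \citet{rl_theory_book}. The fact established there is the fixed-policy Bellman consistency identity $Q^\pi = (\mathbb I - \gamma W_\pi)^{-1} r$, and the Jacobian is read off from its linearity in $r$. That route is shorter, but taken literally it covers only a policy held fixed. For soft Q-functions one must also note that the entropy bonus does not depend on $r$ once $\pi$ is fixed. In the paper's application (Theorem~\ref{thm:grad-kl-div}), $\hat\pi$ is the softmax of $\hat Q/\alpha$ and so moves with $\hat r$. The citation alone therefore needs the extra observation that this movement contributes nothing at first order, an envelope-type fact that holds because $\hat\pi$ maximizes the soft Q-values. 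Your implicit differentiation of the soft Bellman optimality equation supplies this directly: the identity $\partial \hat V(s')/\partial \hat Q(s',a') = \hat\pi(a'\mid s')$ is what makes the optimal-$\hat Q$ Jacobian coincide with the fixed-policy one. Your implicit-function-theorem step, with uniqueness of the fixed point from the contraction, also establishes differentiability rather than assuming it. Finally, your Neumann-series argument yields $(\mathbb I - \gamma W_{\hat\pi})^{-1} = \sum_t \gamma^t W_{\hat\pi}^t$, which is exactly the expansion the paper uses in the proof of Theorem~\ref{thm:grad-kl-div}.
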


\begin{proof}
See \citet{rl_theory_book}, Chapter 1.
\end{proof}

Combining these two gradients, 
\begin{equation} \label{eqn:grad-psi-r-unsimplified}
\nabla_{\hat r} \Psi = \left( \nabla_{\hat Q} \Psi \right)^\top J 
\end{equation}

We now prove Theorem~\ref{thm:grad-kl-div}.

\begin{proof}
Observe that the state-action visitation vectors satisfy
\begin{equation}
\mathbf d^\pi_{t+1} = \mathbf d^\pi_t W_\pi = \mathbf d_0^\pi W_\pi^{t+1}
\end{equation}
for any policy $\pi$.

In the following equation we use $\times$ to denote matrix multiplication and the notation $\left[ f(s_j, a_j )\right]$ to denote a $|\mathcal S| \cdot |\mathcal A|$ dimensional row vector whose $j$\textsuperscript{th} entry is given by $f$.
We can simplify (\ref{eqn:grad-psi-r-unsimplified}) as follows.
\begin{align}
&\nabla_{\hat r} \Psi \\
&=\nabla_{\hat r} \sum_{s \in \mathcal S} d^{\pi_E} (s) D_\textrm{KL} \left[ \pi_E( s) \parallel \hat \pi( s) \right] \\
&= \left[ \frac{d^{\pi_E}(s_j)}{\alpha} \left( \hat \pi(a_j \mid s_j) - \pi_E(a_j \mid s_j) \right) \right] \times \left( \mathbb I - \gamma W_{\hat \pi} \right)^{-1} \nonumber \\
&= \frac{1 - \gamma}{\alpha} \cdot \left( \sum_{t=0}^\infty \left[ \gamma^t d_t^{\pi_E} (s_j) \left( \hat \pi(a_j \mid s_j) - \pi_E(a_j \mid s_j) \right) \right] \right) \times \left( \sum_{t'=0}^\infty \gamma^{t'} W_{\hat \pi}^{t'} \right) \\
&= \frac{1 - \gamma}{\alpha} \cdot \left( \sum_{t=0}^\infty\left[  \gamma^t d_t^{\pi_E} (s_j) \hat \pi(a_j \mid s_j) - \gamma^t d_t^{\pi_E} (s_j, a_j) \right] \right) \times \left( \sum_{t'=0}^\infty \gamma^{t'} W_{\hat \pi}^{t'} \right)
\end{align}
Changing the indexing over the product of terms from $t$ and $t'$ to $x=t + t'$ and $y=t$ yields
\begin{align}
&= \frac{1 - \gamma}{\alpha} \sum_{x=0}^\infty \sum_{y=0}^x \left[ \gamma^y d_y^{\pi_E} (s_j) \hat \pi(a_j \mid s_j) - \gamma^y d_y^{\pi_E} (s_j, a_j) \right] \times \gamma^{x-y} W_{\hat \pi}^{x - y} \\
&= \frac{1 - \gamma}{\alpha} \sum_{x=0}^\infty \gamma^x \sum_{y=0}^x \left[ d_y^{\pi_E} (s_j) \hat \pi(a_j \mid s_j) - d_y^{\pi_E} (s_j, a_j) \right] \times  W_{\hat \pi}^{x - y} \\
&= \frac{1 - \gamma}{\alpha} \sum_{x=0}^\infty \gamma^x \left( \left[ d_0^{\pi_E} (s_j) \hat \pi(a_j \mid s_j) - d_0^{\pi_E} (s_j, a_j) \right] W_{\hat \pi}^x  \sum_{y=1}^x \left[ d_y^{\pi_E} (s_j) \hat \pi(a_j \mid s_j) -  d_y^{\pi_E} (s_j, a_j) \right] W_{\hat \pi}^{x - y} \right) \\
&= \frac{1 - \gamma}{\alpha} \sum_{x=0}^\infty \gamma^x \left( \left[ d_0^{\hat \pi} (s_j, a_j) - d_0^{\pi_E} (s_j, a_j) \right] W_{\hat \pi}^x  \sum_{y=1}^x \left[ d_y^{\pi_E} (s_j) \hat \pi(a_j \mid s_j) -  d_y^{\pi_E} (s_j, a_j) \right] W_{\hat \pi}^{x - y} \right)
\end{align}
We can rewrite this using the visitation vectors as
\begin{align}
&= \frac{1 - \gamma}{\alpha} \sum_{x = 0}^\infty \gamma^x \left( \mathbf d_0^{\hat \pi} W_{\hat \pi}^x - \mathbf d_0^{\pi_E} W_{\hat \pi}^x + \sum_{y=1}^x \left( \mathbf d_{y-1}^{\pi_E} W_{\hat \pi} - \mathbf d_y^{\pi_E} \right)  W_{\hat \pi}^{x - y} \right) \\
&= \frac{1 - \gamma}{\alpha} \sum_{x = 0}^\infty \gamma^x \left( \mathbf d_0^{\hat \pi} W_{\hat \pi}^x - \mathbf d_0^{\pi_E} W_{\hat \pi}^x + \sum_{y=1}^x \left( \mathbf d_{y-1}^{\pi_E} W_{\hat \pi}^{x-y+1} - \mathbf d_y^{\pi_E} W_{\hat \pi}^{x - y} \right) \right) 
\end{align}
Recognizing this as a telescoping sum yields the final result.
\begin{align}
&= \frac{1 - \gamma}{\alpha} \sum_{x = 0}^\infty \gamma^x \left( \mathbf d_x^{\hat \pi} - \mathbf d_x^{\pi_E}  \right) \\
&= \frac{1}{\alpha} \left( \mathbf d^{\hat \pi} - \mathbf d^{\pi_E} \right)
\end{align}
\end{proof}

\section{Experiments}

\subsection{Experimental Setup}
\label{apdx:experimental-setup}

\paragraph{Environments}
We perform experiments using Gymnasium environments \citep{gym_towers_2024} and algorithms based on Stable Baselines 3 \citep{stable-baselines3}.

\paragraph{Compute Resources}
All experiments are run on a single Titan RTX or equivalent GPU, with each experiment taking no more than several hours to run. We repeat experiments (typically 5-10 times) with different random seeds to estimate confidence intervals.
We estimate the total amount of compute for this paper to be 1 month of GPU time.
Often, several experiment jobs can be run in parallel on a single GPU to improve efficiency. 

\paragraph{Input Policies}
The optimal policy $\pi^*$ is trained using standard RL on the true reward function.
From the optimal policy $\pi^*$, we create demonstration datasets and run behavior cloning \citep{pomerleau1988alvinn} to get prior policies $\pi_0$ of various levels of ability.

\paragraph{Algorithms}
We use Residual SAC \citep{sac_haarnoja_2018, rql_li_2023} as the implementation for RIFT due to its improved stability.
For a given environment, we use the SAC parameters recorded in RL Zoo \citep{rl-zoo3}, with no adjustments made for these intervention experiments.
The SAC actor network is initialized to $\pi_0$.
We parametrize our experiments by scaling the prior policy coefficient $\omega$.
Unless stated otherwise, we use $\omega = 0.001$ for RIFT.
Note that the magnitude of $\omega$ is affected by the magnitude of uncertainty in the prior policy, as described in Sec.~\ref{apdx:action_distribution}.
Throughout, we treat RLIF~\citep{rlif_luo_2024} as the unregularized baseline corresponding to $\omega = 0$.

\paragraph{Evaluation} For all experiments, we evaluate the policy without the expert intervention model (i.e. in the original environment) and measure mean episode reward and success rate (where the definition of success depends on the specific environment).
We also evaluate the policy with interventions to measure the intervention rate, defined as whether the expert intervenes at any point in the full rollout.
All evaluation metrics use deterministic actions (i.e. taking the mode of the action distribution), while training uses stochastic actions (i.e. sampling from the action distribution).
All quantitative metrics are reported with 95\% confidence intervals assuming normally distributed measurements.

\paragraph{Critic Warm-up}
We find it beneficial to warm-up the critic network by keeping the actor network frozen for a period at the start of training.
We train for 2,500,000 steps total and freeze the actor for the first 200,000.
This freeze period was determined by running RQL with zero residual reward and finding the parameters that kept the performance stable.

\begin{table}[]
    \centering
    \caption{Success rate under intervention delay.}
    \begin{tabular}{rll}
    \toprule
       Delay & RLIF        & RIFT        \\
    \midrule
           0 & 0.62 ± 0.16 & \textbf{0.99} ± 0.00 \\
           4 & 0.46 ± 0.21 & \textbf{0.99} ± 0.00 \\
           8 & 0.27 ± 0.25 & \textbf{1.00} ± 0.00 \\
          12 & 0.44 ± 0.19 & \textbf{0.99} ± 0.01 \\
          16 & 0.39 ± 0.20 & \textbf{0.99} ± 0.00 \\
          20 & 0.35 ± 0.27 & \textbf{0.99} ± 0.00 \\
          32 & 0.01 ± 0.02 & \textbf{0.99} ± 0.00 \\
          64 & 0.02 ± 0.03 & \textbf{0.99} ± 0.01 \\
    \bottomrule
    \end{tabular}
    \label{tab:delay}
\end{table}

\subsection{Environment Specific Details}
\label{apdx:environment-specifics}

\paragraph{Lunar Lander}
We use the SAC parameters from RL Zoo \citep{rl-zoo3} as shown in Table~\ref{table:lunar-lander}.
All other hyperparameters use the default values from the SAC implementation \citep{stable-baselines3}.
For RIFT we use $\omega = 0.001$.
We use the continuous action space version of the environment by setting \texttt{continuous=True}.
Success is defined as achieving an episode reward of 200 or better, as described in the environment documentation \citep{gym_towers_2024}.
For the intervention model we use $B=5,3,1$ for low, medium, and high informative interventions, respectively.

\begin{table}
\centering
\caption{Hyperparameters from RL Zoo for Lunar Lander and Bipedal Walker. All other hyperparameters use the SAC defaults.}
\begin{subtable}{0.48\textwidth}
  \caption{Hyperparameters for Lunar Lander}
  \label{table:lunar-lander}
  \begin{center}
        \begin{tabular}{lc}
          \toprule
          Parameter & Value \\
          \midrule
          Batch Size & 256 \\
          Buffer Size & 1,000,000 \\
          Entropy Coefficient & auto \\
          Gamma & 0.99 \\
          Gradient Steps & 1 \\
          Learning Rate & 7.3e-4 \\
          Learning Starts & 10,000 \\
          Policy Network Architecture & [400, 300] \\
          Tau & 0.01 \\
          Train Frequency & 1 \\
          \bottomrule
        \end{tabular}
  \end{center}
\end{subtable}
\hfill
\begin{subtable}{0.48\textwidth}
  \caption{Hyperparameters for Bipedal Walker}
  \label{table:bipedal-walker}
  \begin{center}
        \begin{tabular}{lc}
          \toprule
          Parameter & Value \\
          \midrule
          Batch Size & 256 \\
          Buffer Size & 300,000 \\
          Entropy Coefficient & auto \\
          Gamma & 0.98 \\
          Gradient Steps & 64 \\
          Learning Rate & 7.3e-4 \\
          Learning Starts & 10,000 \\
          Policy Network Architecture & [400, 300] \\
          Tau & 0.02 \\
          Train Frequency & 64 \\
          \bottomrule
        \end{tabular}
  \end{center}
\end{subtable}
\end{table}

\paragraph{Half Cheetah}
We follow RL Zoo \citep{rl-zoo3} and use the default SAC parameters with the exception of setting \texttt{learning\_starts} to 10,000.
We freeze the actor for the first 400,000 steps.
For RIFT we use $\omega = 0.01$.

\paragraph{Bipedal Walker}
We use the SAC parameters from RL Zoo \citep{rl-zoo3} as shown in Table~\ref{table:bipedal-walker}.
All other hyperparameters use the default values from the SAC implementation \citep{stable-baselines3}.
For RIFT we use $\omega = 0.01$.
We freeze the actor for the first 400,000 steps.
Success is defined as achieving an episode reward of 300 or better, as described in the environment documentation \citep{gym_towers_2024}.

\begin{figure}
    \centering
    \includegraphics[width=\textwidth]{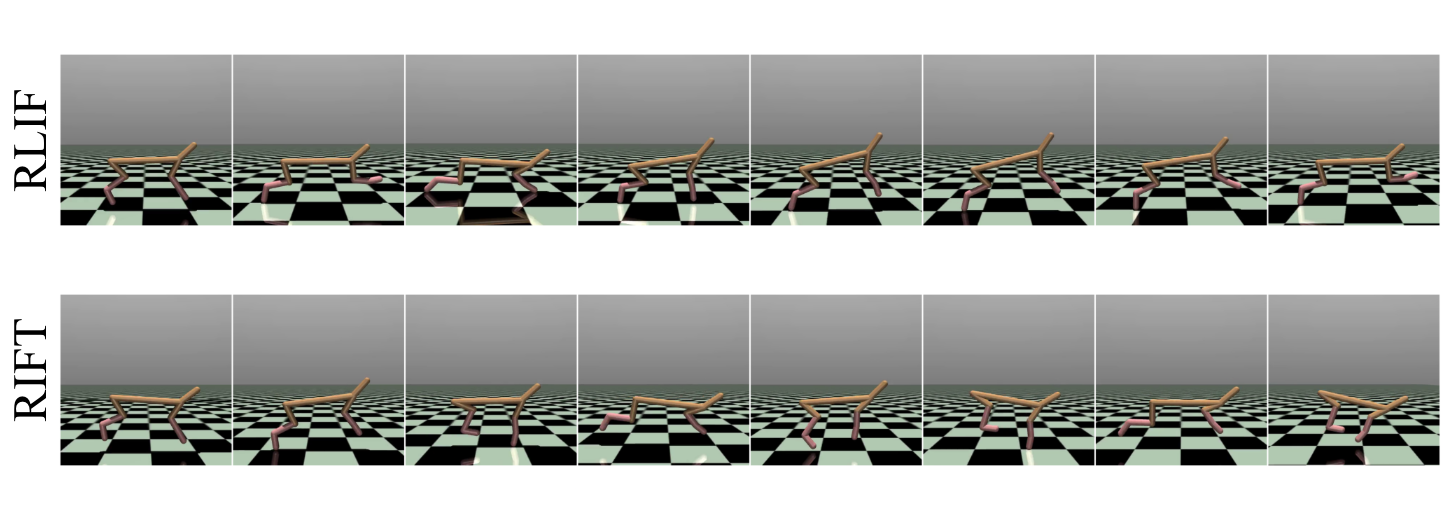}
    \caption{Video frames from Half Cheetah of RLIF and RIFT. RLIF bounces up and down while moving forward slowly. RIFT is more efficient and travels faster.}
    \label{fig:half-cheetah-video}
\end{figure}

\subsection{Noisy Interventions}
\label{apdx:noisy-interventions}

\begin{table}[t]
    \centering
    \caption{Success rate under stochastic interventions with false positive and false negative noise.}
    \label{table:noisy-interventions}
    \begin{subtable}{0.48\textwidth}
      \centering
      \caption{False positive intervention noise.}
      \begin{tabular}{rll}
\toprule
   \makecell{False Positive\\Probability\\Per Rollout} & RLIF        & RIFT        \\
\midrule
0 & 0.66 ± 0.18 & \textbf{0.99} ± 0.00 \\
0.2 & 0.63 ± 0.23 & \textbf{0.99} ± 0.01 \\
0.4 & 0.58 ± 0.21 & \textbf{0.97} ± 0.01 \\
0.6 & 0.56 ± 0.14 & \textbf{0.95} ± 0.02 \\
0.8 & 0.47 ± 0.15 & \textbf{0.92} ± 0.02 \\
0.9 & 0.67 ± 0.14 & \textbf{0.91} ± 0.03 \\
0.95 & 0.38 ± 0.20 & \textbf{0.85} ± 0.02 \\
0.98 & 0.43 ± 0.19 & \textbf{0.78} ± 0.07 \\
0.99 & 0.59 ± 0.24 & 0.61 ± 0.17         \\
\bottomrule
\end{tabular}
    \end{subtable}
    \begin{subtable}{0.48\textwidth}
      \centering
      \caption{False negative intervention noise.}
      \begin{tabular}{rll}
\toprule
   \makecell{False Negative\\Probability\\Per Step} & RLIF        & RIFT        \\
\midrule
0 & 0.66 ± 0.18 & \textbf{0.99} ± 0.00 \\
0.2 & 0.63 ± 0.25 & \textbf{0.99} ± 0.00 \\
0.4 & 0.58 ± 0.18 & \textbf{0.99} ± 0.01 \\
0.6 & 0.56 ± 0.22 & \textbf{0.99} ± 0.01 \\
0.8 & 0.31 ± 0.18 & \textbf{0.99} ± 0.00 \\
0.9 & 0.45 ± 0.24 & \textbf{0.99} ± 0.01 \\
0.95 & 0.53 ± 0.17 & \textbf{0.97} ± 0.02 \\
0.98 & 0.34 ± 0.22 & \textbf{0.98} ± 0.01 \\
0.99 & 0.20 ± 0.19 & \textbf{0.95} ± 0.02 \\
\bottomrule
\end{tabular}
    \end{subtable}
\end{table}

We evaluate stochastic interventions that violate the assumption that an intervention-free policy exists.
There are two types of intervention noise: false positive interventions where an intervention occurs despite the intervention criteria not being met, and false negative interventions where no intervention occurs despite the intervention criteria being met.
In emergency-stop interventions, these two types of noise have vastly different impacts.
False negative interventions weaken the strength of the intervention signal but do not fundamentally change any other properties of the algorithm; in expectation it is as if we replace the -1 reward from interventions with $-(1-p_\textrm{fn})$, where $p_\textrm{fn}$ is the probability of a false negative on each step where the intervention criteria are met.

In contrast, false positive interventions end the episode, preventing the policy from observing later states.
Thus, false positive interventions impact not only the immediate state-action pair where the intervention takes place, but all subsequent states that are no longer observed from the episode.
We therefore measure the probability of having a false positive intervention over a full episode by taking the approximate average episode length for successful trajectories $T$ and computing $1 - (1-p_\textrm{fp})^T$, where $p_\textrm{fp}$ is the probability of a false positive intervention on each step where the intervention criteria are not met.

Results under both types of intervention noise are shown in Table~\ref{table:noisy-interventions}.
As predicted above, RIFT shows remarkable robustness to false negative noise.
Performance is somewhat sensitive to extremely high levels of false positive noise (e.g. when even optimal trajectories get interventions over 90\% of the time) due to the data-limiting effect described above, yet RIFT still provides statistically-significant improvement over RLIF.

\subsection{Interpretable Intervention Strategies}
\label{apdx:heuristic-interventions}

The intervention strategy in Eqn.~\ref{eqn:q-diff-intervention-strategy} is extremely useful but has two limitations.
First, it is a global intervention strategy; all states receive equal supervision.
In practical settings, a human superviser might be very strict in safety-critical states, but be much less strict otherwise.
This locality is another factor that impacts the informativeness of the intervention strategy, and has been underexplored in prior works.
Second, using a learned model as the basis of the itnervention strategy makes it difficult to describe the specific behaviors that trigger interventions.

\begin{figure}[t]
    \centering
    \begin{subfigure}{0.32\textwidth}
      \centering
      \includegraphics[width=\textwidth]{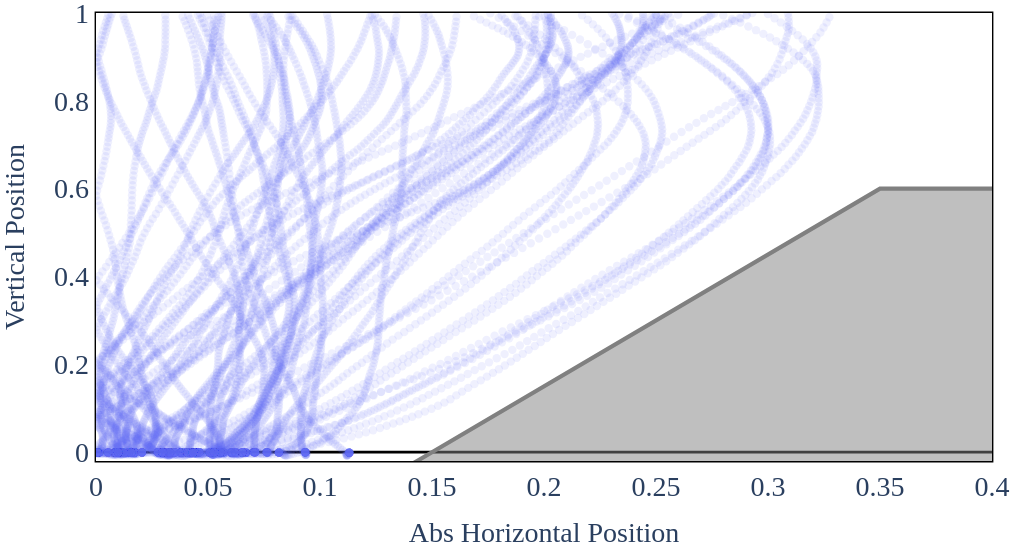}
    \end{subfigure}
    \begin{subfigure}{0.32\textwidth}
      \centering
      \includegraphics[width=\textwidth]{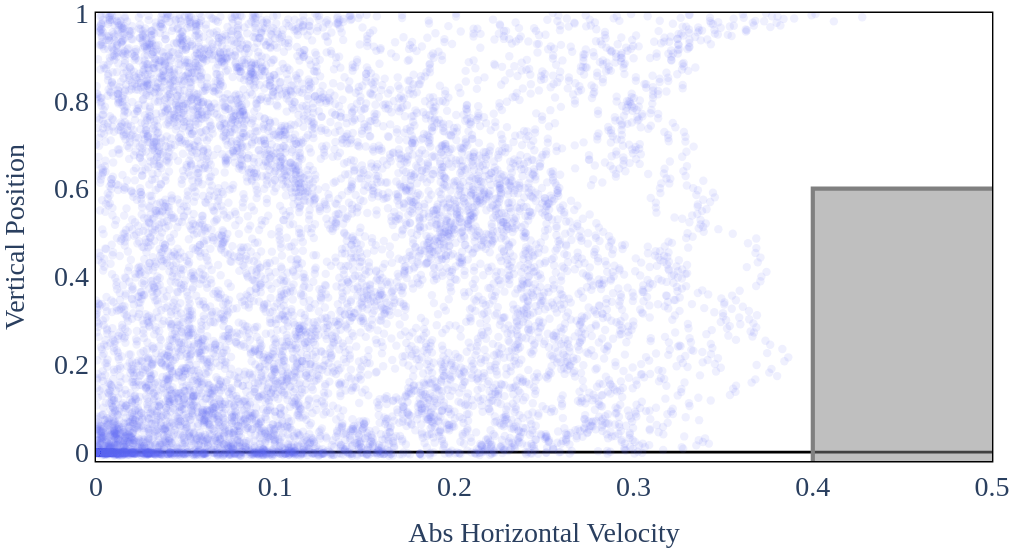}
    \end{subfigure}
    \begin{subfigure}{0.32\textwidth}
      \centering
      \includegraphics[width=\textwidth]{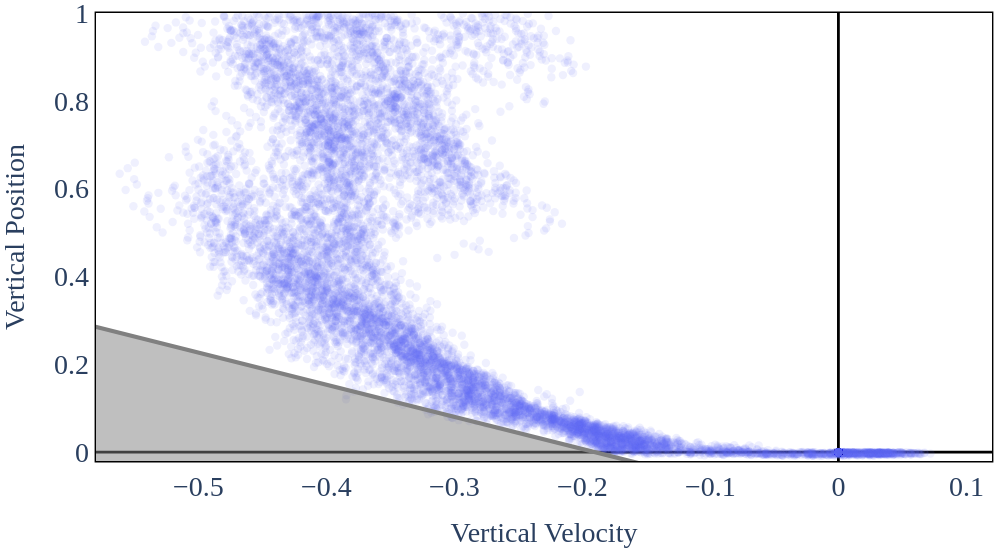}
    \end{subfigure}
    \caption{Three heuristic intervention strategies for lunar lander. Traces from an expert trajectory are plotted in blue. From this, we define heuristic intervention strategies to intervene if the state enters the gray region.}
    \label{fig:heuristic-strategies}
\end{figure}

\begin{figure}
    \centering
    \begin{subfigure}{0.45\textwidth}
      \centering
      \includegraphics[width=\textwidth]{figs/heuristic_intervention_plot_cropped.png}
      \caption{Quantitative results.}
    \end{subfigure}
    \hfill
    \begin{subfigure}{0.5\textwidth}
      \centering
      \includegraphics[width=\textwidth]{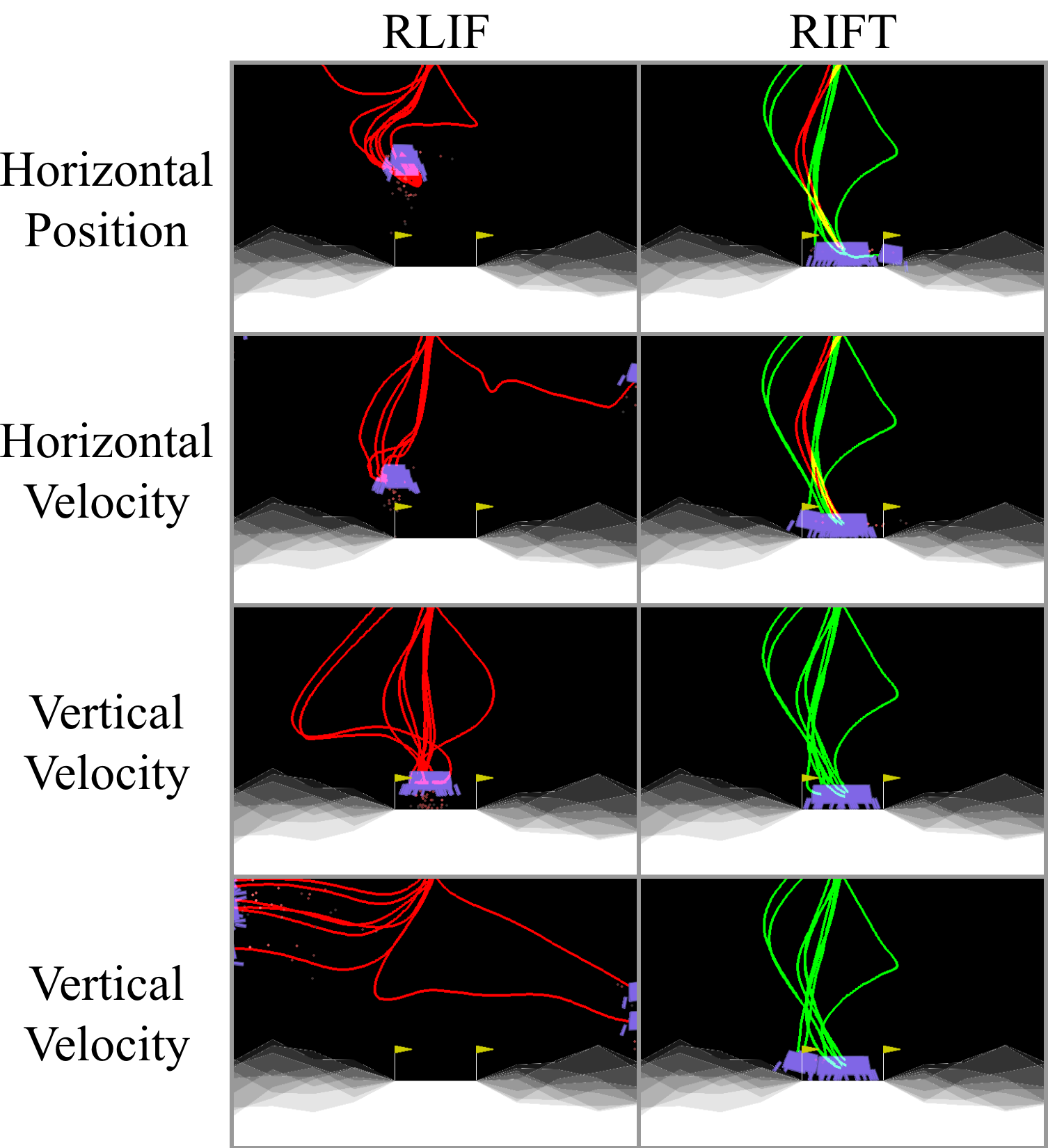}
      \caption{Example policy rollouts.}
    \end{subfigure}
    \caption{Experimental results under heuristic intervention strategies. While both RLIF and RIFT find ways to avoid interventions, only RIFT successfully completes the task. Looking at example rollouts, the RLIF policies avoid getting close to the ground while RIFT reaches the landing area. The RIFT failure cases under the horizontal-based intervention strategies are due to crashing into the ground (something not captured by the horizontal-based intervention strategies). We depict two different policies trained with the vertical velocity intervention strategy to highlight the variability of RLIF between runs due to the under-specified nature of its objective function.}
    \label{fig:heuristic-strategies-results}
\end{figure}

In this section, we experiment with heuristic intervention strategies.
We do this for the Lunar Lander environment, where the low dimensional state space makes defining these heuristic intervention strategies much more straightforward.
This allows us to exactly describe the types of behaviors that trigger interventions, and control which areas of the state space receive interventions.
We start by gathering traces from an optimal policy and plotting various dimensions of the state space.
From this, we identify regions of the state space that the expert policy does not visit to define heuristic intervention strategies.
This analysis is shown in Fig.~\ref{fig:heuristic-strategies}.
We define three criteria for interventions, and can also combine them (i.e. intervene if any criteria is met).
Results are shown in Fig.~\ref{fig:heuristic-strategies-results}.
While both algorithms manage to avoid interventions, RLIF does so by avoiding the ground whereas RIFT learns to successfully complete the task.
RLIF also exhibits high variability across experiments due to the under-specified nature of its objective function.

We also experiment with choosing the intervention strategy to deliberately overlap with the expert visitation distribution.
As shown in Fig.~\ref{fig:heuristic-strategies-overlap}, we define three intervention strategies based on the vertical velocity to have no, partial, and full overlap with the expert state distribution, respectively.
Results are shown in Fig.~\ref{fig:heuristic-overlap-results}.
Once again, both algorithms manage to avoid interventions, but only RIFT does so by successfully completing the task.
Surprisingly, the RIFT policies exhibit new behaviors not observed in the expert or prior policies: approaching the landing area very slowly and circling back to attempt a second landing if the first one fails.
These examples show that even if $\pi_E \notin \mathcal D_\textrm{min}$, optimizing the RIFT objective can still yield policies that solve the task.

\begin{figure}
    \centering
    \begin{subfigure}{0.32\textwidth}
      \centering
      \includegraphics[width=\textwidth]{figs/lunar_lander_vertical_velocity_min_cropped.png}
    \end{subfigure}
    \begin{subfigure}{0.32\textwidth}
      \centering
      \includegraphics[width=\textwidth]{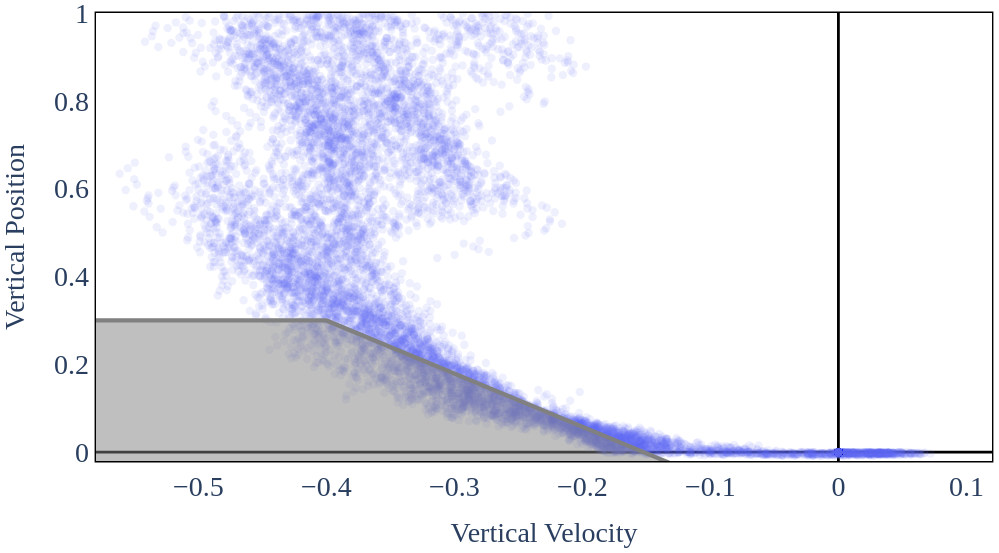}
    \end{subfigure}
    \begin{subfigure}{0.32\textwidth}
      \centering
      \includegraphics[width=\textwidth]{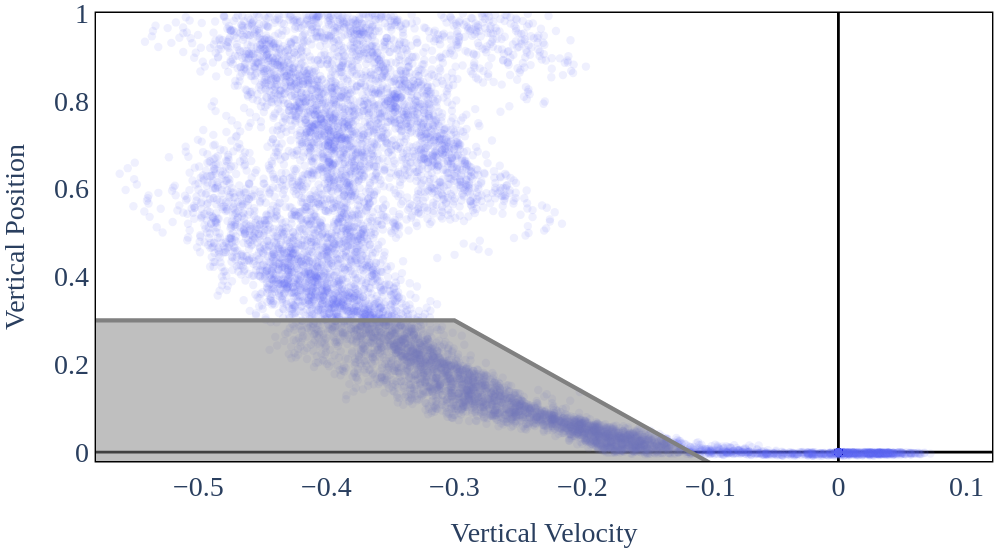}
    \end{subfigure}
    \caption{Three versions of the vertical velocity intervention strategy with no, partial, and full overlap with the expert policy state distribution. These intervention strategies break the assumption that the expert does not get intervened on.}
    \label{fig:heuristic-strategies-overlap}
\end{figure}

\begin{figure}
    \centering
    \begin{subfigure}{0.5\textwidth}
      \centering
      \includegraphics[width=\textwidth]{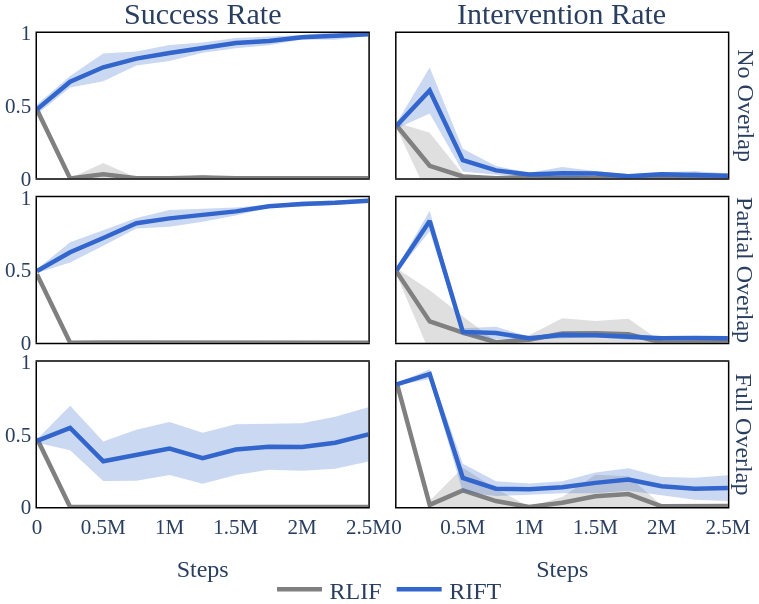}
      \caption{Quantitative results.}
    \end{subfigure}
    \begin{subfigure}{0.48\textwidth}
      \centering
      \includegraphics[width=\textwidth]{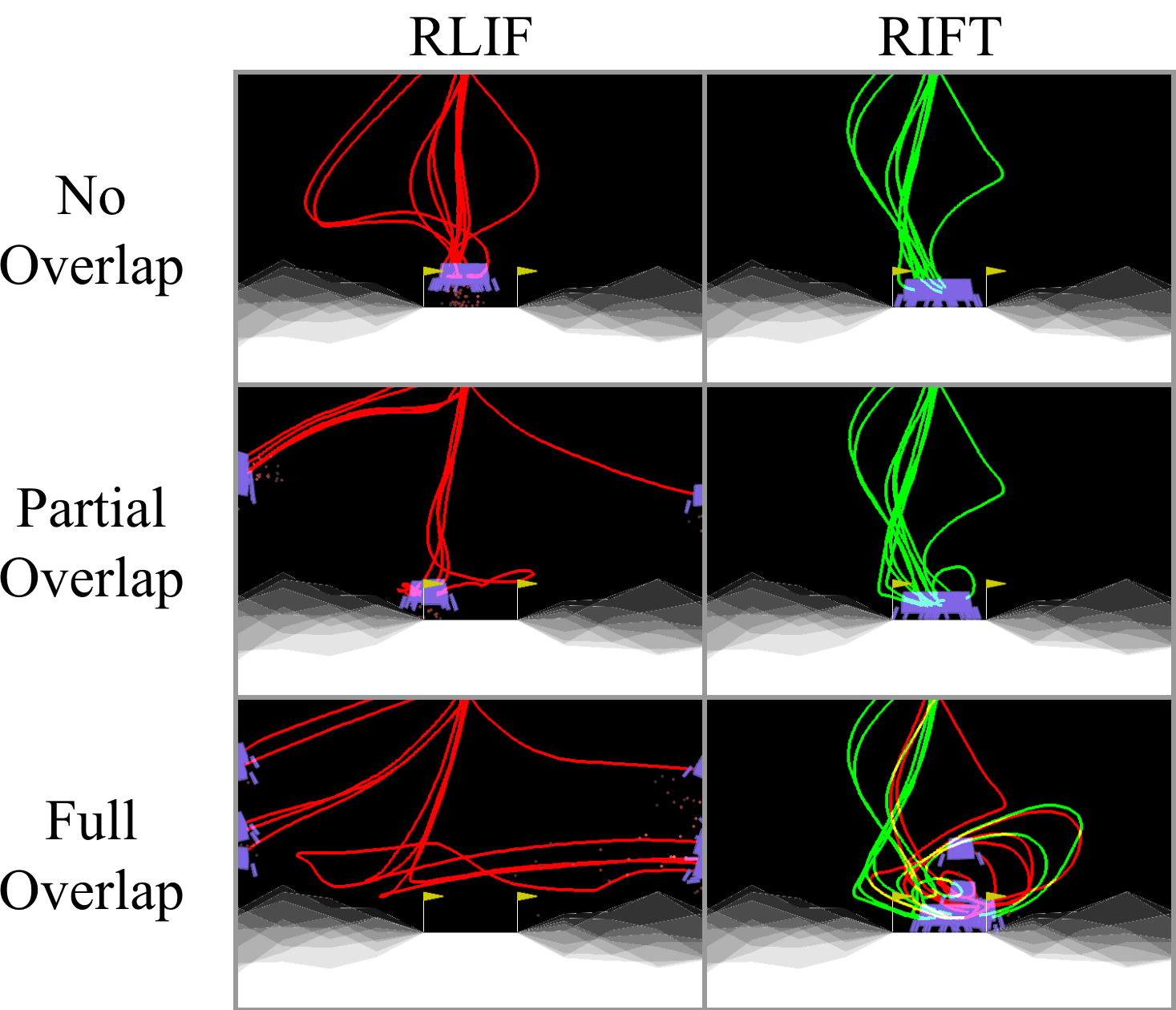}
      \caption{Example policy rollouts.}
    \end{subfigure}
    \caption{Experimental results under heuristic intervention strategies that deliberately overlap with the expert state-action visitation. Both algorithms learn to avoid interventions, but RLIF does so by avoiding the ground while RIFT learns to land successfully. Fascinatingly, when the intervention strategy overlaps with expert behavior, RIFT adapts by approaching the landing area more slowly than seen in expert demonstrations or the prior policy. The policies even demonstrate error recovery by circling around to attempt to land a second time if the first time does not succeed.}
  \label{fig:heuristic-overlap-results}
\end{figure}

\subsection{Termination vs. Truncation}
\label{apdx:termination_truncation}

In Algorithm \ref{alg:supervised-rollout}, when the expert triggers an e-stop the episode terminates.
It would seem natural to treat this as a termination when performing RL.
However, there are both theoretical and practical issues with doing so.

Including these early terminations changes the dynamics function $\mathcal T$ of the MDP.
Residual Q-Learning and the proofs in this work assume that the dynamics function remains constant.
From a theory perspective, the correct choice is to not mark intervention transitions as terminal.

Practically, marking intervention transitions as terminal makes the fine-tuning objective challenging.
The penalty of early termination dominates any other reward terms (the policy can't collect those rewards if the episode ends).
This nullifies the balance between the interventions and the prior policy.
Empirically, we observe that when using early termination the intervention rate goes down, even if the residual reward is set to always be zero.
In Figure~\ref{fig:termination} we run residual fine-tuning with termination upon intervention and zero residual reward (i.e. we do not set any reward due to interventions). Due to the early termination, the policy still learns to avoid interventions, ignoring the prior policy regularization and collapsing the success rate.
Early termination by itself is enough to force the policy to avoid interventions at all costs.

Instead of termination, we represent the intervention as a truncation: the episode ends, but the next state $s'$ is still used in the critic update.
For transition $(s, a, s', e)$, this changes the critic target from $- e$ to $- e + \gamma V(s')$.
This distinction is particularly important in RIL, where the intervention signal should shape behavior without redefining the task dynamics.

\begin{figure}[t]
    \vskip 0.2in
    \begin{center}
    \centerline{\includegraphics[width=0.5\linewidth]{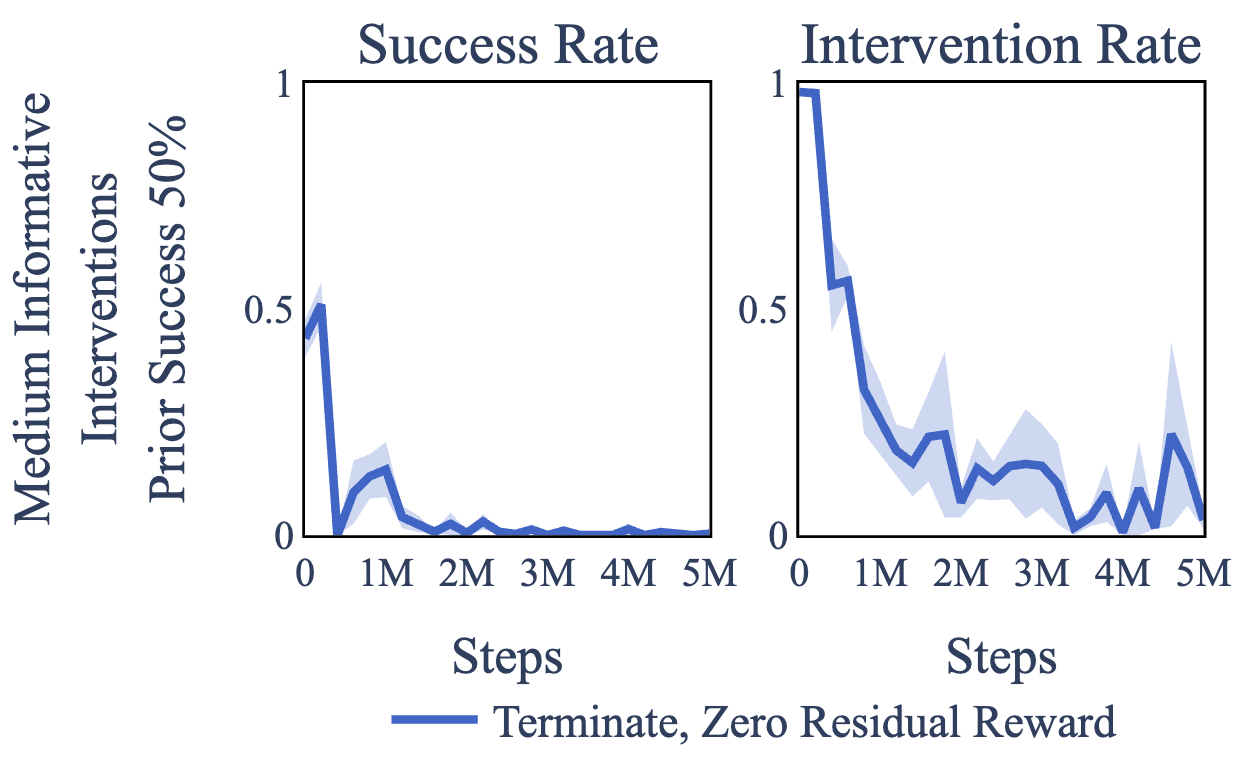}}
    \caption{Training runs using residual fine-tuning with termination instead of truncation on Lunar Lander where the residual reward is always 0. With no residual reward, the policy should converge to the prior policy. However, due to the added early terminations the policy instead shifts to avoid interventions, collapsing the success rate.}
    \label{fig:termination}
    \end{center}
\end{figure}

\subsection{Prior Policy Distribution}
\label{apdx:action_distribution}

The standard SAC policy in Stable Baselines 3 \citep{stable-baselines3} uses a squashed diagonal gaussian action distribution.
This distribution takes a $D$-dimensional diagonal gaussian distribution and passes it through a tanh non-linearity to squash all values into $[-1, 1]$.
The ``mode'' of this distribution (i.e. the action used for deterministic sampling) is defined by passing the mean of the gaussian through the tanh non-linearity.
Due to the squashing effect near -1 and 1, this is not necessarily the action with the highest likelihood, as shown in Figure~\ref{fig:action-dists}.
As $|\mu|$ and $\log \sigma$ increase, the squashing effect causes higher probability mass near the ends of the distribution.
Because of this, the ``mode'' is often not the maximum likelihood point of the distribution.

\begin{figure}[t]
    \vskip 0.2in
    \begin{center}
    \centerline{\includegraphics[width=0.5\linewidth]{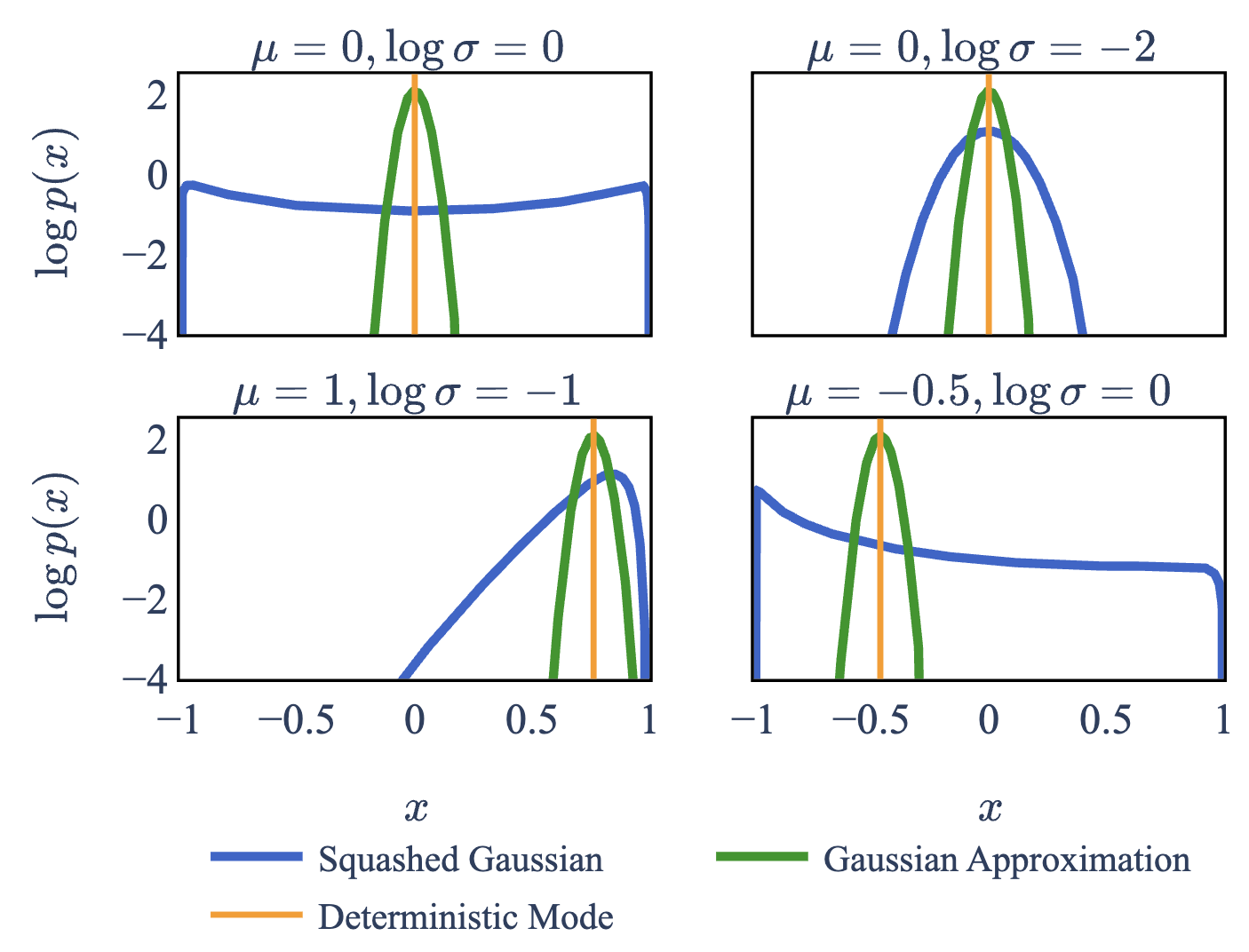}}
    \caption{Comparison of action distributions. Blue: the squashed gaussian distribution log probabilities with the specified mean and standard deviation. Orange: the ``mode'' of this distribution as defined in Stable Baselines 3, obtained by passing the mean of the distribution through the tanh non-linearity. Green: A standard gaussian distribution about the mode with standard deviation 0.05.}
    \label{fig:action-dists}
    \end{center}
\end{figure}

Empirically, we find that doing residual Q-learning \citep{rql_li_2023} with the original $\log \pi_0(a \mid s)$ from the squashed gaussian distribution can be unstable.
Let $\pi_0^\textrm{mode} (s)$ denote the deterministic action obtained by taking the ``mode'' of the squashed diagonal gaussian distribution with parameters determined by calling $\pi_0$ at $s$.
Even when doing residual Q-learning with zero residual reward and the actor initialized to the prior policy, the policy performance can collapse as shown in Figure~\ref{fig:approx-prior-zero-resid}.
We hypothesize this is because the residual Q-learning policy is shifting to maximize $\log \pi_0(a \mid s)$ with actions closer to -1 or 1 compared to $\pi_0^\textrm{mode} (s)$.
To address this, we approximate the prior with a gaussian $\pi_0(a \mid s) \approx \mathcal N(a \mid \pi_0^\textrm{mode}(s), \sigma^2  I)$.
With this approximation we observe much more stable performance: the residual Q-learning policy stays close to the prior policy when there is zero residual reward, and we observe smoother learning curves even with non-zero residual reward.
With this approximation
\begin{equation}
\omega \log \pi_0(a \mid s) \approx - \frac{\omega}{2 \sigma^2} \left \lVert a - \pi_0^\textrm{mode} (s)\right \rVert^2 + C
\end{equation}
and so the choice of $\sigma$ can be folded into the parameter $\omega$. All experiments in this work use this approximation with $\sigma = 0.05$, as shown in Figure \ref{fig:action-dists}.
Using $\omega = 0.001$ yields an overall coefficient of $\frac{\omega}{2 \sigma^2} = 0.2$.
We leave the question of whether alternative action distributions can remove the need for this approximation to future work.

\begin{figure}
    \vskip 0.2in
    \begin{center}
    \centerline{\includegraphics[width=0.6\linewidth]{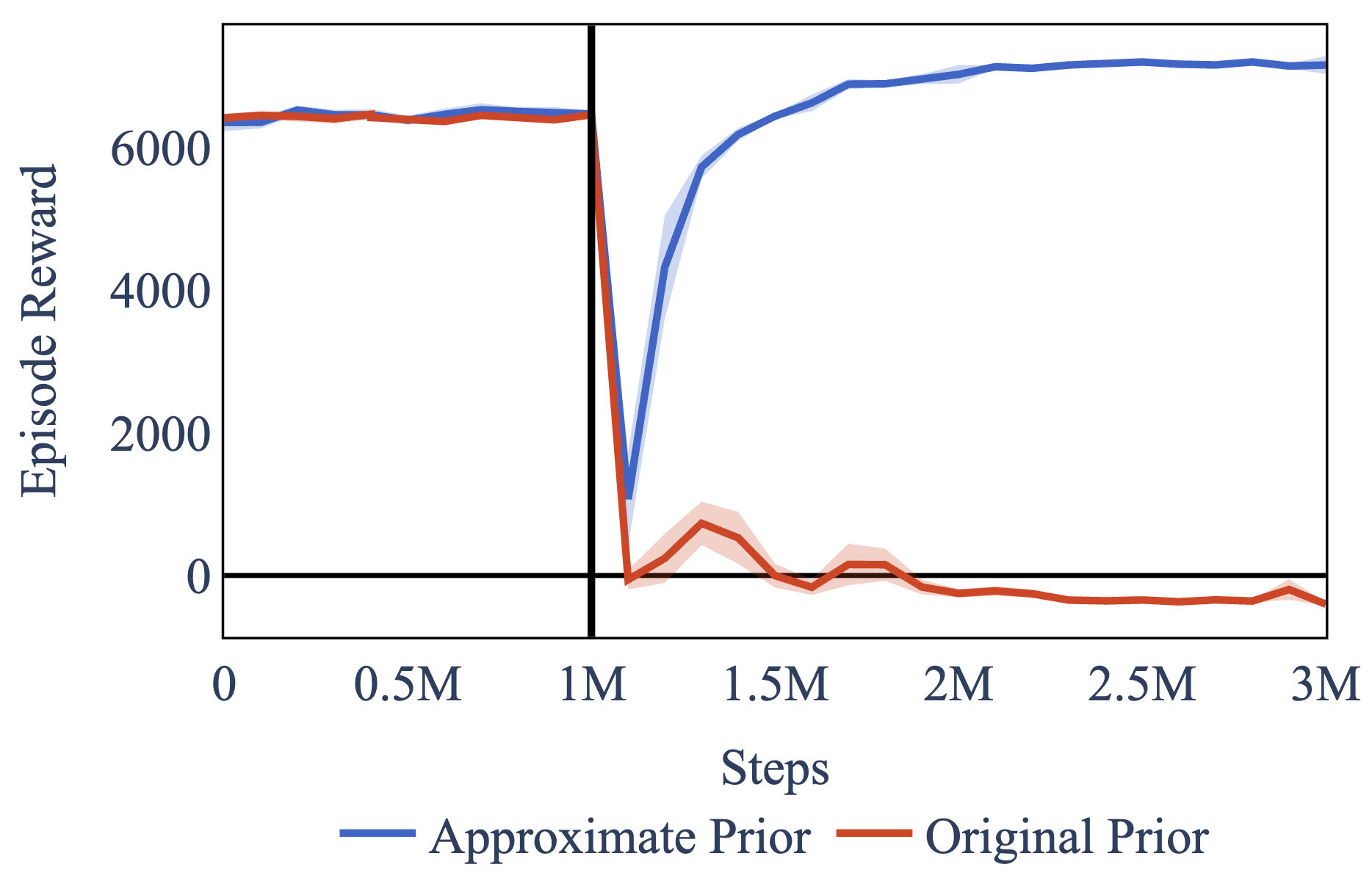}}
    \caption{Residual Q-Learning trained with zero residual reward on the Half Cheetah environment. The actor is initialized to the prior policy and frozen for the first 1 million steps to ensure the critic is  completely warmed up. Even with this warm up, the Residual Q-Learning policy performance collapses once the actor is unfrozen if using the original prior log probabilities based on the squashed diagonal gaussian. Using an approximate prior distribution centered on the prior policy mode keeps the policy close to the prior policy performance.}
    \label{fig:approx-prior-zero-resid}
    \end{center}
\end{figure}




\end{document}